\documentclass[pdflatex,sn-aps]{sn-jnl}

\usepackage{graphicx}%
\usepackage{multirow}%
\usepackage{amsmath,amssymb,amsfonts}%
\usepackage{amsthm}%
\usepackage{mathrsfs}%
\usepackage[title]{appendix}%
\usepackage{xcolor}%
\usepackage{textcomp}%
\usepackage{manyfoot}%
\usepackage{booktabs}%
\usepackage{algorithm}%
\usepackage{algorithmicx}%
\usepackage{algpseudocode}%
\usepackage{listings}%
\usepackage{enumitem}
\usepackage{mathtools}

\theoremstyle{thmstyleone}%
\newtheorem{theorem}{Theorem}
\newtheorem{lemma}[theorem]{Lemma}%
\newtheorem{corollary}[theorem]{Corollary}%

\theoremstyle{thmstyletwo}%

\theoremstyle{thmstylethree}%
\newtheorem{definition}{Definition}%

\usepackage{enumitem}
\usepackage{amsmath,amssymb,amsfonts,amsthm}

\usepackage{booktabs}
\usepackage{enumitem}

\usepackage{multicol,multirow}
\usepackage{url}
\usepackage{xspace}
\usepackage{xcolor}
\usepackage{xfrac}
\usepackage[table]{xcolor}

\usepackage{algorithm}

\usepackage{tikz}
\usetikzlibrary{arrows}
\usetikzlibrary{shapes}
\usepackage{subcaption}

\usepackage{forest}
\definecolor{folderbg}{RGB}{124,166,198}
\definecolor{folderborder}{RGB}{110,144,169}

\def\Size{4pt}
\tikzset{
  folder/.pic={
    \filldraw[draw=folderborder,top color=folderbg!50,bottom color=folderbg]
      (-1.05*\Size,0.2\Size+5pt) rectangle ++(.75*\Size,-0.2\Size-5pt);  
    \filldraw[draw=folderborder,top color=folderbg!50,bottom color=folderbg]
      (-1.15*\Size,-\Size) rectangle (1.15*\Size,\Size);
  }
}

\usepackage{listings}
\newcommand{\pmv}[2]{#1\,\scriptsize{$\pm$}\,#2}

\newcommand{\method}{\text{XOR-SMOO}\xspace}

\begin{document}

\title[Article Title]{Approximating Pareto Frontiers in Stochastic Multi-Objective Optimization via Hashing and Randomization}


\author[1]{\fnm{Jinzhao} \sur{Li}}\email{li4255@purdue.edu}

\author[2]{\fnm{Nan} \sur{Jiang}}\email{njiang@utep.edu}

\author[1]{\fnm{Yexiang} \sur{Xue}}\email{yexiang@purdue.edu}

\affil[1]{\orgdiv{Department of Computer Science}, \orgname{Purdue University}, \orgaddress{\street{610 Purdue Mall}, \city{West Lafayette}, \postcode{47907}, \state{IN}, \country{United States}}}

\affil[2]{\orgdiv{Department of Computer Science}, \orgname{University of Texas at El Paso}, \orgaddress{\street{500 W University Ave}, \city{El Paso}, \postcode{79968}, \state{TX}, \country{United States}}}


\abstract{

Stochastic Multi-Objective Optimization (SMOO) is critical for decision-making trading off multiple potentially conflicting objectives in uncertain environments. 
SMOO aims at identifying the Pareto frontier, which contains all mutually non-dominating decisions. The problem is highly intractable due to the embedded probabilistic inference, such as computing the marginal, posterior probabilities, or expectations. 
Existing methods, such as scalarization, sample average approximation, and evolutionary algorithms, either offer arbitrarily loose approximations or may incur prohibitive computational costs.
We propose \method, a novel algorithm that with probability $1-\delta$, 
obtains $\gamma$-approximate Pareto frontiers ($\gamma>1$) for SMOO by querying an SAT oracle poly-log times in $\gamma$ and $\delta$.
A $\gamma$-approximate Pareto frontier is only below the true frontier by a fixed, multiplicative factor $\gamma$. 
Thus, \method solves highly intractable SMOO problems (\#P-hard) with only queries to SAT oracles while obtaining tight, constant factor approximation guarantees. 
%
Experiments on real-world road network strengthening and supply chain design problems demonstrate that \method outperforms several baselines in identifying Pareto frontiers that have higher objective values, better coverage of the optimal solutions, and the solutions found are more evenly distributed.
Overall, \method significantly enhanced the practicality and reliability of SMOO solvers.

}

\keywords{Stochastic Multi-Objective Optimization, Approximate Pareto Frontiers, Satisfiability Solving}



\maketitle

\section{Introduction}









Trading off multiple, often conflicting objectives is a central problem in economics, operational research, and AI. For example, in many real-world domains, such as supply chain planning~\cite{AltiparmakGLP06,YuG14,pishvaee2012environmental}, network design~\cite{DBLP:journals/eswa/OwaisO18,miandoabchi2013multi}, energy deployment~\cite{clarke2015multi,bi2018wireless}, and path planning~\cite{nazarahari2019multi}, decision makers must simultaneously optimize several criteria (e.g., cost, reliability, efficiency), rather than focusing on a single objective. 
In such scenarios, it is rare that one solution dominates all objectives.

The goal of \emph{multi-objective optimization} is to characterize the set of solutions that trade off among objectives.
A standard concept is \emph{Pareto optimality}. A solution is Pareto optimal if no other feasible solution improves one objective without worsening another. The collection of all such solutions forms the \emph{Pareto frontier}, which provides a compact representation of the best achievable trade-offs.
See Figure \ref{fig:intro} (Left) for a visual example involving two objectives. No one point in the Pareto frontier, plotted as the orange line, dominates others in both objectives. 
Because exact Pareto frontiers are often expensive to compute, a rich line of work studies \emph{$\gamma$-approximate Pareto frontiers}, where every Pareto-optimal solution is approximated by a solution in the approximate frontier, and every objective value of these two solutions is within a constant, multiplicative factor $\gamma$ ($\gamma > 1$). In Figure \ref{fig:intro} (Left), the upper hull of all green points makes up a $\gamma$-approximate frontier. This is because every point in the true Pareto frontier is within a multiplicative distance $\gamma$ of at least one green point.

A classical result by Papadimitriou and Yannakakis~\cite{papadimitriou2000approximability} shows that 
a $\gamma$-approximate Pareto frontier can be constructed by querying a SATisfiability (SAT) oracle $O(1/(\log \gamma)^k)$ times, where $k$ is the number of objectives.
However, in many practical applications, objectives are inherently stochastic, leading to Stochastic Multi-Objective Optimization (SMOO). 
%
Objective functions in SMOO are expectations or posterior or marginal probabilities over random variables. 
Evaluating a single objective of this type requires probabilistic inference over exponentially many probabilistic scenarios. Theoretically, such problems are \#P-complete~\cite{DBLP:journals/ai/Roth96,DBLP:journals/ai/ChaviraD08,DBLP:journals/tcs/Valiant79}. This renders SMOO problems highly intractable.


Existing approaches to SMOO typically apply scalarization techniques that reduce multiple objectives to a single one~\cite{afrouzy2016fuzzy,fliege2011stochastic}. These approaches fail to capture the full trade-offs among objectives. Others rely on sample average approximation (SAA)~\cite{bonnel2014stochastic,karimi2021biobjective,mercier2019non} to estimate expectations. These methods lack performance guarantees as it may take exponentially many steps for common samplers, such as Markov Chain Monte-Carlo (MCMC), to mix. Several additional methods are tailored to specific formulations, for example, assuming convex surrogates~\cite{liu2019stochastic}, or relying on mixed-integer or dynamic programming~\cite{sheidaei2021stochastic,wu2018efficiently}, limiting their general applicability.

This paper proposes a new algorithm, \method, that obtains \textbf{\emph{$\gamma$-approximate Pareto frontiers} for Stochastic Multi-Objective Optimization (SMOO)} problems. With probability $1-\delta$, \method will obtain such an approximate Pareto frontier by querying an SAT oracle 
$O\bigl(((|Y|+\log U+\log\frac{1}{\gamma-1})/(\gamma-1))^k\bigr)$ times. 
%
Here, $|Y|$ is the maximum number of binary random variables to evaluate a stochastic objective. $U$ is the range of the stochastic objectives. 
To our knowledge, this is the \textbf{first algorithm} that  approximates the Pareto frontiers of highly intractable SMOO problems up to any constant $\gamma>1$ (\#P-hard) utilizing accesses to NP oracles. 
$\gamma$ can be made to be arbitrarily close to 1 (and $\delta$ close to 0) with the availability of computing resources.

\begin{figure}[t]
    \centering
    \includegraphics[width=0.85\linewidth]{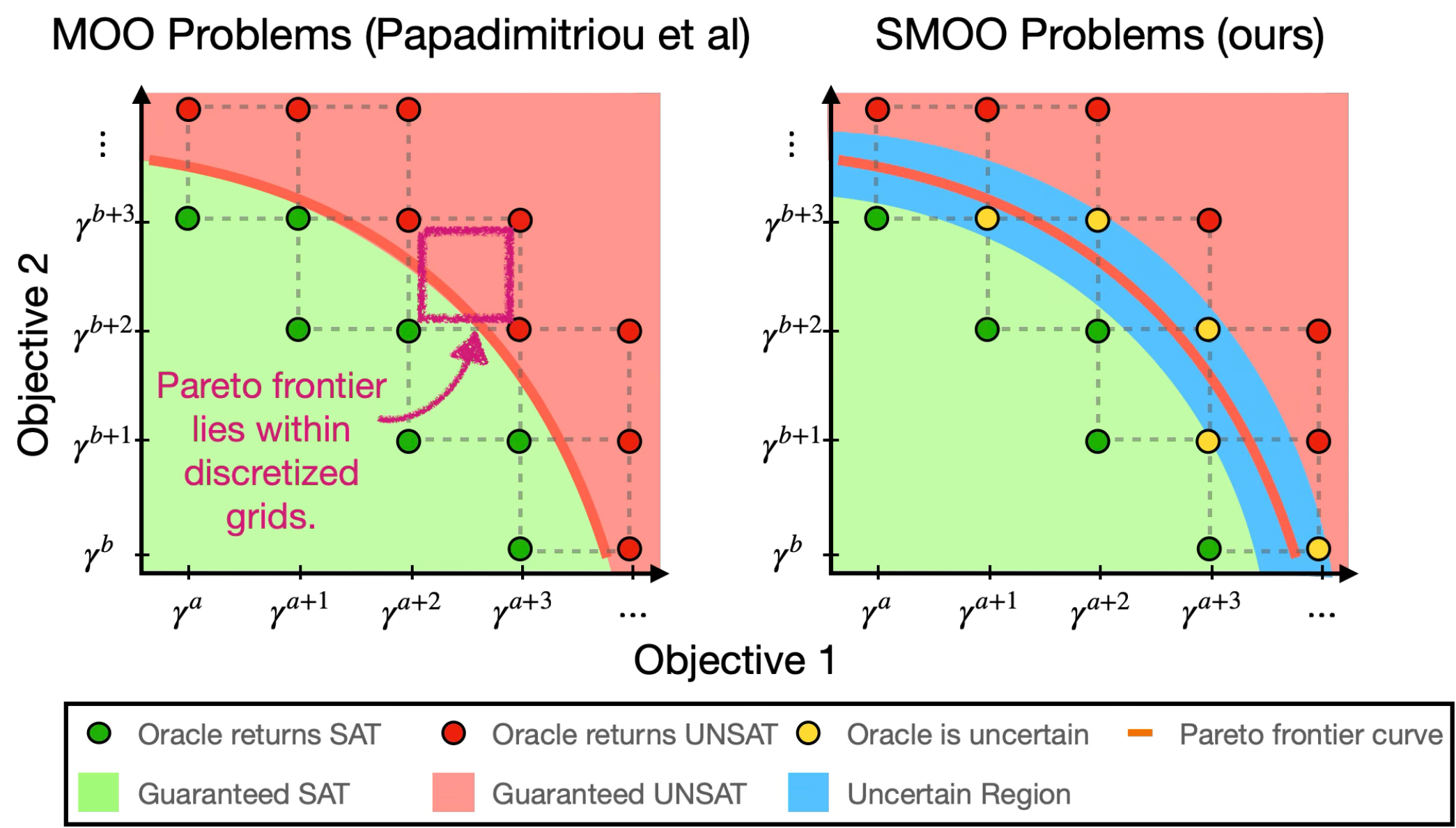}
    \caption{ 
    Solving SMOO problems via querying satisfiability oracles.
    (\textbf{Left}) In multi-objective optimization (MOO), Papadimitriou et al.~\cite{papadimitriou2000approximability} lays down a multiplicative grid, where adjacent grid points are separated by $\gamma$. 
    Then, for every grid point, a SAT oracle is queried 
to determine if a solution exists such that each of its objective value exceeds
the grid point’s value at the corresponding dimension. 
    SAT oracles' responses split the entire region into a (\textcolor{green}{SAT}) and a (\textcolor{red}{UNSAT}) region. 
    The top of all green points form a $\gamma$-approximate Pareto frontier. 
    (\textbf{Right}) In SMOO, 
    with high probability, our developed  probabilistic oracle makes the correct
SAT/UNSAT decisions only when the objective values exceed (or lack behind) the
queried threshold by a fixed multiplicative constant. This brings in a third, intermediate uncertain region (shown in \textcolor{blue}{blue}). However, because its width can be controlled, the top of all green points still form a $\gamma$-approximate Pareto frontier. 
}
    \label{fig:intro}
\end{figure}

The proposed \method requires two ingredients. The first is provable probabilistic inference (model counting) via \emph{hashing and randomization}. 
We need this technology to bound each stochastic objective tightly and with confidence. 
The unweighted version of probabilistic inference is to count the number of models (i.e., solutions) to a SAT formula, often known as model counting~\cite{DBLP:journals/tcs/Valiant79,DBLP:journals/siamcomp/Valiant79}. 
Counting via hashing and randomization originates from Valiant's seminal work on unique SAT~\cite{Valiant1986uniqueSAT,jerrum1986random} and later developed into hashing-based model counting~\cite{Gomes2006Sampling,Gomes06XORCounting,Ermon13Wish,ermon2013embed,kuck2019adaptive,Chakraborty2013scalable,Chakraborty2014DistributionAwareSA,boreale2019approximate, tan2024formally}.
With the advent of efficient SAT solvers~\cite{Braunstein2005SurveyPA}, this approach has become both theoretically sound and practically scalable.
To decide if a SAT formula $f(x)$ has more than $2^l$ solutions, we consider if $f(x) \wedge \mathtt{XOR}_1(x) \wedge \cdots \wedge \mathtt{XOR}_l(x)$ is satisfiable.
Each constraint $\mathtt{XOR}_i(x)$ is the logical XOR of a randomly sampled subset of variables from $x$. It can be interpreted as the parity of sampled variables. 
Intuitively, each sampled $\mathtt{XOR}$ constraint rules out half of the solutions of $f(x)$. Hence, if $f(x)$ has more than $2^l$ solutions, after adding $l$ $\mathtt{XOR}$ constraints, the formula should still be satisfiable, and vice versa. 
At a high level, this approach gives us a probabilistic SAT oracle that can probe model counts with constant-factor precision (up to factors of 2). 
Counting via hashing and randomization will need to be adapted for \method. In particular, we will need to control the failure probabilities across different thresholds for multiple objectives, and to devise a discretization scheme that yields an arbitrarily close-to-1 approximation to weighted problems. 

The second ingredient behind \method is to modify the discretization schema of Papadimitriou and Yannakakis~\cite{papadimitriou2000approximability} to fit in probabilistic SAT oracles.
To sketch a $\gamma$-approximate Pareto frontier, Papadimitriou and Yannakakis~\cite{papadimitriou2000approximability}'s idea is to lay a $k$-dimensional grid, where two adjacent grid points along one dimension are separated by the fixed multiplicative factor, $\gamma$.
Then, for every grid point, the SAT oracle is to determine if there exists a solution such that each of its objective value exceeds the grid point's value at the corresponding dimension.
The queried results will split all grid points into two regions: the SAT region, where such solutions exist, and the UNSAT region. See Figure \ref{fig:intro} (Left) for an example. The true Pareto frontier must be sandwiched between the red points in the UNSAT region and the green points in the SAT region.
Because every adjacent pair of (red, green) points is only $\gamma$ distance apart, the boundary that splits the SAT and UNSAT regions forms a {$\gamma$-approximate Pareto frontier}. 

Probabilistic oracles used in \method complicates our analysis by bringing a third, intermediate \emph{uncertain} region between the SAT and UNSAT regions (Figure \ref{fig:intro} right).
This is because with high probability, the probabilistic oracle makes the correct SAT/UNSAT decisions only when the objective values exceed (or lack behind) the queried threshold by a fixed multiplicative constant. 
Inside the uncertain region, the SAT/UNSAT decisions are not informative. 
However, the width of the uncertain region is limited. This allows us to prove that the lower boundary of the uncertain region still serves as a good approximate Pareto frontier. 


%

%
%

We first devise \method on unweighted multi-objective problems as a ladder towards weighted problems. 
%
Each unweighted objective is a model counting problem. 
%
%
In this case, with probability $1-\delta$, \method can produce an approximate Pareto frontier represented in objective values, such that the true frontier is at most $2^{\epsilon}$ multiplicative factor away.
Or, \method produces a set of solutions that forms a $2^{2\epsilon-1}$-approximate Pareto frontier with probability $1-\delta$. Here $\epsilon \geq 3$, but the failure probability $\delta$ can be close to 0. \footnote{
Using the techniques presented for the weighted objectives can further tighten these bounds. However, for readability, we do not introduce those techniques until the weighted objective section. }
\method reduces the SMOO problem to a set of SAT queries, where the number of queries scales with the product of the number of latent variables (i.e., those being summed over) for each objective.
Assuming each unweighted objective is represented in an SAT encoding, \method needs to solve SAT instances whose size is $O(n+\log \frac{1}{\delta} +k \log|Y|)$ times the size of that encoding, where $n$ is the number of decision variables, $\delta\in (0,1)$ is the error probability bound, $|Y|$ is the maximum number of random variables used to evaluate a stochastic objective, 
and $k$ is the number of objectives. 


For weighted objectives, we extend \method to w-\method, which finds an arbitrarily small $\gamma$-approximate Pareto frontier, for any $\gamma > 0$. 
The key idea is to construct a pseudo-unweighted SMOO problem that mirrors the original weighted problem. Then the approximation guarantee obtained by the unweighted algorithm can be carried over to the weighted problem. 
Our first approximation is to round each weighted objective from below to its nearest $2^{b_0}$ power. 
Then we use $b$ ($b\geq b_0$) binary variables $z_0, \ldots, z_{b-1}$, in which $z_0, \ldots, z_{b_0-1}$ can take both values 0 and 1, but $z_{b_0}, \ldots, z_{b-1}$ are restricted to take value 0. This ensures that the number of different configurations of these binary variables is $2^{b_0}$, within a constant factor of the original weighted objective. 
The second step is to tighten the approximation bound. Suppose the weighted objective is $\sum_y f(x,y)$,  obtaining $2^{2\epsilon}$-approximate Pareto frontier for $(\sum_y f(x,y))^T$ gives us a $2^{2\epsilon/T}$-approximate frontier for the original weighted objective. 
Overall, to obtain $\gamma$-approximate Pareto frontiers for weighted SMOO problems with probability $1-\delta$, 
w-\method queries an SAT oracle 
$O\bigl(((|Y|+\log U+\log\frac{1}{\gamma-1})/(\gamma-1))^k\bigr)$ times, and each SAT instance's size is 
$O(n+\log\tfrac{1}{\delta}+k\log(|Y|+\log U))$
times the size of the SAT encoding of each objective. Here $U$ is the maximal range of stochastic objectives.



We compare \method with state-of-the-art multi-objective solvers on two applications: \textit{Road Network Strengthening to Mitigate Seasonal Disruptions}, and \textit{Flexible Supply Chain Network Design}. 
Both applications are grounded in real-world or standard benchmark data sources. The first is constructed from OpenStreetMap road networks and geographically grounded weather records from the Meteostat library. The second is derived from widely used TSPLIB benchmark instances.
Experimental results show that our \method consistently finds \textbf{\textit{the best Pareto solutions}}, meaning that our method finds solutions that have the best objective values among those found by all solvers. 
\method also achieves \textbf{\textit{the best coverage}}: for every Pareto optimal solution, our method is more likely to find one closely approximating it. 
Finally, our \method finds \textbf{\textit{the most evenly distributed solutions}}: the solutions found by
our method spread out evenly in the entire domain, hence capturing the widest portion of the Pareto frontier. 
Moreover, the performance gap becomes more pronounced as the counting objectives become more difficult, highlighting the advantage of our proposed XOR-SMOO solver. 

%


\section{Preliminaries}
\subsection{Multi-Objective Optimization}
A multi-objective optimization (MOO) problem is defined as
\begin{align*}
    \max_{x \in \mathcal{X}}~~  (f_1(x), \dots, f_k(x)),
\end{align*}
where $\mathcal{X}$ denotes the set of feasible solutions, also called the \emph{solution space} or \emph{decision space}, and each $f_i : \mathcal{X} \rightarrow \mathbb{R}$ is an objective function, for $i = 1,\dots,k$.

We use the notation $\max (f_1, \dots, f_k)$ to represent the maximization of multiple functions. In practice, there may not exist one $x^*$ which attains the maximal value of all $k$ functions.
Hence, trading off the values of one function with others is necessary. This leads to reasoning about the Pareto frontier, which will be discussed momentarily.  
%
Another commonly used approach to reduce multiple objectives into a single one is \emph{scalarization}, which concatenates functions with an affine function. 
The formulation used in this paper is based on \emph{Pareto optimality}, which characterizes the trade-offs among multiple objectives with a set of mutually non-dominating solutions. 

\begin{definition}[Pareto Frontier]
Consider a multi-objective optimization problem with $k$ objectives $\{f_i\}_{i=1}^k$ to be maximized.  
For two solutions $x_1, x_2 \in \mathcal{X}$, we say that $x_1$ \emph{dominates} $x_2$, written as $x_1 \succ x_2$, if
\begin{equation}
    f_i(x_1) \geq f_i(x_2), \quad \text{for all } i = 1, \dots, k,
\end{equation}
and strict inequality holds for at least one index $i$.  

A solution $x^* \in \mathcal{X}$ is \emph{Pareto optimal} (or \emph{non-dominated}) if there exists no $x \in \mathcal{X}$ such that $x \succ x^*$.  
The set of Pareto optimal solutions forms the \emph{Pareto frontier}.
\end{definition}

A Pareto optimal solution implies that no actions or allocations are possible to improve one objective without affecting other ones. 
The Pareto frontier exactly characterizes all locally optimal trade-offs. 
In practice, computing the exact frontier is often intractable due to the exponential size of the search space.  
A common relaxation is to compute an \emph{approximate Pareto frontier}, which allows for small multiplicative deviations from the exact frontier. 

\begin{definition}[$\gamma$-Approximate Pareto Frontier~\cite{papadimitriou2000approximability}]
\label{def:approx_pareto}
Consider a multi-objective optimization problem with $k$ objectives $\{f_i\}_{i=1}^k$ to be maximized.  
For $\gamma > 1$ and two solutions $x_1, x_2 \in \mathcal{X}$, we say that $x_1$ \emph{$\gamma$-dominates} $x_2$ if
\[
    \gamma f_i(x_1) \geq f_i(x_2), \quad \text{for all } i = 1, \dots, k.
\]
A set $\widehat{\mathcal{F}} \subseteq \mathcal{X}$ is called an \emph{$\gamma$-approximate Pareto frontier} if for every Pareto optimal solution $x \in \mathcal{F}$,  
there exists some $x' \in \widehat{\mathcal{F}}$ such that $x'$ $\gamma$-dominates $x$.  
\end{definition}

In other words, an $\gamma$-approximate Pareto frontier guarantees that every true Pareto optimal solution has an approximate representative in $\widehat{\mathcal{F}}$. Each objective of this true optimal solution is within a multiplicative factor $\gamma$ of the corresponding one of the approximate representative. This relaxation allows for efficient computation while preserving the trade-offs among solutions in the Pareto frontier.

\subsection{Stochastic Multi-Objective Optimization}
A stochastic multi-objective optimization (SMOO) problem~\cite{mahdavi2013stochastic} arises when stochastic events affect multiple objective values, and decisions must be made prior to observing these random events~\cite{gutjahr2016stochastic}.  
For example, in an asset allocation problem in a stochastic trading market, one must simultaneously maximize the expected returns and minimize the asset volatility, while accounting for random price fluctuation, etc.

Formally, let $x$ denote decision variables (the amount of asset in the portfolio) and $y$ denote random variables drawn from the domain $\mathcal{D}$ (the asset prices). 
$f$ is the target objective (in our example, the total profit of the asset portfolio).
Because of the randomness represented in variables $y$, the optimization typically involves maximizing the expected value of the target objective:
\begin{align*}
    \max_{x\in \mathcal{X}}~~\mathbb{E}_{y \sim \mathcal{D}}[f(x,y)].
\end{align*}
Extending this to the multi-objective setting with $k$ objectives (for example, $f_1$ is the asset profit, $f_2$ is the volatility), a general SMOO problem can be written as
\begin{align*}
    \max_{x\in \mathcal{X}} ~~\left( \mathbb{E}_{y_1 \sim \mathcal{D}_1}[f_1(x,y_1)], \dots, \mathbb{E}_{y_k \sim \mathcal{D}_k}[f_k(x,y_k)] \right). 
\end{align*}
The Pareto frontier in this context consists of all non-dominated solutions in expected objective values. 
Note that stochasticity may affect the shape of the constrained region as well (e.g., via randomized constraints). 
In this work, however, we restrict our attention to maximizing the expected values. 
Randomized constraints can be encoded into the objective function using, for example, the $\lambda$-multipliers. 

\subsection{Probabilistic Inference and Model Counting}

Probabilistic inference, for example, the computation of expectations, marginal probabilities, posterior probabilities, 
can be encoded as weighted model counting~\cite{chavira2008probabilistic,Xue2016MarginalMAP}. 
Let us start our discussion on the unweighted case. 
Unweighted model counting computes the number of satisfying solutions to a Boolean formula. 
Formally, let $f({x})$ be a Boolean function over ${x} \in \{0,1\}^n$, where $f({x}) = 1$ denotes that ${x}$ satisfies the formula.  
The unweighted model counting problem computes $\sum_{{x \in \{0,1\}^n}} f({x})$.

The weighted model counting problem computes 
the sum of an arbitrary weighted function. For example, let $f$ be a function that maps $\{0, 1\}^n$ to $\mathbb{R}^+$. The weighted version computes $\sum_{{x} \in \{0, 1\}^n} f({x})$. 
Various probabilistic inference can be reduced to this summation. For example, computing the expectation, 
\[
\mathbb{E}_{y \sim Pr(y|x)}[f(x,y)]  =  \sum_{y} \Pr(y|x) f(x,y),
\]
is a weighted model counting problem. 




To improve readability, we will first detail our approximate SMOO solver and the approximation guarantee assuming unweighted model counting objectives (Section \ref{sec:method}), then progress to weighted problems (Section \ref{sec:extension}). 

\subsection{Solving Model Counting using Hashing and Randomization}
\label{sec:prelim-xor}

It is highly intractable to solve model counting. 
Unlike satisfiability, which decides the existence of one satisfying assignment, model counting requires estimating the total number of satisfying assignments, and is $\#\mathrm{P}$-complete. 
The complexity class $\#\mathrm{P}$ is believed to be beyond $\mathrm{NP}$, because it subsumes the entire polynomial hierarchy. 
Various model counting approaches have been proposed in the past. 
\emph{Exact} approaches include DPLL-style solvers \cite{sharma2019ganak,thurley2006sharpsat,sang2004combining,oztok2018exhaustive} and knowledge-compilation methods that transform a formula into tractable representations \cite{Muise2012dsharp,lagniez2017improved,darwiche2011sdd}.
\emph{Approximate} counters include variational approaches based on mean-field or belief-propagation relaxations \cite{kroc2008leveraging,KerstingAN09}, as well as sampling-based methods such as importance sampling \cite{gogate2007approximate} and MCMC-based techniques \cite{wei2005new}, which estimate the model count from sampled satisfying assignments.  
While these methods often scale well in practice, they give no guarantee or one-sided guarantees on the model counts that can be arbitrarily loose. 

A line of recent approaches approximate model counts via \emph{hashing and randomization}. 
These methods reduce model counting to SAT problems using randomized XOR constraints.
This idea originates from Valiant's seminal work on unique SAT~\cite{Valiant1986uniqueSAT,jerrum1986random} and later developed into hashing-based model counting~\cite{Gomes2006Sampling,Gomes06XORCounting,Ermon13Wish,ermon2013embed,kuck2019adaptive,Chakraborty2013scalable,Chakraborty2014DistributionAwareSA,boreale2019approximate, tan2024formally}.
With the advent of efficient SAT solvers~\cite{Braunstein2005SurveyPA}, this approach has become both theoretically sound and practically scalable.
%
The high-level idea is as follows. 
For example, suppose $x$ is fixed at $x_0$, and we would like to determine whether 
\begin{equation}
\sum_{y \in \{0,1\}^{|y|}} f(x_0, y) \lessgtr 2^l.
\label{eq:count}
\end{equation}
Consider the SAT formula
\begin{align}
    f(x_0, y) \wedge \mathtt{XOR}_1(y) \wedge \cdots \wedge \mathtt{XOR}_l(y),
    \label{eq:xor_q}
\end{align}
where $\mathtt{XOR}_1,\ldots,\mathtt{XOR}_l$ are randomly sampled XOR constraints.  
Each constraint $\mathtt{XOR}_i(y)$ is the logical XOR of a randomly sampled subset of variables from $y$. It can be interpreted as the parity of sampled variables. In other words, $\mathtt{XOR}_i(y)$ is true if and only if an odd number of these randomly sampled variables in the subset are true.

We can show that Formula~\eqref{eq:xor_q} is \emph{likely satisfiable} when the model count
$\sum_{y} f(x_0,y)$ exceeds $2^{l+l^*}$, and \emph{likely unsatisfiable} when it is below $2^{l-l^*}$, where $l^*$ is an integer at least 2. The intuition is as follows, random XOR constraints serve as universal hash functions: each constraint retains roughly half of the assignments $y$ for which $f(x_0,y)=1$, so $l$ independent constraints
partition the space of $y$ into $2^{l}$ nearly equal buckets. Checking the satisfiability
of \eqref{eq:xor_q} is therefore equivalent to asking whether the bucket determined by the sampled XORs
contains a satisfying assignment of $f(x_0,y)$. If $\sum_y f(x_0,y)\ge 2^{l+l^*}$, in other words, the assignments outnumber the buckets, with high probability the chosen bucket
contains at least one solution. On the other hand, if $\sum_y f(x_0,y)<2^{l-l^*}$, in other words, the buckets are more than the assignments, it is likely that a randomly picked bucket is empty. The next
lemma formalizes this approximation guarantee.



\begin{algorithm}[!h]
\caption{\texttt{XOR-Counting($f$, $l$, $x_0$)}}
\label{alg:xor-counting}
\begin{algorithmic}[1]
\Require{Boolean formula $f$; Number of XOR constraints $l$; fixed $x_0$.}
\State Randomly sample $\mathtt{XOR}_1(y), \ldots, \mathtt{XOR}_l(y)$
\State $\psi(x_0,y) \gets f(x_0, y) \wedge \mathtt{XOR}_1(y) \wedge \cdots \wedge \mathtt{XOR}_l(y)$
\If{$\psi(x_0,y)$ is satisfiable}
   \State \Return \texttt{True}.
\Else
   \State \Return \texttt{False}.
\EndIf
\end{algorithmic}
\end{algorithm}

\begin{lemma}[XOR Counting~\cite{jerrum1986random,Gomes06XORCounting,Ermon13Wish}]
\label{lem:xor-counting}
Given a Boolean function $f(x, y)$ and $l \in \mathbb{Z}_{\geq 0}$. 
Fix an assignment $x$ at $x_0 \in \{0,1\}^n$ and let $l^* \ge 2$. Then:
\begin{itemize}[align=left, leftmargin=0pt, labelwidth=0pt, itemindent=!]
    \item If $\sum_{y} f(x_0, y) \ge 2^{l + l^*}$, then with probability at least $1 - \tfrac{2^{l^*}}{(2^{l^*}-1)^2}$, $\texttt{XOR-Counting}(f, l, x_0)$ returns \texttt{True}.
    \item If $\sum_{y} f(x_0, y) \le 2^{l - l^*}$, then with probability at least $1 - \tfrac{2^{l^*}}{(2^{l^*}-1)^2}$, $\texttt{XOR-Counting}(f, l, x_0)$ returns \texttt{False}.
\end{itemize}
\end{lemma}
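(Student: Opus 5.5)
The plan is a second-moment argument on the number of surviving solutions. It relies on the standard fact that random XOR constraints form a pairwise independent hash family. I will make the sampling model explicit. Each $\mathtt{XOR}_i(y)$ is $\bigoplus_{j} a_{ij} y_j \oplus b_i$, where the $a_{ij}$ and $b_i$ are i.i.d. uniform bits; the parity bit $b_i$ is what makes the hash uniform. Let $S=\{y: f(x_0,y)=1\}$ and $N=|S|$. For each $y\in S$, let $I_y$ indicate that $y$ satisfies all $l$ XORs, and set $Z=\sum_{y\in S} I_y$. Then \texttt{XOR-Counting} returns \texttt{True} iff $Z\ge 1$.

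First I compute the moments of $Z$. For fixed $y$, the vector $(\mathtt{XOR}_i(y))_i$ is uniform on $\{0,1\}^l$ because of the $b_i$, so $\Pr[I_y=1]=2^{-l}$ and $\mathbb{E}[Z]=\mu=N2^{-l}$. For $y\neq y'$, the difference $\mathtt{XOR}_i(y)\oplus\mathtt{XOR}_i(y')=\bigoplus_j a_{ij}(y_j\oplus y'_j)$ is a uniform bit independent of $\mathtt{XOR}_i(y)$, since $y\oplus y'\neq 0$. Hence $(I_y,I_{y'})$ are independent and the $I_y$ are pairwise independent. Therefore $\mathrm{Var}(Z)=\sum_y \mathrm{Var}(I_y)\le \mu$.

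Second, the upper case, where $N\ge 2^{l+l^*}$ and so $\mu\ge 2^{l^*}$. By Chebyshev,
\[
\Pr[Z=0]\le \Pr\bigl[|Z-\mu|\ge\mu\bigr]\le \frac{\mathrm{Var}(Z)}{\mu^2}\le \frac1\mu\le 2^{-l^*}.
\]
To land exactly on the stated form I would instead use $\Pr[|Z-\mu|\ge \mu-1]\le \mu/(\mu-1)^2$, which is valid when $\mu>1$. The function $\mu/(\mu-1)^2$ is decreasing for $\mu>1$, so it is at most $2^{l^*}/(2^{l^*}-1)^2$. Either bound suffices, since $2^{-l^*}\le 2^{l^*}/(2^{l^*}-1)^2$.

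Third, the lower case, where $N\le 2^{l-l^*}$ and so $\mu\le 2^{-l^*}$. By Markov, $\Pr[Z\ge1]\le \mu\le 2^{-l^*}$, which is again at most $2^{l^*}/(2^{l^*}-1)^2$. So \texttt{False} is returned with the claimed probability.

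The main obstacle is the pairwise independence step. It requires the random offset bits $b_i$, or equivalently including the constant $1$ in the sampled subset with probability $1/2$. Without them, $y=0$ always survives and uniformity fails, so I would state this sampling convention explicitly. The case $l=0$ is trivial and is covered by the same computation.
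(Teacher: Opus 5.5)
The paper gives no proof of this lemma and simply cites the hashing literature for it. Your argument is correct and is the standard second-moment proof from those references: pairwise independence of the XOR hash, Chebyshev in the form $\mu/(\mu-1)^2$ for the upper case (which is where the constant $2^{l^*}/(2^{l^*}-1)^2$ comes from), and Markov for the lower case. Your cruder $2^{-l^*}$ bounds already suffice.

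One small correction to your closing remark. Under the paper's literal convention, a constraint is true iff an odd number of the sampled variables are true, with no offset bit. Then $y=0$ never survives rather than always survives, so under that convention the offset is not actually needed. The nonzero $y$'s are still pairwise independent with marginal $2^{-l}$. Discarding at most one element of $S$ only weakens your Chebyshev bound to $\Pr[Z=0]\le 1/(2^{l^*}-\tfrac12)$ for $l\ge 1$, which is still below $2^{l^*}/(2^{l^*}-1)^2$.
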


\subsection{$\gamma$-Approximate Pareto Frontier via Discretization and Satisfiability Solving}

A common technique to connect optimization with satisfiability is to reformulate maximization as a sequence of threshold queries.  
Instead of directly maximizing an objective function $f(x)$, one repeatedly checks whether there exists a feasible solution $x \in \mathcal{X}$ such that $f(x) \geq Q$ for a threshold $Q$.  
By gradually increasing $Q$ until the query becomes infeasible, the maximum achievable value of $f(x)$ can be identified. In practice, this increase is performed in discrete steps rather than continuously, some precision may be lost, and the method in general yields an approximate rather than exact solution.

In the multi-objective setting, this idea extends to vector thresholds $(Q_1, \dots, Q_k)$, where the task is to decide whether all objective functions simultaneously achieve their respective thresholds.  
This reformulation transforms a multi-objective optimization problem into a family of decision problems.  
The following classical result formalizes how discretized threshold queries can approximate the Pareto frontier.

\begin{theorem}[Papadimitriou and Yannakakis~\cite{papadimitriou2000approximability}]\label{th:papadim}
Let $\gamma > 1$ be a constant, and consider a $k$-objective maximization problem:
\begin{align*}
    \max_{x\in\mathcal{X}} \big( f_1(x), f_2(x), \dots, f_k(x) \big).
\end{align*}
Suppose we search for integer tuples $(q_1, \dots, q_k) \in \mathbb{N}^k$ such that:
\begin{itemize}
    \item There exists a feasible solution $x^*$ with $f_i(x^*) \ge \gamma^{q_i}$ for all $i \in \{1, \dots, k\}$.
    \item For every $(q_1', \dots, q_k') \in \mathbb{N}^k$ with $q_i' \ge q_i$ for all $i$ and $q_j' > q_j$ for at least one index $j$, no feasible solution $x$ satisfies $f_i(x) \ge \gamma^{q_i'}$ for all $i$.
\end{itemize}
Then the set of such solutions $x^*$ constitutes the $\gamma$-approximate Pareto frontier.
\end{theorem}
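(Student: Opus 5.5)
Fix an arbitrary Pareto optimal solution $x$ and exhibit a tuple $(q_1,\dots,q_k)$ satisfying both bullet conditions whose witness $x^*$ $\gamma$-dominates $x$. Throughout I assume, as the paper implicitly does, that all objective values are at least $1$ (positive counts after scaling), so that $q_i=\lfloor \log_\gamma f_i(x)\rfloor\in\mathbb{N}$ is well defined. The natural choice is $q_i = \lfloor \log_\gamma f_i(x)\rfloor$, which gives
\[
\gamma^{q_i}\le f_i(x) < \gamma^{q_i+1}\quad\text{for all } i .
\]

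The first step is the first bullet. It holds immediately with $x^*=x$, since $f_i(x)\ge \gamma^{q_i}$ for every $i$. The tuple $q$ may fail the second (maximality) bullet, however. So I would move upward in the grid: among all tuples $q'\ge q$ (componentwise) for which some feasible solution achieves all thresholds $\gamma^{q'_i}$, choose a maximal one. Such a maximal element exists because the set of achievable tuples is finite: $\mathcal{X}$ is finite (or the objectives are bounded), so $q'_i\le \log_\gamma \max_x f_i(x)$. By construction this maximal $q'$ satisfies both bullets, so its witness $x^*$ lies in the returned set.

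The second step checks domination. For each $i$ we have $f_i(x^*)\ge \gamma^{q'_i}\ge \gamma^{q_i} > f_i(x)/\gamma$, hence $\gamma f_i(x^*)\ge f_i(x)$, and $x^*$ $\gamma$-dominates $x$. Since $x$ was an arbitrary Pareto optimal point, the collected witnesses form a $\gamma$-approximate Pareto frontier in the sense of Definition~\ref{def:approx_pareto}.

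The only delicate point is ensuring that the maximal tuple dominating $q$ is among those collected. I handle this by noting that the theorem's second bullet is precisely maximality in the componentwise order, so every maximal achievable tuple is a collected tuple. I also need the boundary convention on values $f_i<1$, which otherwise breaks the $\mathbb{N}$-indexing. I would address it by assuming objectives are shifted or scaled into $[1,U]$, or alternatively by allowing $q_i\in\mathbb{Z}$.
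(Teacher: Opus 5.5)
Your proof is correct and takes the same route as the paper: the sketch after the theorem and the proof of Lemma~\ref{lem:discretize} also round each objective of the Pareto-optimal point down to the grid, pass to a maximal satisfiable grid point above it, and conclude $\gamma f_i(x^*) \ge f_i(x)$. Two of your additions make explicit what the paper leaves implicit: you choose a maximal achievable tuple dominating $q$ via finiteness (the paper only remarks that the maximal vector ``may be identical to $\overline{F}(x_{\mathrm{opt}})$''), and you note that the statement as written needs $f_i \ge 1$, or $q_i\in\mathbb{Z}$ with positive values.
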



The key idea driving the work of Papadimitriou and Yannakakis~\cite{papadimitriou2000approximability} is to impose a $\gamma$-multiplicative grid discretization over the $k$-dimensional objective space.  
For any grid point $(\gamma^{q_1},\dots,\gamma^{q_k})$, a SAT query checks whether there exists a solution whose objective functions meet all these thresholds.  

Every Pareto-optimal solution must lie inside some grid cell.  
If we round each objective value $f_i(x_{\mathrm{opt}})$ of a Pareto-optimal point $x_{\mathrm{opt}}$ down to the nearest grid level $\gamma^{q_i}$, the resulting threshold vector corresponds to a satisfiable query. 
Because adjacent grid levels differ by exactly a factor of $\gamma$, any solution ($x^*$ in Theorem~\ref{th:papadim}) satisfying the rounded-down threshold achieves each objective within a multiplicative factor of $\gamma$ of the true Pareto-optimal values.  
Thus, the solutions associated with all maximal satisfiable grid points collectively form the $\gamma$-approximate Pareto frontier.


\section{SMOO Problem Formulation}
In this paper, we study SMOO problems involving $k$ objectives. Formally, the problem can be expressed as
\begin{align}
    &\max_{x \in \{0,1\}^n} 
    \left( \sum_{y_1} f_1(x,y_1), \ldots , \sum_{y_k} f_k(x,y_k) \right)
    \label{eq:smoop}
\end{align}
where:
\begin{itemize}
    \item $x \in \{0,1\}^n$ is a vector of binary decision variables.
    \item $y_i \in \{0,1\}^{|y_i|}$ are latent binary variables that capture stochasticity through model counting.
    \item $f_i : \{0,1\}^{n + |y_i|} \rightarrow \mathbb{R}_{\ge 0}$ are functions defined over both decision and latent variables. 
\end{itemize}
Each term $\sum_{y_i} f_i(x, y_i)$ represents a model-counting–based objective, where the summation over the latent variables $y_i$ captures the underlying stochasticity. 
Depending on the application, $f_i$ can take two forms:
\begin{itemize}
    \item \emph{Unweighted functions}: $f_i(x, y_i) \in \{0,1\}$, where the summation counts the number of satisfying configurations of $y_i$ given $x$. 
    \item \emph{Weighted functions}: $f_i(x, y_i) \in \mathbb{R}_{\ge 0}$, where each configuration of $(x, y_i)$ contributes a weight, corresponding to probabilistic or expectation-based objectives.
\end{itemize}
The unweighted case serves as the foundation of our approach and is discussed first, while the weighted extension is introduced in a later section. 
We assume that all model-counting objectives in Equation~\eqref{eq:smoop} are computationally intractable to evaluate exactly, which necessitates the development of approximate methods with theoretical performance guarantees.
\section{Solving Unweighted SMOO Problems}
\label{sec:method}

\begin{figure}[p]
    \vspace{-2em}
    \centering
    \includegraphics[width=0.95\linewidth]{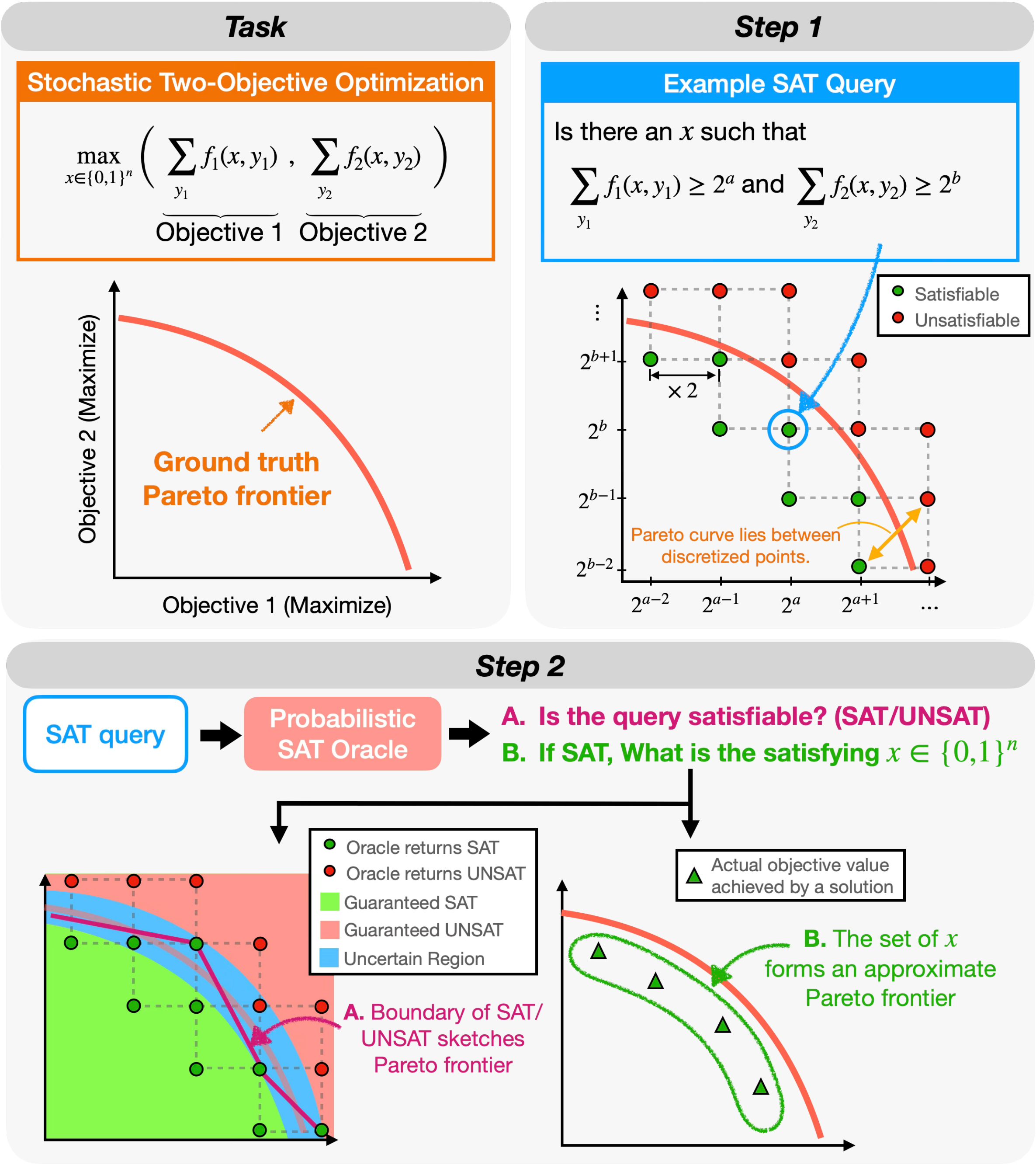}
    \caption{
    Illustration of our approach for solving SMOO problems in a two-objective maximization setting. 
    \textbf{Task:} Both objectives involve model counting, are defined over decision variables $x$ and latent variables $y_1, y_2$. The orange curve shows the Pareto frontier. 
    \textbf{Step~1:} By discretizing objective space multiplicatively by a factor of 2, optimization is converted into a set of SAT queries asking whether the thresholds at each grid point are jointly achievable, separating points into SAT (green) and UNSAT (red) regions. The true Pareto frontier is sandwiched between the adjacent green/red points. 
    \textbf{Step~2:} Since each SAT query is intractable, we instead use a probabilistic SAT oracle providing two guarantees: (1) correctness of the SAT/UNSAT outcome; and (2) if SAT, the returned solution achieves tightly guaranteed objective values. These guarantees yield a sketch of the Pareto frontier via the SAT/UNSAT boundary, and the corresponding solutions $x$ collectively form an approximate Pareto frontier.
}
    \label{fig:approach}
\end{figure}

We propose the \method algorithm for solving SMOO problems with provable guarantees. This section focuses on unweighted SMOO problems.  
Formally, each objective $\sum_{y_i} f_i(x, y_i)$ represents an unweighted model count, where $f_i : \{0,1\}^{n + |y_i|} \to \{0,1\}$.  Despite being unweighted, this setting already captures a broad class of stochastic objectives. 

Our \method algorithm (Algorithm \ref{alg:main}) returns a set of tuples. Each tuple is in the form of $(x, p)$, where $x$ is the solution, i.e., the value assignments to the binary decision variables, and $p$ is a vector of estimated objective values under the assignment $x$. We are able to obtain the  following two types of theoretical guarantees for the \method algorithm: 
\begin{enumerate}
    \item (\textbf{Quality of the Estimated Objective Values}) 
    The collection of the vectors of estimated objective values $p$ forms a high-quality sketch of the Pareto frontier (Figure~\ref{fig:approach}, Step~2,~A). At a high level, with high probability (e.g., 99\%), each estimated vector lies within a $2^{\epsilon}$ multiplicative distance from a vector made from true Pareto-optimal objective values and vice versa. 
    We restate the exact form of Theorem \ref{thm:pareto-curve} here for easy reference:
    
    \begin{itshape}
    ({Theorem~\ref{thm:pareto-curve}}) Fix an error bound $\delta \in (0,1)$ and an approximation factor $2^{\epsilon}$ with $\epsilon \geq 3$.  
Let $\mathcal{F}_{\mathrm{dom}} \subseteq \{0,1\}^n \times \mathbb{R}_{\ge 0}^k$ denote the set of tuples $(x,p)$ returned by Algorithm~\ref{alg:main}.  
Then, with probability at least $1-\delta$, the following holds:
\begin{itemize}
    \item For every Pareto-optimal solution $x_{\mathrm{opt}}$ with objective values
    $Q_i = \sum_{y_i} f_i(x_{\mathrm{opt}},y_i)$,
    for $i=1,\ldots,k$.  
    There exists $(x,p) \in \mathcal{F}_{\mathrm{dom}}$ with the estimated objective values $p=(p_1,\dots,p_k)$ such that
    $2^{\epsilon} p_i \ge Q_i, \forall i$.
    \item Conversely, for every tuple $(x,p)\in\mathcal{F}_{\mathrm{dom}}$ returned by \method, there exists a Pareto-optimal solution $x_{\mathrm{opt}}$, achieving objective values $Q_i = \sum_{y_i} f_i(x_{\mathrm{opt}},y_i)$,  for $i=1,\ldots,k$, such that 
    $2^{\epsilon} Q_i \ge p_i$, $\forall i$.
\end{itemize}
\end{itshape}
    \medskip
    \item (\textbf{Quality of Solutions}): 
    According to Definition \ref{def:approx_pareto}, the approximate Pareto frontier is the \textit{set of solutions} $x$.
    We are able to prove that 
    the set of solutions found by \method establishes a $2^{2\epsilon - 1}$-approximate Pareto frontier with high probability. 
    In other words, when the objective values at these solutions are evaluated exactly, they are under the true Pareto curve by at most a factor of $2^{2\epsilon - 1}$ (Figure~\ref{fig:approach}, Step~2,~B). 
    We again restate Theorem \ref{thm:pareto-solution} here:
    
    \begin{itshape}
        (Theorem \ref{thm:pareto-solution}): Fix an error bound $\delta \in (0,1)$ and  $\epsilon \geq 3$. 
    Let $\mathcal{F}_{\mathrm{dom}} \subseteq \{0,1\}^n \times \mathbb{R}_{\ge 0}^k$ denote the set of tuples $(x,p)$ returned by Algorithm~\ref{alg:main}. 
    Then, with probability at least $1-\delta$, the set of assignments $\{ x \in \{0,1\}^n : (x,p)\in\mathcal{F}_{\mathrm{dom}}\}$ constitutes a $2^{2\epsilon - 1}$-approximate Pareto frontier.
    \end{itshape}
\end{enumerate}

These guarantees together show that, even when all objective functions are computationally intractable, \method can (1) estimate Pareto-optimal objective values, providing a high-quality sketch of the true frontier within a $2^{\epsilon}$ multiplicative distance, and (2) produce solutions that form a $2^{2\epsilon - 1}$-approximate Pareto frontier when the objective values of the solutions are evaluated exactly.  

The proof of the aforementioned theoretic guarantees follows a multi-step process, which we will discuss the high-level idea below. Figure~\ref{fig:approach} also provides a graphical illustration.
\begin{itemize}
\item \textbf{Step 1: Approximating SMOO via Solving Discretized Decision Problems}: 
The first step is to approximate the SMOO problem by converting it into a set of decision problems. 
As illustrated in Figure~\ref{fig:approach} Step 1, we discretize the range of each objective using a grid of multiplicative scale. 

At each grid point, we formulate an SAT query: \emph{does there exist a solution $x$ such that every objective function at $x$ exceeds their respective threshold value defined by the grid}? 

Assuming that all SAT queries can be solved exactly by an oracle, the grid points will be separated into two parts -- the lower left part where the oracle returns SAT (denoted by the green points in Figure \ref{fig:approach} Step 1), and the upper right part where the oracle returns UNSAT (denoted by the red points in Figure \ref{fig:approach} Step 1). 
%
%
Intuitively, the true Pareto frontier is sandwiched between the satisfiable and unsatisfiable grid cells. Because every adjacent pair of green and red points is at most apart by a factor of 2, the true Pareto frontier, sandwiched between these pairs, should have a smaller distance, and hence less than a factor of 2, towards the top of the green points or the bottom of the red ones. 
Indeed, we can show in Lemma \ref{lem:discretize} that a 2-approximate Pareto frontier can be computed following a factor 2 multiplicative discretization. 

\medskip

\item \textbf{Step 2: Approximating Decision Problem Solutions Assuming a Probabilistic SAT Oracle Available}: 
Because each objective function in the SMOO problem is computationally intractable, the corresponding SAT queries cannot be solved exactly.  
Our theoretical guarantees will depend on having access to the following \emph{probabilistic SAT oracle}: 

\begin{itshape}
Given thresholds $(2^{l_1}, \dots, 2^{l_k})$ at a grid point, 
the oracle estimates whether there exists a solution $x$ such that all objective functions satisfy $\sum_{y_i} f_i(x, y_i) \ge 2^{l_i}$ for all $i$.  
It returns either $(\mathtt{True}, x^*)$, indicating that the thresholds are (approximately) jointly achievable at solution $x^*$, or $(\mathtt{False}, \bot)$, indicating that the thresholds are not achievable. Since the oracle is probabilistic, we require the following guarantees:
\begin{enumerate}
    \item \textbf{(Guaranteed UNSAT for high thresholds)}  
    If for all $x \in \{0,1\}^n$, at least for one $i \in \{1,\ldots,k\}$,
    $    \sum_{y_i} f_i(x, y_i) < 2^{l_i - l^*}$, 
    then the oracle returns $(\mathtt{False}, \bot)$ with probability at least $1-\eta$.
    
    \item \textbf{(Guaranteed SAT for low thresholds)}  
    If there exists $x \in \{0,1\}^n$ such that
    $
    \sum_{y_i} f_i(x, y_i) \ge 2^{l_i + l^*}, \quad \forall i = 1,\ldots,k$,    
    then the oracle returns $(\mathtt{True}, x^*)$ for some $x^* \in \{0,1\}^n$ satisfying
    $
    \sum_{y_i} f_i(x^*, y_i) \ge 2^{l_i - l^*}, \quad \forall i = 1,\ldots,k.
    $
    with probability at least $1-\eta$.
    
    \item \textbf{(Intermediate case)} Otherwise, with probability at least $1 - \eta$, the oracle returns either $(\texttt{False}, \bot)$ or $(\mathtt{True}, x^*)$. When it returns  $(\mathtt{True}, x^*)$, we need 
    $
    \sum_{y_i} f_i(x^*, y_i) \ge 2^{l_i - l^*} \quad \text{for all } i \in \{1,\ldots,k\}.
    $
\end{enumerate}
\end{itshape}


The exact definition of the probabilistic SAT oracle is in Definition \ref{def:sat-oracle}. 

The probabilistic oracle makes the analysis more interesting when we return to the graphical illustration. As shown in Figure~\ref{fig:approach}, Step~2 A, a third, intermediate region emerges between the SAT and UNSAT regions where the probabilistic SAT oracle is \emph{uncertain} (intermediate case). In this region, the oracle cannot determine whether all thresholds are achievable, subsuming a fixed multiplicative slack $2^{l^*}$. Inside the intermediate region, it may return either $(\mathtt{False}, \bot)$ or $(\mathtt{True}, x^*)$ with a candidate solution $x^*$. 


Although the presence of the third intermediate region complicates the analysis, we can show that
(1) the set of Pareto non-dominated grid points at which the oracle returns \texttt{True}
(i.e., the upper-right-most green points in Figure~1, Step~2A) sketches the Pareto frontier curve (Theorem~\ref{thm:pareto-curve}), and
(2) the corresponding solutions $x$ form an approximate Pareto frontier. In other words, if we evaluate the objective values of those $x$ exactly, these values will be near Pareto-optimal (Theorem~\ref{thm:pareto-solution}). 

The proof of Theorem~\ref{thm:pareto-curve} is based on the fact that the true Pareto frontier must lie entirely within the intermediate uncertain region. 
Our analysis assumes that the SAT/UNSAT statuses reported by the probabilistic SAT oracle at all grid points are correct (i.e., they fall within the probability $1-\eta$). %
A union bound can be used to ensure that such a probability is big enough.
%
In this scenario, any point below the lower boundary of the uncertain region would be declared SAT by the oracle. This ensures that the true Pareto frontier is above the lower boundary. 
Contrarily, any point above the upper boundary would be declared UNSAT, hence ensuring that the true frontier is below the upper boundary. 
Because the width of the uncertain region is controllable, the upper-right-most green points in (Figure~\ref{fig:approach}, Step~2A) provide a faithful sketch of the Pareto frontier curve.


%

The estimated objective values at the approximate Pareto frontier obtained from Step 2A may not be achievable. This is because the probabilistic SAT oracle may return a solution $x^*$ that achieves discounted objective values. 
However, because the discount is at most $2^{l^*}$, these solutions approximate the true Pareto frontier well (Figure~\ref{fig:approach}, Step~2B), even when we evaluate their objective values exactly. 
They  collectively form an approximate Pareto frontier (Theorem~\ref{thm:pareto-solution}).

\medskip
%


\item {\textbf{Step 3: Probabilistic SAT Oracle Implementation.}}
In this step, we implement the probabilistic SAT oracle assumed in Step~2, thereby fulfilling the requirements of Theorems~\ref{thm:pareto-curve} and~\ref{thm:pareto-solution}.

Our implementation leverages approximate counting using hashing and randomization. 
As described in Section~\ref{sec:prelim-xor}, to determining whether a model count
$
\sum_{y \in \{0,1\}^{|y|}} f(x_0, y)
$
is greater than $2^l$, we can check the  satisfiability of the SAT formula
\[
    f(x_0, y) \wedge \mathtt{XOR}_1(y) \wedge \cdots \wedge \mathtt{XOR}_l(y),
\]
where each constraint $\mathtt{XOR}_i(y)$ is the logical XOR over a randomly sampled subset of variables in $y$.
Specifically, the SAT formula is \emph{likely to be satisfiable} when the model count $\sum_{y} f(x_0,y)$ exceeds $2^{l}$, and \emph{likely unsatisfiable} when it is below $2^{l}$. 
We can show that a single SAT query succeeds in estimating the model counting with constant probability.
By Lemma~\ref{lem:xor-counting}, this probability can be made strictly greater than $1/2$ with appropriate parameter choices.



We can amplify the oracle’s success probability using a majority-voting scheme.
Specifically, if a majority of multiple SAT instances with independently sampled XOR constraints are satisfiable (or unsatisfiable),
then the probability of correctly determining whether the model count is above (or below) the threshold can be made arbitrarily high.

In addition to estimating the model count for a fixed $x_0$, we must also identify such an assignment $x_0$ for which the model count exceeds the desired threshold.
The key observation is that any assignment $x$ with a very small value of $\sum_y f(x,y)$ has a small probability of satisfying the constructed SAT formula with XOR constraints. 
This probability is so low that, even after applying a union bound over exponentially many possible assignments $x$, the probability that any such ``bad'' assignment survives remains negligible. Thus, if the SAT formula is satisfiable for some assignment $x_0$, then with high probability the model count with $x_0$ achieves a substantial fraction of the target threshold.



\end{itemize}

\subsection{Step 1: Approximating SMOO via Solving Discretized Decision Problems}


\begin{wrapfigure}{r}{0.5\textwidth} 
    \centering
    \vspace{-2em} 
    \includegraphics[width=0.9\linewidth]{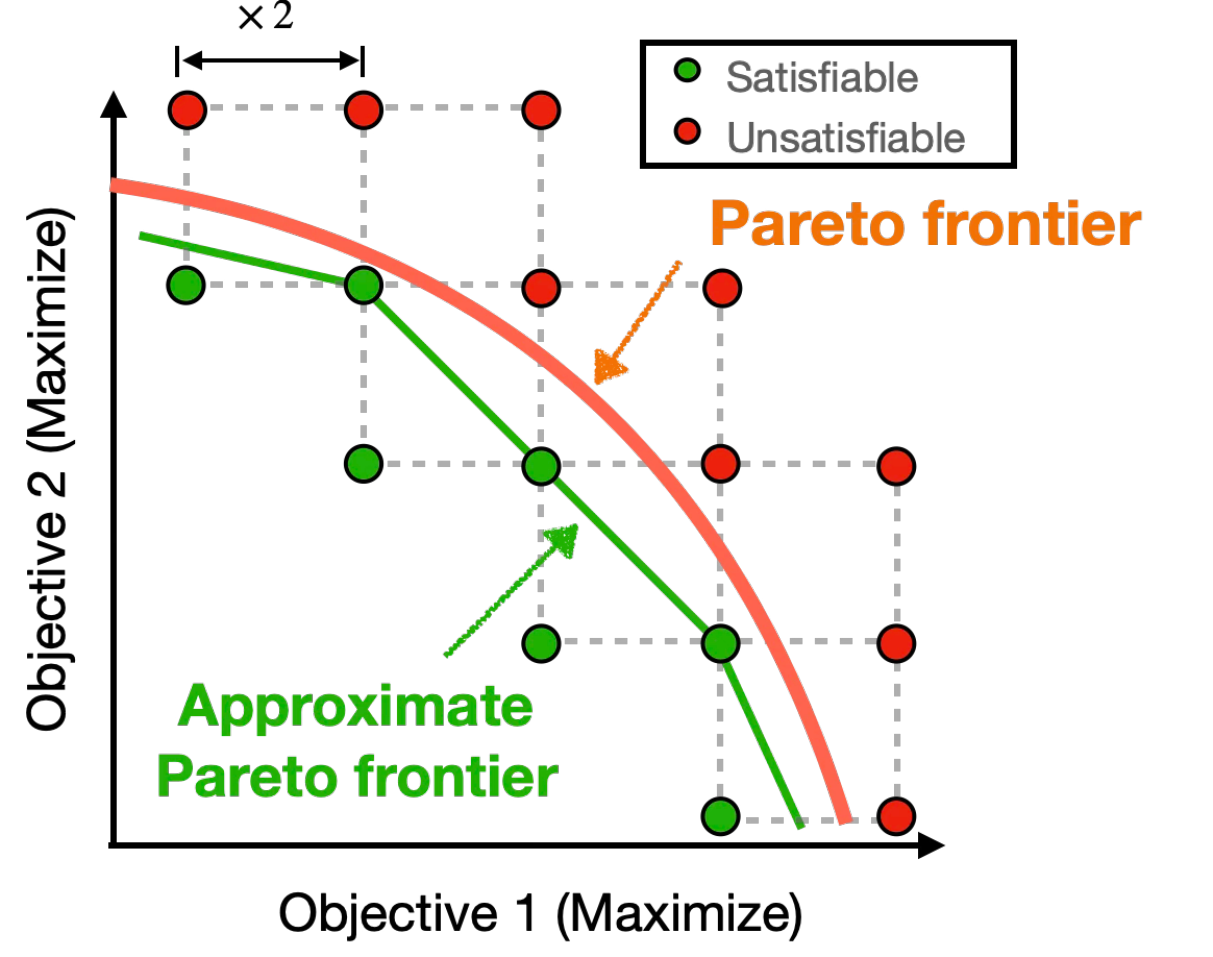}
    \caption{Example of solving a two-objective optimization problem via discretized decision problems, assuming exact inference were possible. The SAT boundary solutions would form a $2$-approximate Pareto frontier.}
    \label{fig:discretize}
    \vspace{-3em}
\end{wrapfigure}

This step transforms the original SMOO problem into a finite set of satisfiability queries.  
As illustrated in Figure~\ref{fig:approach}, Step~1, for the two-objective maximization case, the range of each objective function is discretized into a multiplicative-scale grid of threshold values (e.g., powers of two, where each grid point corresponds to $2, 2^2, \dots$).  
At each grid point, we query whether there exists a feasible solution that simultaneously satisfies all objectives at the corresponding threshold values.  
If every SAT query can be answered exactly, we can extract a $2$-approximate Pareto frontier, as shown in Figure~\ref{fig:discretize}. Intuitively, the solutions lying on the discretized SAT–UNSAT boundary achieve at least one half of the objective values of the Pareto-optimal solutions. A formal lemma is given below.

\begin{lemma}
\label{lem:discretize}
For the SMOO problem defined in Equation~\eqref{eq:smoop}, let
\[
\mathcal{P} = \left\{ \left(2^{l_1}, \ldots, 2^{l_k}\right) \middle| 0 \le l_i \le |y_i|, l_i \in \mathbb{Z}, \forall i \in \{1, \ldots, k\} \right\}.
\]
Suppose we have an exact SAT oracle that determines whether there exists an $x \in \{0,1\}^n$ such that
\begin{align}
    \Big( \sum_{y_1} f_1(x, y_1) \ge Q_1 \Big) \wedge \dots \wedge \Big( \sum_{y_k} f_k(x, y_k) \ge Q_k \Big),
    \label{eq:smoop-smc}
\end{align}
where $(Q_1, \dots, Q_k) \in \mathcal{P}$.  
Extract those $x^*$ such that:
\begin{itemize}
    \item Equation~\eqref{eq:smoop-smc} is satisfiable for $x^*$ with one threshold $(Q_1^*, \dots, Q_k^*)\in \mathcal{P}$.
    \item For every $(Q_1', \dots, Q_k') \in \mathcal{P}$ satisfying $Q_i' \ge Q_i^*$ for all $i$ and $Q_j' > Q_j^*$ for some $j$, Equation~\eqref{eq:smoop-smc} is unsatisfiable.
\end{itemize}
Then, the set of such $x^*$ establishes a $2$-approximate Pareto frontier.
\end{lemma}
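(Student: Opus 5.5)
The plan is to follow the argument behind Theorem~\ref{th:papadim}, specialised to the base-2 grid $\mathcal{P}$. Fix an arbitrary Pareto-optimal solution $x_{\mathrm{opt}}$ and write $v_i = \sum_{y_i} f_i(x_{\mathrm{opt}},y_i)$ for $i=1,\ldots,k$. By Definition~\ref{def:approx_pareto}, I must exhibit an extracted $x^*$ with $2\sum_{y_i} f_i(x^*,y_i) \ge v_i$ for every $i$.

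\textbf{Step 1 (round down onto the grid).} Set $l_i = \lfloor \log_2 v_i \rfloor$. Since $f_i$ is unweighted, $v_i \le 2^{|y_i|}$, so $0 \le l_i \le |y_i|$ and $Q := (2^{l_1},\ldots,2^{l_k}) \in \mathcal{P}$. By construction $2^{l_i} \le v_i < 2^{l_i+1}$. Hence $x_{\mathrm{opt}}$ itself witnesses that Equation~\eqref{eq:smoop-smc} is satisfiable at $Q$.

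\textbf{Step 2 (climb to a maximal satisfiable grid point).} Let $S$ be the set of grid points $Q' \in \mathcal{P}$ that componentwise dominate $Q$ and at which Equation~\eqref{eq:smoop-smc} is satisfiable. $S$ is nonempty because it contains $Q$, and it is finite because $\mathcal{P}$ is. It therefore has a maximal element $Q^*$ under the componentwise partial order.

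I then check that $Q^*$ meets both extraction conditions. It is satisfiable by definition of $S$. Suppose some $Q'' \in \mathcal{P}$ strictly dominates $Q^*$ (at least as large in every coordinate, larger in one) and is satisfiable. Then $Q''$ also dominates $Q$, so $Q'' \in S$, which contradicts the maximality of $Q^*$. Hence any $x^*$ witnessing satisfiability at $Q^*$ belongs to the extracted set.

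\textbf{Step 3 (the factor 2).} For such an $x^*$,
\[
\sum_{y_i} f_i(x^*,y_i) \ge Q^*_i \ge 2^{l_i} > v_i/2 \quad \text{for all } i,
\]
so $x^*$ $2$-dominates $x_{\mathrm{opt}}$. Since $x_{\mathrm{opt}}$ was arbitrary, the extracted set is a $2$-approximate Pareto frontier. Notably, the argument never uses the Pareto-optimality of $x_{\mathrm{opt}}$; it holds for every feasible $x$.

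\textbf{Main obstacle: the degenerate case $v_i = 0$.} The only delicate point is a coordinate with $v_i = 0$, where $\lfloor \log_2 0\rfloor$ is undefined and the grid's smallest threshold $2^0 = 1$ is not met. For that coordinate the $2$-domination inequality $2\sum_{y_i} f_i(x^*,y_i) \ge 0$ is trivial, but we still need some extracted $x^*$ to exist for the other coordinates.

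I would handle this by adopting the convention that each coordinate's grid also contains the threshold $0$; the paper's $\mathcal{P}$ can be read with $2^{l_i}$ extended in this way, which does not change the statement's spirit. In that coordinate I would take $Q_i = 0$, which is always satisfiable, and the climbing argument of Step 2 goes through verbatim. Alternatively, one can assume all counts are at least $1$, which is implicit in the multiplicative notion of approximation. I will state this convention explicitly before running Steps 1--3.
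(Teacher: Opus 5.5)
Your proposal is correct and matches the paper's proof essentially step for step. You round the Pareto-optimal objective vector down to a point of $\mathcal{P}$, pass to a maximal satisfiable grid point dominating it, and conclude $F_i(x^*)\ge Q_i^*\ge 2^{\lfloor \log_2 F_i(x_{\mathrm{opt}})\rfloor} > F_i(x_{\mathrm{opt}})/2$. Your finite-poset argument for the existence of that maximal point is more careful than the paper's remark that it ``may be identical to $\overline{F}(x_{\mathrm{opt}})$'', which need not be maximal. Your zero-count caveat also flags an edge case the paper silently skips, since its $\lfloor \log_2 F_i\rfloor$ is undefined there (cf.\ the footnote to Theorem~\ref{thm:pareto-curve}).
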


\begin{proof}



For $i\in\{1,\dots,k\}$, define $F_i(x)$ to be $\sum_{y_i} f_i(x,y_i)$.
For any feasible $x$, define its rounded-down objective vector $\overline{F}(x)$ as
\[
\overline{F}(x)
 \coloneqq 
\bigl(2^{\lfloor \log_2 F_1(x)\rfloor},\dots,2^{\lfloor \log_2 F_k(x)\rfloor}\bigr), 
\]
in which $\overline{F}_i(x) = 2^{\lfloor \log_2 F_i(x)\rfloor}$. We can see that $\overline{F}(x) \in \mathcal P$. By construction, for every $i$,
\begin{equation}
\label{eq:rounding}
\overline{F}_i(x) \le F_i(x) < 2 \overline{F}_i(x).
\end{equation}

Fix an arbitrary Pareto-optimal solution $x_{\mathrm{opt}}$ of the original
SMOO problem.
Since $F_i(x_{\mathrm{opt}})\ge \overline{F}_i(x_{\mathrm{opt}})$ for all $i$,
the SAT formula \eqref{eq:smoop-smc} is satisfiable with threshold vector
$\overline{F}(x_{\mathrm{opt}}) \in \mathcal{P}$.

Among all threshold vectors in $\mathcal P$ for which
\eqref{eq:smoop-smc} is satisfiable, let $(Q_1^*,\dots,Q_k^*)$ be the special vector of thresholds defined in this Lemma~\ref{lem:discretize}, satisfying
\[
Q_i^* \geq \overline{F}_i(x_{\mathrm{opt}}),\quad i=1,\dots,k.
\]
Such a vector always exists: for example, it may be identical to
$\overline{F}(x_{\mathrm{opt}})$. Let $x^*$ be the corresponding assignment for this threshold vector $Q^*$.
By definition, $x^*$ is one of the extracted solutions described in the lemma.

From feasibility of $x^*$ we have
\[
F_i(x^*)  \ge  Q_i^*  \ge  \overline{F}_i(x_{\mathrm{opt}})
\quad \forall i.
\]
Combining this with \eqref{eq:rounding} yields
\[
F_i(x^*)  \ge  \overline{F}_i(x_{\mathrm{opt}})
 \ge  \tfrac{1}{2} F_i(x_{\mathrm{opt}})
\quad \forall i.
\]
This implies that,
\[
2 F(x^*)  \ge  F(x_{\mathrm{opt}}).
\]
In other words, the Pareto-optimal solution $x_{\mathrm{opt}}$ is dominated
by $x^*$ within a multiplicative factor of $2$.
Because the argument holds for every Pareto-optimal solution
$x_{\mathrm{opt}}$, the set of extracted solutions $\{x^*\}$ constitutes a
$2$-approximate Pareto frontier.
\end{proof}

In conclusion, with access to an exact SAT oracle, we can directly solve a series of SAT queries and extract a $2$-approximate Pareto frontier (or an even tighter approximation if finer discretization grids are used). However, solving each SAT query exactly is often highly intractable.

\subsection{Step 2: Approximating Decision Problem Solutions Assuming a Probabilistic SAT Oracle}
We introduce a probabilistic SAT oracle used to check whether there exists an assignment $x \in \{0,1\}^n$ such that all $k$ objectives achieve the thresholds simultaneously:
\[
\sum_{y_i} f_i(x, y_i) \ge 2^{l_i}, \quad \text{for all } i = 1,\ldots,k.
\]
Here, each $f_i : \{0,1\}^{n + |y_i|} \to \{0,1\}$ is a Boolean function over decision variables $x$ and latent variables $y_i$, $2^{l_i}$ is a threshold value ($l_i \in \mathbb{Z}_{\geq 0}$), and the model counting term $\sum_{y_i} f_i(x, y_i)$ is the $i$-th objective function in Equation~\eqref{eq:smoop}.  

The probabilistic oracle approximately solves the above query with high probability and tolerates a controlled error gap. We formalize this as follows (details of the oracle implementation are provided in the next step in Section~\ref{sec:sat-oracle}):

\begin{definition}[Probabilistic SAT Oracle]
\label{def:sat-oracle}
Let $f_i : \{0,1\}^{n + |y_i|} \to \{0,1\}$ for $i = 1, \ldots, k$ be Boolean functions, and let $2^{l_1}, \ldots, 2^{l_k}$, where $l_1, \ldots, l_k \in \mathbb{Z}_{\ge 0}$, be the target thresholds.  
A probabilistic SAT oracle, denoted
\[
\texttt{SAT-Oracle}(f_1,\ldots,f_k; l_1,\ldots,l_k, l^*, \eta),
\]
takes as input an error gap parameter $l^* \ge 2$ and an error probability bound $\eta \in [0,1]$, and returns either $(\texttt{True}, x^*)$ with a solution $x^* \in \{0,1\}^n$ or $(\texttt{False}, \bot)$, satisfying the following:
\begin{enumerate}
    \item \textbf{(Guaranteed UNSAT for high thresholds)}  
    If for all $x \in \{0,1\}^n$,
    \[
    \sum_{y_i} f_i(x, y_i) < 2^{l_i - l^*} \quad \text{for some } i \in \{1,\ldots,k\},
    \]
    then the oracle returns $(\mathtt{False}, \bot)$ with probability at least $1-\eta$.
    
    \item \textbf{(Guaranteed SAT for low thresholds)}  
    If there exists $x \in \{0,1\}^n$ such that
    \[
    \sum_{y_i} f_i(x, y_i) \ge 2^{l_i + l^*}, \quad \forall i = 1,\ldots,k,
    \]
    then the oracle returns $(\mathtt{True}, x^*)$ for some $x^* \in \{0,1\}^n$ satisfying
    \[
    \sum_{y_i} f_i(x^*, y_i) \ge 2^{l_i - l^*}, \quad \forall i = 1,\ldots,k.
    \]
    with probability at least $1-\eta$.
    
    \item \textbf{(Intermediate case)} Otherwise, with probability at least $1 - \eta$, the oracle returns either $(\texttt{False}, \bot)$ or $(\mathtt{True}, x^*)$. When it returns  $(\mathtt{True}, x^*)$, we need 
    \[
    \sum_{y_i} f_i(x^*, y_i) \ge 2^{l_i - l^*} \quad \text{for all } i \in \{1,\ldots,k\}.
    \]
\end{enumerate}
\end{definition}

The \textsc{SAT} oracle in Definition~\ref{def:sat-oracle} serves as a probabilistic verifier for specific objective thresholds.  
Given candidate thresholds $2^{l_1},\ldots,2^{l_k}$, it determines whether there exists a decision $x$ that achieves sufficiently high model counts across all objectives.  
If such an $x$ exists, the oracle returns \texttt{True} along with an assignment $x^*$. Due to the probabilistic nature, we can only guarantee that $x^*$ meets slightly relaxed thresholds. If no $x$ satisfies even the relaxed thresholds, it returns \texttt{False} with high probability.  
The parameter $2^{l^*}$ introduces an intermediate \emph{uncertain} region between the high-confidence \texttt{True} and \texttt{False} regions. See Figure~\ref{fig:approach}~Step 2A for a graphical illustration. 


\begin{algorithm}[!h]
\caption{$\texttt{\method}(\sum f_1,\dots, \sum f_k, \delta, \epsilon)$}
\label{alg:main}
\begin{algorithmic}[1]
\Require Objective functions $\{\sum f_i \}_{i=1}^k$; error probability bound $\delta$; approximation factor $\epsilon$.

\State $\mathcal{P} \gets \left\{ \left(2^{l_1}, \ldots, 2^{l_k} \right) \middle| 0 \le l_i \le |y_i|, l_i \in \mathbb{Z}, \forall i \in \{1,\ldots,k\} \right\}$.
\State $l^*\gets \epsilon -1, \quad \eta \gets \delta / |\mathcal{P}|$.
\State $\mathcal{F} \gets \emptyset$.
\For{each point $p=(2^{l_1},\dots, 2^{l_k}) \in \mathcal{P}$}
    \State $(\text{isSat}, x^*) \gets \texttt{SAT-Oracle}(f_1,\ldots,f_k, l_1,\ldots,l_k, l^*, \eta)$.
    \If{$\text{isSat} = \texttt{True}$}
        \State $\mathcal{F} \gets \mathcal{F} \cup \{(x^*, p)\}$.
    \EndIf
\EndFor
\State $\mathcal{F}_{dom} = \left\{ (x, p) \in \mathcal{F} \middle| \nexists (x', p') \in \mathcal{F} \text{ with } p' \geq p ~\text{element-wise}\right\}$
\State \Return $\mathcal{F}_{dom}$.
\end{algorithmic}
\end{algorithm}

Our algorithm based on the probabilistic SAT oracle is presented in Algorithm~\ref{alg:main}. Our main contributions are: with high probability, (1) recovering a high-quality sketch of the Pareto curve (Theorem~\ref{thm:pareto-curve}), and (2) finding the approximate Pareto frontier, i.e., the set of approximately dominating solutions (Theorem~\ref{thm:pareto-solution}).

\begin{theorem}[High-quality Pareto frontier curve]
\label{thm:pareto-curve}
Fix an error bound $\delta \in (0,1)$ and an approximation factor $2^{\epsilon}$ with $\epsilon \geq 3$.  
Let $\mathcal{F}_{\mathrm{dom}} \subseteq \{0,1\}^n \times \mathbb{R}_{\ge 0}^k$ denote the set of tuples $(x,p)$ returned by Algorithm~\ref{alg:main}.  
Then, with probability at least $1-\delta$, the following holds:
\begin{itemize}
    \item For every Pareto-optimal solution $x_{\mathrm{opt}}$ with objective values
    $Q_i = \sum_{y_i} f_i(x_{\mathrm{opt}},y_i)$\footnote{We assume that $Q_i \ge 2^{\epsilon}$ for all $i$. If some $Q_i < 2^{\epsilon}$, the corresponding Pareto point lies close to $0$, where the multiplicative $2^{\epsilon}$-approximation guarantee cannot be established.}, for $i=1,\ldots,k$.  
    There exists $(x,p) \in \mathcal{F}_{\mathrm{dom}}$ with the estimated objective values $p=(p_1,\dots,p_k)$ such that
    $2^{\epsilon} p_i \ge Q_i, \forall i$.
    \item Conversely, for every tuple $(x,p)\in\mathcal{F}_{\mathrm{dom}}$ returned by \method, there exists a Pareto-optimal solution $x_{\mathrm{opt}}$, achieving objective values $Q_i = \sum_{y_i} f_i(x_{\mathrm{opt}},y_i)$,  for $i=1,\ldots,k$, such that 
    $2^{\epsilon} Q_i \ge p_i$, $\forall i$.
\end{itemize}

\end{theorem}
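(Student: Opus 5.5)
The plan is to condition on a single ``good event'' and then argue both bullets deterministically. Let $E$ be the event that every one of the $|\mathcal{P}|$ calls to \texttt{SAT-Oracle} in Algorithm~\ref{alg:main} behaves as prescribed by Definition~\ref{def:sat-oracle}. Each call fails with probability at most $\eta=\delta/|\mathcal{P}|$, so a union bound gives $\Pr[E]\ge 1-\delta$. From now on I assume $E$ holds. Throughout I write $F_i(x)=\sum_{y_i} f_i(x,y_i)$ and use $l^*=\epsilon-1\ge 2$, which is the choice made by the algorithm.

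\emph{First bullet.} Fix a Pareto-optimal $x_{\mathrm{opt}}$ with $Q_i=F_i(x_{\mathrm{opt}})\ge 2^{\epsilon}$. I will choose the grid point
\[
l_i=\lfloor \log_2 Q_i\rfloor - l^* .
\]
\begin{itemize}
\item \emph{It lies in $\mathcal{P}$.} Since $Q_i\ge 2^{\epsilon}$, we get $\lfloor\log_2 Q_i\rfloor\ge\epsilon$ and hence $l_i\ge 1$. Since $Q_i\le 2^{|y_i|}$, we get $l_i\le |y_i|$. This is exactly where the footnote's assumption $Q_i\ge 2^{\epsilon}$ is used.
\item \emph{The oracle must accept.} We have $Q_i\ge 2^{\lfloor\log_2 Q_i\rfloor}=2^{l_i+l^*}$ for all $i$. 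So $x_{\mathrm{opt}}$ witnesses the ``guaranteed SAT'' case, and under $E$ the oracle returns \texttt{True}. Thus some $(x^*,p)$ with $p_i=2^{l_i}$ is placed in $\mathcal{F}$.
\item \emph{It survives into $\mathcal{F}_{\mathrm{dom}}$.} I read the filter as removing strictly dominated $p$. Since $\mathcal{F}$ is finite, either $(x^*,p)$ itself is kept, or some $(x',p')\in\mathcal{F}_{\mathrm{dom}}$ has $p'\ge p$ element-wise.
\end{itemize}
In either case the surviving tuple satisfies
\[
2^{\epsilon}p'_i\ \ge\ 2^{\epsilon}2^{l_i}\ =\ 2^{\lfloor\log_2 Q_i\rfloor+1}\ >\ Q_i ,
\]
using $\epsilon=l^*+1$.

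\emph{Second bullet.} Take any $(x,p)\in\mathcal{F}_{\mathrm{dom}}$ with $p=(2^{l_1},\dots,2^{l_k})$. The oracle returned $(\texttt{True},x)$ at this grid point. Under $E$ this cannot happen in the ``guaranteed UNSAT'' case, so we are in case~2 or case~3. Both guarantee
\[
F_i(x)\ge 2^{l_i-l^*}\quad\text{for all } i .
\]
Because $\{0,1\}^n$ is finite, $x$ is either Pareto-optimal or dominated by some Pareto-optimal $x_{\mathrm{opt}}$. In both cases $Q_i=F_i(x_{\mathrm{opt}})\ge F_i(x)\ge 2^{l_i-\epsilon+1}$. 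Hence $2^{\epsilon}Q_i\ge 2p_i\ge p_i$.

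The main obstacle is bookkeeping rather than a deep step. Specifically, I need to (i) verify that the rounded-down-then-shifted index $l_i$ really lies in the grid range $[0,|y_i|]$, and (ii) handle the definition of $\mathcal{F}_{\mathrm{dom}}$, which should be read as excluding the tuple itself and exact ties. With that reading every element of $\mathcal{F}$ is dominated by some element of $\mathcal{F}_{\mathrm{dom}}$, and the first bullet goes through.
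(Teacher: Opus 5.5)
Your proof is correct and follows essentially the same route as the paper. You condition on the union-bound event over all $|\mathcal{P}|$ oracle calls. For the first bullet you use the grid point $l_i=\lfloor\log_2 Q_i\rfloor-l^*$ with the Guaranteed SAT case plus the pruning argument; for the second you use the $2^{l_i-l^*}$ lower bound from the SAT/intermediate cases and pass to a dominating Pareto-optimal point. Your extra bookkeeping is left implicit in the paper: you check that this $l_i$ lies in the grid range, which is exactly where the footnote's $Q_i\ge 2^{\epsilon}$ is needed, and you note that the literal definition of $\mathcal{F}_{\mathrm{dom}}$ must be read as excluding the tuple itself.
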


\begin{proof}
We analyze Algorithm~\ref{alg:main} using the SAT oracle specified in Definition~\ref{def:sat-oracle}.  
Define the grid
\[
\mathcal{P} = \{(2^{l_1},\ldots,2^{l_k}) \mid 0\le l_i\le |y_i|,~l_i\in\mathbb{Z},~\forall i\}.
\]
and $l^*=\epsilon-1$ and $\eta=\delta/|\mathcal{P}|$. For each grid point $p=(2^{l_1},\ldots,2^{l_k})\in\mathcal{P}$, one call
\[
(\text{isSat},x^*) \gets \texttt{SAT-Oracle}(f_1,\ldots,f_k; l_1,\ldots,l_k, l^*, \eta),
\]
satisfies the guarantee stated in Definition \ref{def:sat-oracle} with probability at least $1-\eta$.

Define probabilistic event~\eqref{eq:eventE} as 
one in which oracle calls at all the grid points $p\in\mathcal{P}$ satisfy Definition~\ref{def:sat-oracle}. 
By the union bound, the probability that event~\eqref{eq:eventE} happens is at least
\[
\Pr\big(\text{Event (E) happens}\big) \ge
1 - \sum_{p\in\mathcal{P}}\eta = 1-\delta.
\tag{E}
\label{eq:eventE}
\]

The following discussions are conditioned in probabilistic scenarios in which event~\eqref{eq:eventE} happens:

\begin{enumerate}
    \item[(a)]\emph{(Every Pareto-optimal solution is $2^\epsilon$-dominated by a solution in $\mathcal{F}_{dom}$):}
Fix any Pareto-optimal $x_{\mathrm{opt}}$ with corresponding objective values
\[
Q_i := \sum_{y_i} f_i(x_{\mathrm{opt}},y_i), \qquad i=1,\ldots,k,
\]
Define
\[
q_i:=\lfloor \log_2 Q_i\rfloor, \qquad
l_i := q_i-l^*,\qquad p_i:=2^{l_i},\qquad p=(p_1,\ldots,p_k).
\]
Then $2^{q_i} \le Q_i < 2^{q_i+1}$, hence
\[
2^{l_i+l^*} = 2^{q_i} \le Q_i < 2^{q_i+1} = 2^{l_i+l^*+1}.
\]
Since $Q_i \ge 2^{l_i+l^*}$ for all $i$, condition~\emph{Guaranteed SAT} in Definition~\ref{def:sat-oracle} is met with input $\{l_i\}_{i=1}^k$ and $l^*$, so the SAT oracle returns $(\texttt{True},x)$ for some $x$, and $(x,p)$ will be included in $\mathcal{F}$.  
From the inequality above and $l^*=\epsilon-1$, we obtain
\[
Q_i \le 2^{\epsilon} p_i, \quad \forall i.
\]
If $(x,p)$ is removed in the final pruning step (Algorithm~\ref{alg:main} line 10), then there exists $(x',p')\in\mathcal{F}_{\mathrm{dom}}$ with $p'\ge p$ element-wise, which still guarantees
\[
Q_i \le 2^{\epsilon} p'_i, \quad \forall i.
\]
\item[(b)] \emph{(Every point in $\mathcal{F}_{dom}$ is $2^\epsilon$-dominated by a Pareto-optimal solution):}
Fix $(x,p)\in\mathcal{F}_{\mathrm{dom}}$ with $p_i=2^{l_i}$.  
By the \emph{Guaranteed SAT} and \emph{Intermediate case} guarantee in Definition~\ref{def:sat-oracle} and conditioning on event~\eqref{eq:eventE}, we have
\[
\sum_{y_i} f_i(x,y_i) \ge 2^{l_i-l^*}, \quad \forall i.
\]
Thus there exists a Pareto-optimal solution $x_{\mathrm{opt}}$ with objectives $Q_i \ge 2^{l_i-l^*}$. Multiplying both sides by $2^{l^*+1}=2^\epsilon$ yields
\[
2^{\epsilon} Q_i \ge 2^{\epsilon}(2^{l_i-l^*}) = 2^{l_i+1} \ge p_i,
\]
as required.
\end{enumerate}
This proves both directions of the theorem under event~\eqref{eq:eventE}, which occurs with probability at least $1-\delta$.
\end{proof}

\begin{theorem}[Approximate Pareto frontier]
    \label{thm:pareto-solution}
    Fix an error bound $\delta \in (0,1)$ and an approximation factor $2^{\epsilon}$ with $\epsilon \geq 3$. 
    Let $\mathcal{F}_{\mathrm{dom}} \subseteq \{0,1\}^n \times \mathbb{R}_{\ge 0}^k$ denote the set of tuples $(x,p)$ returned by Algorithm~\ref{alg:main}. 
    Then, with probability at least $1-\delta$, the set of assignments $\{ x \in \{0,1\}^n : (x,p)\in\mathcal{F}_{\mathrm{dom}}\}$ constitutes a $2^{2\epsilon - 1}$-approximate Pareto frontier.
\end{theorem}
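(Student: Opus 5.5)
We condition on the same event~\eqref{eq:eventE} as in the proof of Theorem~\ref{thm:pareto-curve}: every oracle call at every grid point of $\mathcal{P}$ satisfies Definition~\ref{def:sat-oracle}. With $\eta=\delta/|\mathcal{P}|$ and the union bound, this event has probability at least $1-\delta$. The rest of the argument is deterministic. We write $F_i(x)=\sum_{y_i} f_i(x,y_i)$.

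Fix an arbitrary Pareto-optimal $x_{\mathrm{opt}}$ with $Q_i=F_i(x_{\mathrm{opt}})$, assuming $Q_i\ge 2^{\epsilon}$ as in the footnote of Theorem~\ref{thm:pareto-curve}.

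\textbf{Step 1 (a returned point).} Reuse part (a) of the proof of Theorem~\ref{thm:pareto-curve}. Set $q_i=\lfloor\log_2 Q_i\rfloor$ and $l_i=q_i-l^*$. Since $Q_i\ge 2^{\epsilon}$ and $l^*=\epsilon-1$, we get $l_i\ge 1$, so the point lies in $\mathcal{P}$. By the Guaranteed SAT clause, the oracle at $p=(2^{l_1},\dots,2^{l_k})$ returns True. Hence $(x,p)\in\mathcal{F}$, and by the pruning step there is some $(x',p')\in\mathcal{F}_{\mathrm{dom}}$ with $p'\ge p$ elementwise.

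\textbf{Step 2 (lower bound on the true values of $x'$).} The tuple $(x',p')$ was produced by a True answer at grid point $p'=(2^{l'_1},\dots,2^{l'_k})$. Under event~\eqref{eq:eventE}, in both the Guaranteed SAT and Intermediate clauses, the returned solution satisfies $F_i(x')\ge 2^{l'_i-l^*}$ for every $i$. Note that the UNSAT clause cannot be the operative clause, since a True answer was returned. Combining $l'_i\ge l_i$ with this bound gives
\[
F_i(x')\ \ge\ 2^{l_i-l^*}\ =\ 2^{q_i-2l^*}\ >\ \frac{Q_i}{2^{2l^*+1}}\ =\ \frac{Q_i}{2^{2\epsilon-1}},
\]
where the strict inequality uses $Q_i<2^{q_i+1}$.

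\textbf{Step 3 (conclusion).} Step 2 says $2^{2\epsilon-1}F_i(x')\ge Q_i$ for all $i$, i.e., $x'$ $2^{2\epsilon-1}$-dominates $x_{\mathrm{opt}}$ in the sense of Definition~\ref{def:approx_pareto}. Since $x_{\mathrm{opt}}$ was arbitrary, the set $\{x:(x,p)\in\mathcal{F}_{\mathrm{dom}}\}$ is a $2^{2\epsilon-1}$-approximate Pareto frontier with probability at least $1-\delta$.

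\textbf{Main obstacle.} The delicate point is Step 2. The solution kept in $\mathcal{F}_{\mathrm{dom}}$ may be $x'$ from a different, dominating grid point, not the $x$ found at $p$. We must therefore make sure the oracle guarantee attached to $x'$ refers to $p'$'s own thresholds, and that monotonicity $p'\ge p$ transfers the lower bound back to $p$. The factor $2^{2\epsilon-1}$ comes from accumulating two slacks. The first is $2^{l^*+1}$ from rounding $Q_i$ down and lowering the threshold by $l^*$. The second is a further $2^{l^*}$ from the relaxed guarantee on the returned solution. This gives $2^{2l^*+1}=2^{2\epsilon-1}$.
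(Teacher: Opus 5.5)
Your proof is correct and follows essentially the same route as the paper's. You condition on event~\eqref{eq:eventE}, round each $Q_i$ down to the grid point $l_i=\lfloor\log_2 Q_i\rfloor-l^*$, invoke the Guaranteed SAT clause, and transfer the bound $F_i(x')\ge 2^{l'_i-l^*}\ge 2^{l_i-l^*}$ from a surviving dominating tuple $p'\ge p$ to get the factor $2^{2l^*+1}=2^{2\epsilon-1}$. The only differences are cosmetic: you handle the surviving tuple uniformly rather than splitting into kept and pruned cases, and you state explicitly the grid-membership requirement (via the $Q_i\ge 2^{\epsilon}$ footnote assumption), which the paper's proof uses without saying so.
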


\begin{proof}
The proof also conditions on the event \eqref{eq:eventE}, which occurs with probability at least $1-\delta$. When this event occurs, the claims in Definition~\ref{def:sat-oracle} are assumed to hold for the probabilistic SAT oracle at all grid points $p \in \mathcal{P}$. We will prove that the returned set $\mathcal{F}_{\mathrm{dom}}$ can establish a $2^{2\epsilon - 1}$-approximate Pareto frontier.

Fix a Pareto-optimal solution $x_{\mathrm{opt}}$, and it achieves objectives:
\[
Q_i = \sum_{y_i} f_i(x_{\mathrm{opt}}, y_i), \qquad i=1,\dots, k
\]
and define 
\[
q_i:=\lfloor \log_2 Q_i\rfloor, \qquad
l_i := q_i-l^*,\qquad p_i:=2^{l_i},\qquad p=(p_1,\ldots,p_k).
\]
Then $2^{q_i} \le Q_i < 2^{q_i+1}$, hence
\[
2^{l_i+l^*} = 2^{q_i} \le Q_i < 2^{q_i+1} = 2^{l_i+l^*+1}.
\]
Since $Q_i \ge 2^{l_i+l^*}$ for all $i$, condition~\emph{Guaranteed SAT} in Definition~\ref{def:sat-oracle} is met with input $\{l_i\}_{i=1}^k$ and $l^*$, so the SAT oracle returns $(\texttt{True},x)$ for some $x$ satisfying
\[
\sum_{y_i} f_i(x,y_i) \ge 2^{l_i-l^*}, \quad \forall i.
\]
Relating this to $Q_i$, note that
\[
Q_i < 2^{l_i+l^*+1}
\Rightarrow
2^{l_i-l^*} \ge 2^{-2l^*-1}Q_i.
\]
Therefore
\[
\sum_{y_i} f_i(x,y_i) \ge 2^{-2l^*-1}Q_i, \quad \forall i.
\]
Or equivalently,
\[
2^{2l^*+1}\sum_{y_i} f_i(x,y_i) \ge Q_i, \quad \forall i.
\]
Therefore the Pareto-optimal solution $x_{\mathrm{opt}}$ is $2^{2l^*+1}$-dominated by $x$, and $(x,p)$ will be included in $\mathcal{F}$. 
However, $(x,p)$ might be excluded from the final output set $\mathcal{F}_{\text{dom}}$. If this happens, then there is instead a $(x', p') \in \mathcal{F}_{\text{dom}}$ where $p' \geq p$ element-wise, there is still the 
\begin{align*}
    &\sum_{y_i} f_i(x', y_i) \geq p_i' 2^{-l^*} \geq 2^{l_i - l^*} \geq 2^{-2l^*-1} Q_i \\
    &\Leftrightarrow 2^{2l^*+1} \sum_{y_i}  f_i(x', y_i) \geq Q_i,
    \qquad i=1,\dots,k
\end{align*}
so we can conclude each Pareto-optimal solution is $2^{2l^*+1}$-dominated by an $x$ where $(x, p) \in \mathcal{F}_{\text{dom}}$. 
Because $l^*=\epsilon-1$, the factor simplifies to $2^{2\epsilon-1}$.  
Thus each Pareto-optimal solution $x_{\mathrm{opt}}$ is $2^{2\epsilon-1}$-dominated by some $x$ returned by the oracle. Equivalently, the returned set of assignments $\{ x \in \{0,1\}^n : (x,p)\in\mathcal{F}_{\mathrm{dom}}\}$ constitutes a $2^{2\epsilon - 1}$-approximate Pareto frontier.    
\end{proof}

\subsection{Step 3: Probabilistic SAT Oracle Implementation}
\label{sec:sat-oracle}
In this section, we present the implementation of the probabilistic SAT oracle introduced in Definition~\ref{def:sat-oracle}. It consists of two main steps:
(1) \textbf{Amplifying XOR Counting Success Probability}: enhancing the XOR-based counting method described in Section~\ref{sec:prelim-xor} to estimate the model count $\sum_y f(x_0, y)$ with \emph{arbitrarily} high success probability; and
(2) \textbf{Implementing a SAT oracle}: going beyond merely model counting estimation for a fixed $x_0$: instead, we implement a SAT oracle that answers whether there exists an $x_0$ achieving a given target threshold and, if so, returns a corresponding satisfying assignment $x_0$.

\subsubsection{Amplifying XOR Counting Success Probability}

We begin by showing how to implement a more reliable model counter that amplifies the success probability of XOR counting (Section~\ref{sec:prelim-xor}). The key idea is as follows. Recall the basic \texttt{XOR-Counting} oracle (Algorithm~\ref{alg:xor-counting}), which constructs a Boolean formula with random XOR constraints whose satisfiability distinguishes between the cases
\[
\sum_{y} f(x_0,y) \ge 2^{\,l+l^*}
\quad \text{and} \quad
\sum_{y} f(x_0,y) \le 2^{\,l-l^*},
\]
with a constant error probability of
\(\tfrac{2^{l^*}}{(2^{l^*}-1)^2}\).
To reduce the error probability while keeping the uncertainty gap $l^*$ fixed, we apply a majority-voting scheme: we generate multiple independent Boolean formulas and aggregate them using a majority vote (Algorithm~\ref{alg:amp-xor-sat}). The satisfiability result of this aggregated formula estimates the model count with high probability; moreover, the probability of correctness increases with the number of voters (Lemma~\ref{lem:amp}).

For the detailed implementation, let
\[
\tau = \ln\left(\tfrac{1}{\eta}\right),
\]
which we refer to as the \emph{confidence parameter}. All logarithmic dependencies on the target error probability $\eta$ will be expressed in terms of $\tau$, which highlights that amplification only incurs a logarithmic overhead.

The complete algorithm is shown in Algorithm~\ref{alg:amp-xor-sat}, and Lemma~\ref{lem:amp} is the formal justification. 

\begin{algorithm}[!h]
\caption{$\texttt{Amplified-XOR-Counting}(f,l,l^*,\tau)$}
\label{alg:amp-xor-sat}
\begin{algorithmic}
\Require Boolean formula $f$; number of XOR constraints $l$; error gap $l^*$, confidence parameter $\tau$
\State $m \gets \left \lceil \frac{2p}{(p-\frac{1}{2})^2} \tau \right \rceil$, where $p = 1 - \frac{2^{l^*}}{(2^{l^*}-1)^2}$
\For{$i = 1,\ldots,m$}
    \State Randomly sample $\mathtt{XOR}_1(y^{(i)}), \ldots, \mathtt{XOR}_l(y^{(i)})$
    \State $\psi_i(x, y^{(i)}) \gets f(x, y^{(i)}) \land \mathtt{XOR}_1(y^{(i)}) \land \dots \land \mathtt{XOR}_l(y^{(i)})$
\EndFor
\State $\Psi(x, y^{(1)}, \dots, y^{(m)}) \gets \mathtt{Majority} (\psi_1,\dots, \psi_m)$
\State \Return $\Psi$
\end{algorithmic}
\end{algorithm}

\begin{lemma}[XOR Counting Probability Amplification]
\label{lem:amp}
Let $f(x, y)$ be a Boolean function, and let $l, l^* \in \mathbb{Z}_{\ge 0}$ with $l^* \ge 2$.  
For any error probability $\eta \in (0,1)$, let 
\[
\Psi(x, y^{(1)}, \dots, y^{(m)}) \gets \texttt{Amplified-XOR-Counting}(f, l, l^*, \tau),
\quad \text{where } \tau = \ln \tfrac{1}{\eta}.
\]
Fix an assignment $x$ at $x_0 \in \{0,1\}^n$. Then, with probability at least $1 - \eta$, the following holds:
\begin{itemize}[align=left, leftmargin=0pt, labelwidth=0pt, itemindent=!]
    \item If $\sum_{y} f(x_0, y) \ge 2^{l + l^*}$, then $\Psi(x_0, y^{(1)}, \dots, y^{(m)})$ is satisfiable for some $y^{(1)}, \dots, y^{(m)}$.
    \item If $\sum_{y} f(x_0, y) \le 2^{l - l^*}$, then $\Psi(x_0, y^{(1)}, \dots, y^{(m)})$ is unsatisfiable for all $y^{(1)}, \dots, y^{(m)}$.
\end{itemize}
\end{lemma}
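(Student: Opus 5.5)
Fix $x_0$ and view $\Psi(x_0,\cdot)$ as the majority of the $m$ independent formulas $\psi_i(x_0,y^{(i)})$. Because the $\psi_i$ share no latent variables and $x_0$ is fixed, $\Psi(x_0,y^{(1)},\dots,y^{(m)})$ is satisfiable for some $y^{(1)},\dots,y^{(m)}$ if and only if more than $m/2$ of the individual formulas $\psi_i(x_0,y^{(i)})$ are satisfiable. In that case we pick satisfying $y^{(i)}$ for those indices and arbitrary values for the rest. Conversely, if at most $m/2$ are satisfiable, no assignment can make a majority true. This reduces the claim to a statement about the number $S=\sum_{i=1}^m Z_i$ of satisfiable $\psi_i$, where $Z_i$ is the indicator that $\texttt{XOR-Counting}(f,l,x_0)$ returns \texttt{True} on the $i$-th independent sample of XOR constraints.

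Next we apply Lemma~\ref{lem:xor-counting} separately in the two cases. If $\sum_y f(x_0,y)\ge 2^{l+l^*}$, each $Z_i$ is an independent Bernoulli variable with success probability at least $p=1-\tfrac{2^{l^*}}{(2^{l^*}-1)^2}$. If $\sum_y f(x_0,y)\le 2^{l-l^*}$, each $1-Z_i$ is Bernoulli with success probability at least $p$. We note that $l^*\ge 2$ gives $\tfrac{2^{l^*}}{(2^{l^*}-1)^2}\le 4/9$, so $p\ge 5/9>1/2$; the gap $p-\tfrac12$ is therefore positive and $m$ is well defined. If neither case holds there is nothing to prove, so the statement is exactly the claim that, in each case, the majority vote errs with probability at most $\eta$. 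By monotonicity (stochastic domination), it suffices to treat the worst case in which the correct-answer indicators are i.i.d.\ Bernoulli$(p)$.

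The main step is a Chernoff bound on the number $W$ of correct voters. Its mean is $\mu=mp$, and the vote fails only if $W\le m/2=(1-\beta)\mu$ with $\beta=\tfrac{p-1/2}{p}$. The multiplicative lower-tail Chernoff bound gives
\[
\Pr\bigl[W\le (1-\beta)\mu\bigr]\le \exp\!\bigl(-\beta^2\mu/2\bigr)=\exp\!\Bigl(-\tfrac{(p-\frac12)^2}{2p}\,m\Bigr).
\]
With $m\ge \tfrac{2p}{(p-\frac12)^2}\tau$, the exponent is at least $\tau=\ln\tfrac1\eta$, so the failure probability is at most $\eta$. This is exactly where the choice of $m$ in Algorithm~\ref{alg:amp-xor-sat} comes from, and matching that constant is the step I expect to need care: one must use the $\beta^2\mu/2$ form of the bound rather than Hoeffding's bound $e^{-2m(p-1/2)^2}$, which would also suffice but is not the form the algorithm's constant reflects. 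Finally, the boundary case of ties at exactly $m/2$ is handled by the non-strict inequality in the bound.
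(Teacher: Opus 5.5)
Your proposal is correct and follows the paper's proof. You apply Lemma~\ref{lem:xor-counting} to each independently sampled voter to get success probability at least $p>1/2$, then use the multiplicative lower-tail Chernoff bound with $\beta=(p-\tfrac12)/p$, which reproduces exactly the paper's bound $\exp\bigl(-\tfrac{(p-\frac12)^2}{2p}m\bigr)$ and hence the algorithm's choice of $m$. Two of your steps are left implicit in the paper: that disjoint latent copies make satisfiability of $\Psi(x_0,\cdot)$ equivalent to a majority of individually satisfiable $\psi_i$, and the stochastic-domination argument for per-voter probabilities at least $p$.
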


\begin{proof}
The main idea is that Algorithm~\ref{alg:amp-xor-sat} produces a Boolean formula $\Psi(x, y^{(1)}, \dots, y^{(m)})$, which takes the majority vote of $m \in \mathbb{Z}_{\ge 0}$ Boolean formulae $\psi_i(x, y^{(i)})$ for $i = 1, \dots, m$ independently generated by Algorithm~\ref{alg:xor-counting}.    
By aggregating the satisfiability of each $\psi_i$ through a majority vote, the error probability can be reduced arbitrarily. The formal proof is as follows.

Suppose the fixed $x_0$ satisfies $\sum_{y} f(x_0, y) \ge 2^{l + l^*}$, then by Lemma~\ref{lem:xor-counting}, $\psi_i(x_0,y^{(i)})$ is satisfiable for some $y^{(i)}$ with a probability at least
\[
p = 1 - \frac{2^{l^*}}{(2^{l^*}-1)^2}.
\]  
Since $l^* \geq 2$, the probability $p > 1/2$. The probability of $\Psi(x_0, y^{(1)}, \dots, y^{(m)})$ is satisfiable for some $y^{(1)}, \dots, y^{(m)}$ can be bounded by the Chernoff bound: 
the probability that fewer than half of the $\{\psi_1,\dots, \psi_m\}$ are correct is bounded as
\begin{align*}
& \Pr\left( \text{$\Psi(x_0, y^{(1)}, \dots, y^{(m)})$ is satisfiable}\right) \\
&= \Pr\left( \text{The majority of $\{\psi_i(x_0,y^{(i)})\}_{i=1}^m$ are satisfiable for some $y^{(i)}$} \right)\\
&=\Pr\left(\sum_{i=1}^m I(\text{$\psi_i(x_0,y^{(i)})$ is satisfiable for some $y^{(i)}$}) > \frac{m}{2}\right) \\
&\geq 1 - \exp \left(-\tfrac{(p-\frac{1}{2})^2}{2p} m\right).
\end{align*}
Choosing 
\[
m \geq \frac{2p}{(p-\tfrac{1}{2})^2} \tau, \mbox{ where } \tau = \ln\tfrac{1}{\eta},
\]
ensures the probability is at least $1 - \eta$. 

Similarly, if the fixed $x_0$ satisfies $\sum_{y} f(x_0, y) \leq 2^{l - l^*}$, $\Psi(x_0, y^{(1)}, \dots, y^{(m)})$ is unsatisfiable for all $y^{(1)}, \dots, y^{(m)}$ with probability at least $1-\eta$.
\end{proof}

This amplification scheme reduces the error probability to any desired value $\eta$, while requiring a majority vote among $O(\tau)$ Boolean SAT problems, where $\tau = \ln(1/\eta)$.
\texttt{Amplified-XOR-Counting} requires implementing a majority operator.
We implement it using auxiliary indicator variables and a linear cardinality constraint.
Given Boolean formulas $\{\psi_1,\dots,\psi_m\}$, we introduce binary variables $b_i \in \{0,1\}$ such that $b_i = 1$ if and only if $\psi_i$ is satisfied.
This relationship is enforced via biconditional constraints $b_i \Leftrightarrow \psi_i$.
The majority condition is expressed as
$
\sum_{i=1}^m b_i \ge \lceil m/2 \rceil,
$
which requires at least half of the formulas to be satisfiable.
We encode all constraints using mixed-integer programming (MIP). 

Implementing majority logic in satisfiability has been extensively studied. Prior work proposes native majority-logic encodings within propositional logic \cite{pacuit2004majority,amaru2014majority}, as well as SAT solvers specialized for majority logic \cite{chou2016majorsat}.
While our MIP-style approach does not aim to outperform these Boolean encodings, it provides a simple and modular implementation that integrates naturally with existing frameworks (e.g., \textsc{CPLEX}).

\subsubsection{\texttt{XOR-SAT} Oracle}
The amplified XOR-counting oracle above generates a Boolean formula whose satisfiability can be used to estimate whether the model count is above or below a given threshold by an uncertainty margin, i.e., whether $\sum_y f(x_0, y) \ge 2^{l+l^*}$ or $\sum_y f(x_0, y) \le  2^{l-l^*}$ for a fixed $x_0$.

We now utilize this tool to implement an SAT oracle that answers a stronger query: does there exist such an $x_0$ that simultaneously satisfies all objective thresholds?  
Formally, the SAT oracle must handle multiple model-counting terms to accommodate multiple objectives.  
For $k$ objective functions defined over the model counts of Boolean functions $f_1, \dots, f_k$, with thresholds $2^{l_1}, \dots, 2^{l_k}$ and approximation gap $2^{l^*}$, we aim to distinguish between
\[
\exists x, \forall i: \sum_{y_i} f_i(x, y_i) \ge 2^{l_i + l^*}
\qquad\text{vs.}\qquad
\forall x, \exists i: \sum_{y_i} f_i(x, y_i) \le 2^{l_i - l^*}.
\]

In the SMOO setting, each objective $\sum f_i$ corresponds to its own model-counting problem.  
The oracle checks whether there exists a decision $x$ that simultaneously achieves sufficiently large counts ($2^{l_i}$) across all $k$ objectives.  
Intuitively, the first case asserts the existence of a ``universally strong'' decision $x$ that satisfies the higher thresholds $2^{l_i + l^*}$, while the second case states that no such decision exists: for every one $x$, there must exist one objective $f_i$, such that the model count $\sum_{y_i} f_i(x, y_i) \le 2^{l_i - l^*}$. 
The gap $2^{l^*}$ provides a buffer between these two regimes, preventing ambiguity due to the randomness of XOR counting. The oracle, \texttt{XOR-SAT}, is implemented in Algorithm~\ref{alg:xor-sat}. It returns either $(\mathtt{True}, x^*)$, indicating that there exists an assignment $x^*$ whose objective values ``approximately'' meet all specified thresholds, or $(\mathtt{False}, \bot)$, indicating that no such assignment $x^*$ exists. The formal proof is in Theorem~\ref{thm:xor-sat}.

\begin{algorithm}[!h]
\caption{$\texttt{XOR-SAT}(f_1,\dots,f_k,l_1,\dots,l_k, l^*,\eta)$}
\label{alg:xor-sat}
\begin{algorithmic}[1]
\Require Objectives $\{f_i\}_{i=1}^k$; thresholds $\{l_i\}_{i=1}^k$; gap $l^*$; target error $\eta$
\State $\tau \gets \max\{\ln k,n\ln 2\}+\ln \frac{2}{\eta}$
\For{$i=1,\ldots,k$}
  \State $\Psi_i(x, y_i^{(1)}, \dots, y_i^{(m)}) \gets \texttt{Amplified-XOR-Counting}(f_i, l_i, l^*, \tau)$
\EndFor
\If{ $\exists~(x^*, \{y_i^{(1)}\}_{i=1}^k,\dots, \{y_i^{(m)}\}_{i=1}^k)$ s.t. $\bigwedge_{i=1}^k \Psi_i(x^*, y_i^{(1)}, \dots, y_i^{(m)})=\mathtt{True}$}
  \State \Return $(\mathtt{True}, x^*)$
\Else
  \State \Return $(\mathtt{False}, \bot)$
\EndIf
\end{algorithmic}
\end{algorithm}

\begin{lemma}[\texttt{XOR-SAT} Solution Quality Guarantee]
\label{lem:quality}
Let $l^* \ge 2$ and $0 < \eta < 1$.  
Run $\texttt{XOR-SAT}(\{f_i\}_{i=1}^k, \{l_i\}_{i=1}^k, l^*, \eta)$, which returns either $(\mathtt{True}, x^*)$ or $(\mathtt{False}, \bot)$.  
Then, with probability at least $1 - \tfrac{\eta}{2}$, the oracle returns either $(\mathtt{False}, \bot)$ or $(\mathtt{True}, x^*)$. When it returns $(\mathtt{True}, x^*)$, it must satisfy:
\[
\sum_{y_i} f_i(x^*, y_i) \geq 2^{l_i - l^*}, \quad \forall i = 1,\ldots,k.
\]
\end{lemma}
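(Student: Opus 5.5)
The plan is to bound the probability of the bad event, namely that some ``bad'' assignment survives. Call $x$ \emph{bad} if there exists an index $i$ with $\sum_{y_i} f_i(x,y_i) < 2^{l_i-l^*}$. The oracle can only violate the stated guarantee by returning $(\mathtt{True},x^*)$ with $x^*$ bad. So it suffices to show
\[
\Pr\bigl(\exists\, x \text{ bad such that } \textstyle\bigwedge_{i=1}^k \Psi_i(x,\cdot) \text{ is satisfiable}\bigr)\le \tfrac{\eta}{2}.
\]
On the complement of this event, either the oracle returns $(\mathtt{False},\bot)$, or any returned $x^*$ is not bad. The latter is exactly the required inequality for all $i$.

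First, fix a single assignment $x_0$. If $x_0$ is bad, pick an index $i$ witnessing this, so that $\sum_{y_i} f_i(x_0,y_i) < 2^{l_i-l^*}\le 2^{l_i-l^*}$. Lemma~\ref{lem:amp}, applied to $f_i$ with threshold $l_i$ and confidence parameter $\tau$, gives that $\Psi_i(x_0,\cdot)$ is unsatisfiable for all $y_i^{(1)},\dots,y_i^{(m)}$ with probability at least $1-e^{-\tau}$. Because the conjunction requires every $\Psi_i$ to hold, $x_0$ survives the conjunction with probability at most $e^{-\tau}$. A subtle point is that the event ``$\Psi_i(x_0,\cdot)$ satisfiable'' depends only on the random XORs drawn for objective $i$. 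For fixed $x_0$ this is exactly the single-$x_0$ event analyzed in Lemma~\ref{lem:amp}, so no independence across different $x$ is needed.

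Second, take a union bound over all $x_0\in\{0,1\}^n$, of which there are at most $2^n$ bad ones. This bounds the failure probability by $2^n e^{-\tau}$. With $\tau = \max\{\ln k, n\ln 2\} + \ln\frac{2}{\eta}$ we have $e^{-\tau}\le 2^{-n}\cdot\frac{\eta}{2}$, so the total is at most $\frac{\eta}{2}$. If one instead unions over both $x$ and the witnessing index $i$, the count is $k2^n$, and the $\max\{\ln k, n\ln 2\}$ term is designed to absorb a factor of this kind. I would check that the bound $\max\{k,2^n\}\cdot 2^{-n}\le 1$ suffices for the per-$x$ union. Otherwise I would argue per-$x$ with a single witness $i$, which avoids the factor $k$ entirely.

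The main obstacle is this union-bound step. One must carefully justify that Lemma~\ref{lem:amp} applies uniformly to each fixed $x_0$, even though the XOR samples are shared across all $x$. The resolution is that the union bound does not require independence, only the per-$x_0$ marginal bound. A second concern is the boundary case: Lemma~\ref{lem:amp} is stated for $\le 2^{l-l^*}$, and the strict inequality defining a bad $x$ is covered by it.
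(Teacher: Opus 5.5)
Your proposal is correct and matches the paper's proof essentially step for step: fix a bad $x$, pick a single witnessing index $i^\star$, use Lemma~\ref{lem:amp} to bound the probability that $\Psi_{i^\star}(x,\cdot)$ is satisfiable by $e^{-\tau}$, and union-bound over the at most $2^n$ bad assignments to get $2^n e^{-\tau}\le \tfrac{\eta}{2}$. Your fallback of one witness per $x$ is in fact necessary, since a union over pairs $(x,i)$ would only give $\min\{k,2^n\}\cdot\tfrac{\eta}{2}$ with this $\tau$; the $\ln k$ term in $\tau$ is instead used in the guaranteed-SAT case of Theorem~\ref{thm:xor-sat}.
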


\begin{proof}[Proof of Lemma~\ref{lem:quality}.]
We will prove an equivalent claim: with probability at least $1 - \tfrac{\eta}{2}$, the oracle \textbf{does not return} any $(\mathtt{True}, x^*)$ such that
\[
\sum_{y_i} f_i(x^*, y_i) < 2^{l_i - l^*} \quad \text{for some } i \in \{1, \ldots, k\}.
\]
In other words, any $x^*$ that violates at least one threshold $2^{l_i}$ by a margin of $2^{l^*}$ will not be returned. Let's denote the set of such $x^*$ as
\[
\mathcal{X}^- = \Bigl\{x \in \{0,1\}^n : \exists i \text{ such that } \sum_{y_i} f_i(x,y_i) < 2^{l_i-l^*}\Bigr\}.
\]

Algorithm~\ref{alg:xor-sat} sets
\[
\tau \gets \max\{\ln k, n \ln 2\} + \ln \tfrac{2}{\eta},
\qquad
\eta' = e^{-\tau} \le \min\!\left\{\tfrac{\eta}{2k}, \tfrac{\eta}{2^{n+1}}\right\}.
\]
 
For any fixed $x^- \in \mathcal{X}^-$, let $i^\star$ be an index such that
\[
\sum_{y_{i^\star}} f_{i^\star}(x^-, y_{i^\star}) < 2^{l_{i^\star}-l^*}.
\]
Then the probability of returning $x^-$ is bounded by
\begin{align*}
& \Pr\bigl(\texttt{XOR-SAT returns }(\mathtt{True},x^-)\bigr) \\
&\le \Pr\Bigl(\Psi_{i^\star}(x^-,y_{i^\star}^{(1)}, \dots, y_{i^\star}^{(m)}) \text{~is satisfiable for some~} (y_{i^\star}^{(1)}, \dots, y_{i^\star}^{(m)}) \Bigr) \\
&\le \eta'.    
\end{align*}

By the union bound, the probability of returning any $x \in \mathcal{X}^-$ is at most
\begin{align*}
&\Pr\Big(\exists x\in \mathcal{X}^-,~\text{\texttt{XOR-SAT} returns $(\texttt{True},x)$}\Big) \\
& = \Pr\Big( \bigcup_{x\in \mathcal{X}^-} \text{\texttt{XOR-SAT} returns $(\texttt{True},x)$} \Big) \\
&\le \sum_{x\in\mathcal{X}^-} \Pr\Big(\text{\texttt{XOR-SAT} returns $(\texttt{True},x)$} \Big) \\
& \le \sum_{x\in\{0,1\}^n} \eta' = 2^n \eta' \le \frac{\eta}{2},
\end{align*}
Thus, with probability at least $1 - \tfrac{\eta}{2}$, \texttt{XOR-SAT} does not return any $(\mathtt{True}, x)$ where $x \in \mathcal{X}^-$. Equivalently, \texttt{XOR-SAT} returns either $(\mathtt{False}, \bot)$ or $(\mathtt{True}, x^*)$ where
\[
\sum_{y_i} f_i(x^*, y_i) \geq 2^{l_i - l^*}, \quad \forall i = 1,\ldots,k.
\]
\end{proof}

\begin{theorem}[\texttt{XOR-SAT} Oracle Properties]
\label{thm:xor-sat}
Let $l^* \ge 2$ and $0 < \eta < 1$.  
Run $\texttt{XOR-SAT}(\{f_i\}_{i=1}^k, \{l_i\}_{i=1}^k, l^*, \eta)$, which returns either $(\mathtt{True}, x^*)$ or $(\mathtt{False}, \bot)$.  
Then the oracle satisfies the following properties:
\begin{enumerate}
    \item \textbf{(Guaranteed UNSAT for high thresholds)}  
    If for all $x \in \{0,1\}^n$,
    \[
    \sum_{y_i} f_i(x, y_i) < 2^{l_i - l^*} \quad \text{for some } i \in \{1,\ldots,k\},
    \]
    then the oracle returns $(\mathtt{False}, \bot)$ with probability at least $1-\eta$.
    
    \item \textbf{(Guaranteed SAT for low thresholds)}  
    If there exists $x \in \{0,1\}^n$ such that
    \[
    \sum_{y_i} f_i(x, y_i) \ge 2^{l_i + l^*}, \quad \forall i = 1,\ldots,k,
    \]
    then the oracle returns $(\mathtt{True}, x^*)$ for some $x^* \in \{0,1\}^n$ satisfying
    \[
    \sum_{y_i} f_i(x^*, y_i) \ge 2^{l_i - l^*}, \quad \forall i = 1,\ldots,k.
    \]
    with probability at least $1-\eta$.
    
    \item \textbf{(Intermediate case)} Otherwise, with probability at least $1 - \eta$, the oracle returns either $(\texttt{False}, \bot)$ or $(\mathtt{True}, x^*)$ such that
    \[
    \sum_{y_i} f_i(x^*, y_i) \ge 2^{l_i - l^*} \quad \text{for all } i \in \{1,\ldots,k\}.
    \]
\end{enumerate}
\end{theorem}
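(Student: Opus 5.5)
The plan is to derive all three properties from Lemma~\ref{lem:quality} together with one additional ``good event'' concerning a single witness $x$. Each property then costs at most $\eta/2+\eta/2=\eta$ by a union bound. Throughout, we use that Algorithm~\ref{alg:xor-sat} runs \texttt{Amplified-XOR-Counting} with $\tau=\max\{\ln k,n\ln 2\}+\ln\frac{2}{\eta}$. By Lemma~\ref{lem:amp}, each $\Psi_i$ at any fixed $x_0$ then errs with probability at most $\eta'=e^{-\tau}\le\min\{\frac{\eta}{2k},\frac{\eta}{2^{n+1}}\}$.

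\emph{Intermediate case (property 3).} This is exactly the conclusion of Lemma~\ref{lem:quality}, which holds with probability at least $1-\eta/2\ge 1-\eta$.

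\emph{Guaranteed UNSAT (property 1).} Under the hypothesis, every $x\in\{0,1\}^n$ lies in the set $\mathcal{X}^-$ from the proof of Lemma~\ref{lem:quality}. On the event of that lemma, the oracle returns no $(\mathtt{True},x)$ with $x\in\mathcal{X}^-$. Since $\mathcal{X}^-$ is everything, it cannot return $\mathtt{True}$ at all, so it returns $(\mathtt{False},\bot)$ with probability at least $1-\eta/2$.

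\emph{Guaranteed SAT (property 2).} Fix a witness $\hat x$ with $\sum_{y_i}f_i(\hat x,y_i)\ge 2^{l_i+l^*}$ for all $i$. By Lemma~\ref{lem:amp} applied at $x_0=\hat x$, each $\Psi_i(\hat x,\cdot)$ is satisfiable except with probability $\eta'\le\eta/(2k)$. A union bound over the $k$ objectives shows that, with probability at least $1-\eta/2$, all of them are satisfiable simultaneously. The step needing care is that the $y_i^{(j)}$ variables for different $i$ are disjoint blocks, so individual satisfying assignments can be combined into one joint assignment making $\bigwedge_i\Psi_i(\hat x,\cdot)$ true. Hence the existential check in Algorithm~\ref{alg:xor-sat} succeeds and the oracle returns $(\mathtt{True},x^*)$ for some $x^*$, not necessarily $\hat x$. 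Intersecting with the event of Lemma~\ref{lem:quality} (failure at most $\eta/2$) guarantees that this $x^*$ satisfies $\sum_{y_i}f_i(x^*,y_i)\ge 2^{l_i-l^*}$ for all $i$. The total failure probability is at most $\eta$.

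The main obstacle is this second property, because the returned $x^*$ may differ from the witness $\hat x$. The guarantee on $x^*$ therefore cannot come from the per-$x$ counting bound alone; it must come from the uniform-over-all-$x$ control of Lemma~\ref{lem:quality}, which is why $\tau$ contains the $n\ln 2$ term. A minor point is that the hypotheses of Lemma~\ref{lem:amp} use ``$\le 2^{l-l^*}$'' while the theorem uses the strict ``$<$''. The strict inequality is stronger, so Lemma~\ref{lem:amp} applies directly.
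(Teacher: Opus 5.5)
Your proof is correct and follows the paper's argument. Properties 1 and 3 come directly from Lemma~\ref{lem:quality}. Property 2 combines a union bound over the $k$ objectives at the witness (cost $k\eta'\le\eta/2$) with Lemma~\ref{lem:quality} (cost $\eta/2$), which is exactly the paper's decomposition. Your justification that satisfiability of every $\Psi_i(\hat x,\cdot)$ on disjoint $y_i$-blocks forces a $\mathtt{True}$ return is in fact cleaner than the paper's detour through $\Pr(x_0 \text{ not returned})$ to bound $\Pr(\mathtt{False})$.
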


\begin{proof}[Proof of Theorem~\ref{thm:xor-sat}.]
Algorithm~\ref{alg:xor-sat} sets
\[
\tau \gets \max\{\ln k, n \ln 2\} + \ln \tfrac{2}{\eta},
\qquad
\eta' = e^{-\tau} \le \min\!\left\{\tfrac{\eta}{2k}, \tfrac{\eta}{2^{n+1}}\right\}.
\]

\noindent\textbf{(Guaranteed UNSAT for high thresholds).}  
If for all $x \in \{0,1\}^n$ we have
\[
\sum_{y_i} f_i(x, y_i) < 2^{l_i - l^*} \quad \text{for some } i,
\]
then, by Lemma~\ref{lem:quality}, \texttt{XOR-SAT} returns either $(\mathtt{False}, \bot)$ or $(\mathtt{True}, x^*)$ with probability at least $1 - \tfrac{\eta}{2} > 1 - \eta$, where any returned $x^*$ satisfies
\[
\sum_{y_i} f_i(x^*, y_i) \ge 2^{l_i - l^*}, \quad \forall i.
\]
Since no such $x^*$ exists, the oracle will instead return $(\mathtt{False}, \bot)$ with probability at least $1 - \eta$.

\medskip
\noindent\textbf{(Guaranteed SAT for low thresholds).}  
Suppose there exists $x_0 \in \{0,1\}^n$ such that
\[
\sum_{y_i} f_i(x_0, y_i) \ge 2^{l_i + l^*}, \quad \forall i.
\]
Thus, $x_0$ achieves all thresholds $2^{l_i}$ with a strong margin.  
We prove the claim by showing that the probability of its \emph{compliment event}, either returning $(\mathtt{False}, \bot)$ or returning $(\mathtt{True}, x)$ with $\sum_{y_i} f_i(x,y_i) < 2^{l_i-l^*}$ for some $i$, is at most $\eta$.

\begin{enumerate}
    \item \emph{(Returning \texttt{False})} \texttt{XOR-SAT} returning \texttt{False} requires that at least $x_0$ is not returned. Hence, we will examine the probability that $x_0$ is not returned as an upper bound of returning \texttt{False}.
    By Lemma~\ref{lem:amp},
    \[
    \Pr\Bigl( \Psi_i(x_0, y_i^{(1)},\dots, y_i^{(m)}) \text{~is unsatisfiable for all~}(y_i^{(1)},\dots, y_i^{(m)}) \Bigr) \le \eta', \quad \forall i.
    \]
    Hence
    \begin{align*}
        &\Pr(\text{\texttt{XOR-SAT} returns \texttt{False}}) \\
        \leq & \Pr\Big(\text{$x_0$ is not returned by \texttt{XOR-SAT}}\Big) \\
        \leq & \Pr\Big(\bigcup_{i=1}^k \Psi_i(x_0, y_i^{(1)},\dots, y_i^{(m)}) \text{~is unsatisfiable}\Big) \\
        \leq & \sum_{i=1}^k \Pr\Big(\Psi_i(x_0, y_i^{(1)},\dots, y_i^{(m)}) \text{~is unsatisfiable}\Big) \\
        \leq & k \eta' \leq \tfrac{\eta}{2}.
    \end{align*}

    \item \emph{(Returning an undesired $x$)}  
    By Lemma~\ref{lem:quality}, the probability of returning any $(\mathtt{True}, x)$ with
    \[
    \sum_{y_i} f_i(x, y_i) < 2^{l_i - l^*} \quad \text{for some } i
    \]
    is at most $\tfrac{\eta}{2}$.
\end{enumerate}

Combining these bounds, let  
$A := \{\texttt{XOR-SAT returns } \mathtt{False}\}$ and  
$B := \{\texttt{XOR-SAT returns }(\mathtt{True},x) \text{ with } \sum_{y_i} f_i(x,y_i) < 2^{l_i - l^*} \text{ for some } i\}$.  
We have $\Pr(A) \le \tfrac{\eta}{2}$ and $\Pr(B) \le \tfrac{\eta}{2}$.  
Therefore,
\[
\Pr(\neg A \wedge \neg B) \ge 1 - \Pr(A) - \Pr(B) \ge 1-\eta.
\]
On the event that neither $A$ nor $B$ happens, the oracle returns $(\mathtt{True}, x^*)$ with
\[
\sum_{y_i} f_i(x^*,y_i) \ge 2^{l_i-l^*}, \quad \forall i.
\]

\medskip
\noindent\textbf{(Intermediate case).}  
If no $x$ achieves the stronger bound $\sum_{y_i} f_i(x,y_i) \ge 2^{l_i+l^*}$ for all $i$, but there exist $x$ such that
\[
\sum_{y_i} f_i(x,y_i) \ge 2^{l_i-l^*}, \quad \forall i,
\]
then only Lemma~\ref{lem:quality} applies, ensuring that with probability at least $1-\eta$, the oracle does not return any solution below the thresholds $2^{l_i-l^*}$.
\end{proof}

\subsection{{Sizes of SAT Queries Solved by \method}}
Consider an SMOO problem defined in Equation~\eqref{eq:smoop}, where the $k$ objective functions are unweighted counts of the form $\sum_{\mathbf{y}_i} f_i(\mathbf{x}, \mathbf{y}_i)$ for $i = 1, \dots, k$, the $n$ decision variables are $\mathbf{x} \in \{0,1\}^n$, and, for each $i$, $\mathbf{y}_i \in \{0,1\}^{|\mathbf{y}_i|}$ denotes the set of latent variables.

\method (Algorithm~\ref{alg:main}) solves this SMOO problem by encoding it into SAT queries. Each query asks whether the following formula is satisfiable:
\begin{align}
    \bigwedge_{i=1}^k \Psi_i(\mathbf{x}, \mathbf{y}_i^{(1)}, \dots, \mathbf{y}_i^{(m)})
    \label{eq:uw-smoop-sat}
\end{align}
where
\[
    \Psi_i(\mathbf{x}, \mathbf{y}_i^{(1)}, \dots, \mathbf{y}_i^{(m)})
    = \mathtt{Majority}(\psi_i^{(1)}, \dots, \psi_i^{(m)}).
\]

\noindent\textbf{Variables.}
Each query determines satisfiability over variables $\mathbf{x}$ and $\{\{\mathbf{y}_i^{(j)}\}_{i=1}^k\}_{j=1}^m$, where $j=1,\dots,m$ represents that latent variables $|\mathbf{y}_i|$ are copied $m$ times.
In total, each query involves
\[
n + m \sum_{i=1}^k |\mathbf{y}_i|
\]
Boolean variables.

\noindent\textbf{Constraints.}
Each SAT query is encoded using a MIP-style formulation involving Boolean variables and linear constraints.
It consists of the following formulas and  constraints:
\begin{itemize}[leftmargin=15pt]
    \item \emph{Formulas Encoding XOR Counting.}
    There are $k \times m$ formulas of the form
    \[
    \psi_i^{(j)}(\mathbf{x}, \mathbf{y}_i^{(j)})
    = f_i(\mathbf{x}, \mathbf{y}_i^{(j)})
    \land \mathtt{XOR}_1(\mathbf{y}_i^{(j)})
    \land \dots
    \land \mathtt{XOR}_\ell(\mathbf{y}_i^{(j)}),
    \quad i = 1, \dots, k,~ j = 1, \dots, m,
    \]
    where $k$ is the number of objectives, $\delta \in (0,1)$ is the user-specified error probability bound, $\epsilon \ge 3$ is the user-specified approximation gap, and
    \[
    m = \left\lceil \frac{2p}{\left(p-\frac{1}{2}\right)^2}\,\tau \right\rceil,
    \quad
    p = 1 - \frac{2^{\epsilon - 1}}{(2^{\epsilon - 1}-1)^2},
    \]
    \[
    \tau = \max\{\ln k,\, n \ln 2\}
    + \left(\sum_{i=1}^k \ln \left(|\mathbf{y}_i|\right) - \ln(\delta) + \ln 2\right).
    \]
    Here, $\mathtt{XOR}(\cdot)$ denotes an independent random XOR constraint. Each formula $\psi_i^{(j)}$ contains a number of XOR constraints ($\ell$) ranging from $0$ to $|\mathbf{y}_i|$.
    \item \emph{Constraints Encoding the SAT Query~\eqref{eq:uw-smoop-sat}.}
    We introduce auxiliary Boolean variables $b_i^{(j)}$ with constraints
    \[
    b_i^{(j)} \Leftrightarrow \psi_i^{(j)}(\mathbf{x}, \mathbf{y}_i^{(j)}), \quad i = 1, \dots, k,~ j = 1, \dots, m,
    \]
    and enforce majority constraints
    \[
    \sum_{j=1}^m b_i^{(j)} > \frac{m}{2}, \quad i = 1, \dots, k.
    \]
\end{itemize}

Assuming each unweighted objective function $f_i$ is encoded in SAT, we define the size of a SAT query relative to the size of the objective encoding. In a SAT query, there are $O(m)$ constraints, and each constraint has a size linear in the encoding of the objective function (with only a constant number of XOR constraints). Therefore, the size of a SAT query can be measured directly by the number of constraints. Given an approximation factor $\epsilon$, for a $2^\epsilon$-approximate Pareto frontier, let $|Y|=\max_i |\mathbf{y}_i|$. The resulting query size satisfies $O(m)=O(\tau)=O\left(n+\log\frac{1}{\delta}+k\log|Y|\right)$.

\textbf{Total number of SAT queries.}~~
The total number of SAT queries solved by \method is $\prod_{i=1}^k |\mathbf{y}_i|$.

\section{SMOO on Weighted Model Counting Objectives}
\label{sec:extension}

So far, we have assumed \emph{unweighted} model counting objectives, where $f_i:\{0,1\}^{n+|\mathbf{y}_i|}\to\{0,1\}$.  
In practice, however, many objectives are \emph{weighted} counts, where $f_i(\mathbf{x},\mathbf{y}_i)$ takes values in $\mathbb{R}_{\ge 0}$.  
Our framework extends to this case by reducing weighted counts to unweighted counts through an auxiliary construction. 

We develop w-\method which extends to weighted functions $f_i(\mathbf{x},\mathbf{y}_i)\in\mathbb{R}_{\ge 0}$. We are able to find Pareto frontiers with arbitrarily small approximation gaps.
The high-level idea is to construct a \emph{pseudo} SMOO problem whose objectives are unweighted sums that faithfully approximate the original weighted ones.  
With sufficient computation, the pseudo problem preserves the characteristics of the original objectives, and the approximation gap can be made arbitrarily small.

We introduce \texttt{w-\method} (Algorithm~\ref{alg:weighted-main}), which achieves controllable approximation precision for weighted counting objectives.  
The method trades computational budget for accuracy by moderately increasing the SAT query complexity, specifically, the number of variables and constraints.  

We provide two complementary results:  
(1) a theorem (Theorem \ref{thm:w-pareto-solution}) that characterizes the achievable approximation quality under a given computational budget, and  
(2) a corollary (Corollary \ref{cly:w-pareto-solution}) that specifies the computational requirements needed to attain a desired approximation factor.  
The results are stated as follows.

\begin{theorem}[Approximate Pareto Frontier under Limited Computational Budget]
\label{thm:w-pareto-solution}
Fix a probability $\delta \in (0,1)$ and a problem size factor $T \in \{1,2,\dots\}$.  
Given a discretization bit budget $b \in \{0,1,2,\dots\}$, define
\[
L_i \triangleq \min_{\mathbf{x},\mathbf{y}_i} f_i(\mathbf{x},\mathbf{y}_i), \quad
U_i \triangleq \max_{\mathbf{x},\mathbf{y}_i} f_i(\mathbf{x},\mathbf{y}_i), \quad 
\zeta(b) \triangleq \max_i \log_2 \Big(1 + \frac{U_i - L_i}{2^{\frac{5}{T}} L_i 2^b}\Big).
\]
Let $\mathcal{F}_{\mathrm{dom}} \subseteq \{0,1\}^n \times \mathbb{R}_{\ge 0}^k$ denote the set of tuples $(\mathbf{x},\mathbf{p})$ returned by  
$\texttt{w-\method}(\sum f_1, \dots, \sum f_k, \delta, T,b)$. Then, with probability at least $1-\delta$, the set of assignments 
\[
\bigl\{  \mathbf{x} \in \{0,1\}^n : (\mathbf{x},\mathbf{p}) \in \mathcal{F}_{\mathrm{dom}}  \bigr\}
\]
constitutes a $(2^{\frac{5}{T} + \zeta(b)})$-approximate Pareto frontier.  
\end{theorem}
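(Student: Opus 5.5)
The plan is to reduce \texttt{w-\method} to an unweighted instance, apply Theorem~\ref{thm:pareto-solution} with the smallest admissible $\epsilon=3$, and then transfer the guarantee back to the weighted objectives. Since Algorithm~\ref{alg:weighted-main} is not reproduced in this excerpt, I am inferring the construction from the outline in the introduction: discretize each weight with $b$ bits, then take a $T$-fold product. The steps below should be checked against the algorithm once it is read.

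\textbf{Step 1 (discretization).} Write $F_i(\mathbf{x})=\sum_{\mathbf{y}_i} f_i(\mathbf{x},\mathbf{y}_i)$ and set $\Delta_i=(U_i-L_i)/2^b$. I expect the construction to replace each weight by $\Delta_i$ times an integer $g_i(\mathbf{x},\mathbf{y}_i)\le 2^b$, up to a fixed offset from $L_i$. This integer is realized as the number of assignments of $b$ auxiliary bits $\mathbf{z}_i$ permitted by a Boolean formula $h_i(\mathbf{x},\mathbf{y}_i,\mathbf{z}_i)$. The pointwise rounding satisfies $0\le f_i-\Delta_i g_i\le\Delta_i$.

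Summing over $\mathbf{y}_i$, define the unweighted count $\tilde F_i(\mathbf{x})=\sum_{\mathbf{y}_i,\mathbf{z}_i}h_i$. This gives
\[
\Delta_i\tilde F_i(\mathbf{x})\le F_i(\mathbf{x})\le \Delta_i\tilde F_i(\mathbf{x})+\Delta_i 2^{|\mathbf{y}_i|}.
\]
The key observation is that $F_i(\mathbf{x})\ge L_i2^{|\mathbf{y}_i|}$ for every $\mathbf{x}$. Hence the additive error term is at most $(\Delta_i/L_i)F_i(\mathbf{x}')$ for any $\mathbf{x}'$.

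\textbf{Step 2 (powering and invoking the unweighted theorem).} Take $T$ independent copies $(\mathbf{y}_i^{(t)},\mathbf{z}_i^{(t)})_{t=1}^T$ and the conjunction $\bigwedge_t h_i(\mathbf{x},\mathbf{y}_i^{(t)},\mathbf{z}_i^{(t)})$. Its model count is $G_i(\mathbf{x})=\tilde F_i(\mathbf{x})^T$. Running Algorithm~\ref{alg:main} on $(G_1,\dots,G_k)$ with $\epsilon=3$ and error $\delta$, Theorem~\ref{thm:pareto-solution} says that with probability $1-\delta$ the returned set $S$ is a $2^{5}$-approximate Pareto frontier for $G$.

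\textbf{Step 3 (transfer to $\tilde F$, then to $F$).} The pseudo problem has a different Pareto set from the original, so I would bridge the two as follows.

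Fix a Pareto-optimal $\mathbf{x}_{\mathrm{opt}}$ for $F$. Because the decision space is finite, some $G$-Pareto-optimal $\mathbf{x}'$ has $G_i(\mathbf{x}')\ge G_i(\mathbf{x}_{\mathrm{opt}})$ for all $i$, possibly with $\mathbf{x}'=\mathbf{x}_{\mathrm{opt}}$. Then some $\mathbf{x}\in S$ satisfies $2^5G_i(\mathbf{x})\ge G_i(\mathbf{x}')\ge G_i(\mathbf{x}_{\mathrm{opt}})$. Taking $T$-th roots gives $2^{5/T}\tilde F_i(\mathbf{x})\ge\tilde F_i(\mathbf{x}_{\mathrm{opt}})$. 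Combining with Step 1,
\[
F_i(\mathbf{x}_{\mathrm{opt}})\le 2^{5/T}\Delta_i\tilde F_i(\mathbf{x})+\tfrac{\Delta_i}{L_i}F_i(\mathbf{x})\le\Bigl(2^{5/T}+\tfrac{U_i-L_i}{L_i2^b}\Bigr)F_i(\mathbf{x}).
\]
The right-hand factor equals $2^{5/T}\bigl(1+\tfrac{U_i-L_i}{2^{5/T}L_i2^b}\bigr)\le 2^{5/T+\zeta(b)}$, as claimed.

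\textbf{Expected obstacle.} The main difficulty is matching the exact encoding in Algorithm~\ref{alg:weighted-main}: how the offset $L_i$ and the cap of $b$ bits (the $z_{b_0},\dots$ restricted to zero) are realized, so that the pointwise rounding error really is at most $\Delta_i$ in the unit being counted. A secondary point is the footnote assumption $Q_i\ge 2^{\epsilon}$ in Theorem~\ref{thm:pareto-curve}. For $G_i$ this should hold once the count is scaled by a $T$-th power, and if needed I would handle small values by shifting the grid.
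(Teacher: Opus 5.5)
Your proposal follows the paper's proof in its essentials. Both use the sandwich from Lemma~\ref{lem:count-ineqality}. Both apply the $\epsilon=3$ guarantee to the $T$-fold product, which gives a factor $2^{5}$ on the pseudo-counts and $2^{5/T}$ after taking $T$-th roots. Both convert the additive error $2^{|\mathbf{y}_i|}(U_i-L_i)/2^b$ into the factor $2^{\zeta(b)}$ via the trivial bound $F_i\ge 2^{|\mathbf{y}_i|}L_i$.

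The one structural difference is how you obtain the factor $2^5$. You apply Theorem~\ref{thm:pareto-solution} as a black box to the pseudo-problem. You then bridge the two Pareto sets by choosing a pseudo-Pareto-optimal $\mathbf{x}'$ that weakly dominates $\mathbf{x}_{\mathrm{opt}}$. The paper instead reuses the oracle-level argument from that theorem's proof. It invokes the Guaranteed-SAT condition directly at the grid point $l_i=\lfloor\log_2\widehat{F}^{[T]}_i(\mathbf{x}_{\mathrm{opt}};b)\rfloor-2$. This needs no pseudo-optimality of $\mathbf{x}_{\mathrm{opt}}$, since the condition only asks that some $\mathbf{x}$ meet the raised thresholds. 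Both routes give the same constant. Yours is more modular and inherits the handling of the pruning to $\mathcal{F}_{\mathrm{dom}}$ from Theorem~\ref{thm:pareto-solution}; the paper's avoids the bridging step.

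One step must be corrected. The embedding of Definition~\ref{def:embedding} discretizes $f_i-L_i$, not $f_i$. The correct sandwich is therefore $2^{|\mathbf{y}_i|}L_i+\Delta_i\tilde F_i\le F_i<2^{|\mathbf{y}_i|}L_i+\Delta_i\tilde F_i+\Delta_i2^{|\mathbf{y}_i|}$. Your upper bound, which drops the $2^{|\mathbf{y}_i|}L_i$ term, is false. For example, if $f_i(\mathbf{x},\cdot)\equiv L_i$, then $\tilde F_i(\mathbf{x})=0$ while $F_i(\mathbf{x})=2^{|\mathbf{y}_i|}L_i$, which exceeds $\Delta_i2^{|\mathbf{y}_i|}$ as soon as $2^b>(U_i-L_i)/L_i$.

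The repair is exactly the paper's step, using the offset lower bound: $2^{|\mathbf{y}_i|}L_i+2^{5/T}\Delta_i\tilde F_i(\mathbf{x})\le 2^{5/T}\bigl(2^{|\mathbf{y}_i|}L_i+\Delta_i\tilde F_i(\mathbf{x})\bigr)\le 2^{5/T}F_i(\mathbf{x})$. After this, your bound $F_i(\mathbf{x}_{\mathrm{opt}})\le\bigl(2^{5/T}+\tfrac{U_i-L_i}{L_i2^b}\bigr)F_i(\mathbf{x})$ and the final constant are unchanged.

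Finally, your remark that the $T$-th power makes $G_i\ge 2^{\epsilon}$ fails when $\tilde F_i(\mathbf{x}_{\mathrm{opt}})\le 1$. The paper's proof relies on the same implicit assumption, since it needs $l_i\ge 0$. This is a shared caveat rather than a defect of your route.
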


\begin{corollary}[Approximate Pareto Frontier for a Target Approximation Factor]
\label{cly:w-pareto-solution}
Fix a probability $\delta \in (0,1)$ and a target approximation factor $\gamma > 1$. Define
\[
L_i \triangleq \min_{\mathbf{x},\mathbf{y}_i} f_i(\mathbf{x},\mathbf{y}_i), \quad
U_i \triangleq \max_{\mathbf{x},\mathbf{y}_i} f_i(\mathbf{x},\mathbf{y}_i).
\]
Run $\texttt{w-\method}(\sum f_1, \dots, \sum f_k, \delta, T, b)$, where 
\[
T = \frac{10}{\log_2(\gamma)} 
\quad \text{and} \quad 
b = \max_i \log_2 \Big(\frac{U_i}{L_i} - 1\Big) - \log_2(\gamma - \gamma^{\frac{1}{2}}),
\]
and let the returned set be $\mathcal{F}_{\mathrm{dom}}$.  
Then, with probability at least $1-\delta$, the set of assignments 
\[
\bigl\{  \mathbf{x} \in \{0,1\}^n : (\mathbf{x},\mathbf{p}) \in \mathcal{F}_{\mathrm{dom}}  \bigr\}
\]
constitutes a $\gamma$-approximate Pareto frontier.  
\end{corollary}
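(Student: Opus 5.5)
The plan is to derive the corollary directly from Theorem~\ref{thm:w-pareto-solution}: plug in the prescribed $T$ and $b$ and verify that the resulting approximation factor $2^{\frac{5}{T}+\zeta(b)}$ is at most $\gamma$. Since the theorem already holds with probability at least $1-\delta$ for any $T\in\{1,2,\dots\}$ and $b\ge 0$, and since a $\gamma'$-approximate Pareto frontier with $\gamma'\le\gamma$ is also $\gamma$-approximate by Definition~\ref{def:approx_pareto} (the inequality $\gamma' f_i(x_1)\ge f_i(x_2)$ implies $\gamma f_i(x_1)\ge f_i(x_2)$ because $f_i\ge 0$), the whole argument reduces to bounding the exponent.

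First, the $T$ term. With $T=10/\log_2\gamma$ we get $\frac{5}{T}=\frac{1}{2}\log_2\gamma$, so $2^{5/T}=\gamma^{1/2}$. It therefore suffices to show $\zeta(b)\le\frac12\log_2\gamma$, i.e.\ for every $i$,
\[
1+\frac{U_i-L_i}{\gamma^{1/2}L_i2^b}\le\gamma^{1/2}.
\]
Equivalently, $2^b\ge \frac{U_i/L_i-1}{\gamma^{1/2}(\gamma^{1/2}-1)}=\frac{U_i/L_i-1}{\gamma-\gamma^{1/2}}$. Taking $\log_2$, this is exactly $b\ge\log_2(U_i/L_i-1)-\log_2(\gamma-\gamma^{1/2})$. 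The stated choice of $b$ takes the max over $i$, so it holds for all $i$ at once. Hence $\frac{5}{T}+\zeta(b)\le\log_2\gamma$, and the theorem gives the claim.

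The main obstacle is not the algebra but the integrality side conditions. The theorem requires $T\in\{1,2,\dots\}$ and $b\in\{0,1,2,\dots\}$, while the prescribed values are real. I would handle this by rounding: set $T=\lceil 10/\log_2\gamma\rceil$ and $b=\max\{0,\lceil\cdot\rceil\}$. Both $\frac{5}{T}$ and $\zeta(b)$ are nonincreasing in $T$ and $b$ respectively (since $\zeta$ decreases in $b$, and the term $1/2^{5/T}$ inside $\zeta$ only grows when $T$ grows). I should double-check that interaction: increasing $T$ decreases $2^{5/T}$, which increases the fraction in $\zeta$. So I would instead argue with $\gamma^{1/2}\le 2^{5/T}$ replaced carefully. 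With $T'\ge T$ we have $5/T'\le\frac12\log_2\gamma$, and I would bound $\zeta$ at $T'$ by requiring $1+\frac{U_i-L_i}{2^{5/T'}L_i2^b}\le \gamma/2^{5/T'}$. That condition is $2^b\ge (U_i/L_i-1)/(\gamma-2^{5/T'})$, which is implied by the given $b$ since $2^{5/T'}\le\gamma^{1/2}$. So the rounded choices still give total exponent $\le\log_2\gamma$. I would also note the degenerate cases $U_i=L_i$ (where $\zeta=0$ and any $b\ge0$ works) and $L_i>0$, which is implicitly assumed for $\zeta$ to be defined.
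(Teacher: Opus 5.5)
Your proposal is correct and follows the paper's proof. It chooses $T$ so that $2^{5/T}=\gamma^{1/2}$, reduces $2^{\zeta(b)}\le\gamma^{1/2}$ to $2^b\ge (U_i/L_i-1)/(\gamma-\gamma^{1/2})$ for every $i$, and takes the maximum over $i$ before invoking Theorem~\ref{thm:w-pareto-solution}.

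Your extra handling of integrality is a sound refinement that the paper's proof skips, even though the theorem requires $T\in\{1,2,\dots\}$ and $b\in\{0,1,\dots\}$. You round $T$ up and round $b$ up to a nonnegative integer. You then bound $2^{5/T'}+\frac{U_i-L_i}{L_i 2^b}\le\gamma$ directly, which accounts for a larger $T$ enlarging the term inside $\zeta$.
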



\subsection{Step 1: Reducing Weighted Model Counting to Unweighted Counting}

\begin{figure}
    \centering
    \includegraphics[width=\linewidth]{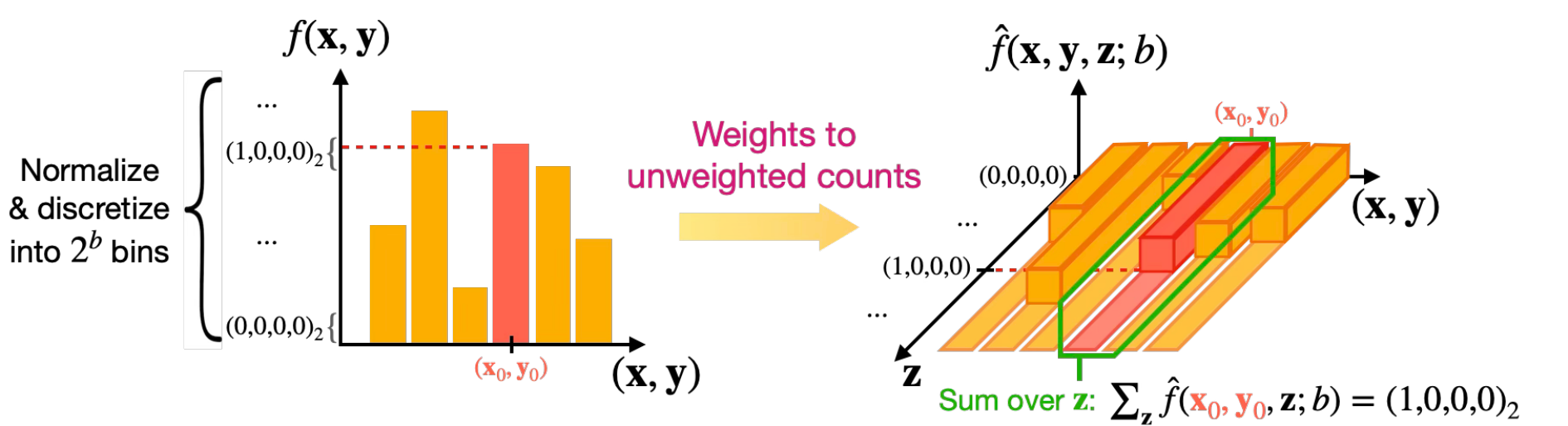}
    \caption{
    {Converting a weighted function $f(\mathbf{x}, \mathbf{y})$ into unweighted model counting $\sum_{\mathbf{z}} \hat{f}(\mathbf{x}, \mathbf{y}, \mathbf{z}; b)$.
    \textbf{(Left)} The function $f$ is normalized and uniformly discretized into $2^b$ integer levels, where $b \in \mathbb{Z}_{>0}$ is a user-specified discretization precision. The red bar shows an example $f(\mathbf{x}_0, \mathbf{y}_0)$, which is discretized to the binary integer $(1,0,0,0)_2$ (or $8$ in decimal).
    \textbf{(Right)} We construct an unweighted function $\hat{f}$ such that, when $\mathbf{z} \in \{0,1\}^b$ is interpreted as a binary integer, $\hat{f}(\mathbf{x}, \mathbf{y}, \mathbf{z}; b) = 1$ if $\mathbf{z}$ is no greater than the discretized $f(\mathbf{x}, \mathbf{y})$, and $0$ otherwise. Then $f(\mathbf{x}, \mathbf{y})$ can be directly computed as $\sum_{\mathbf{z}} \hat{f}$, e.g., the red bar shows that $\sum_{\mathbf{z}} \hat{f}(\mathbf{x}_0, \mathbf{y}_0, \mathbf{z}; b) = (1,0,0,0)_2$ exactly.}
    }
    \label{fig:w_to_uw}
\end{figure}

We first show how any weighted objective function can be embedded into an unweighted model counting formulation by introducing auxiliary binary variables.

{
The central idea is to replace real-valued weights with \emph{multiplicities of Boolean
assignments}. Instead of associating each assignment $(\mathbf{x},\mathbf{y})$ with a
numeric weight $f(\mathbf{x},\mathbf{y}) \in \mathbb{R}_{\ge 0}$, we introduce auxiliary
binary variables $\mathbf{z}$ and construct an \emph{unweighted} function
$\hat{f}(\mathbf{x},\mathbf{y},\mathbf{z})$ such that the number of satisfying $\mathbf{z}$
of $\hat{f}$ for fixed $(\mathbf{x},\mathbf{y})$ is proportional to $f(\mathbf{x},\mathbf{y})$, i.e., $\sum_{\mathbf{z}}\hat{f}(\mathbf{x},\mathbf{y},\mathbf{z}) \propto f(\mathbf{x},\mathbf{y})$.

To construct the unweighted function, we first normalize and discretize the value
$f(\mathbf{x}, \mathbf{y})$ into the range $[0, 2^b]$ for some
$b \in \mathbb{Z}_{\ge 0}$, and denote the resulting discretized value by
$\lfloor r_b(\mathbf{x}, \mathbf{y}) \rfloor$ (Definition~\ref{def:embedding}, using floor rounding).
For example, Figure~\ref{fig:w_to_uw} (left) shows that $f(\mathbf{x}_0, \mathbf{y}_0)$ is normalized to $(1,0,0,0)_2$. A larger value of $b$ yields higher approximation accuracy. The key trick is that, for any integer $B \in [0,2^b]$, this identity holds:
\[
\sum_{\mathbf{z} \in \{0,1\}^b}
\mathbf{1}\Big(
\sum_{i=1}^b 2^{i-1} z_i < B
\Big)
= B,
\]
where $\mathbf{1}(\cdot)$ denotes the indicator function and
$\mathbf{z} = (z_1,\dots,z_b) \in \{0,1\}^b$ are auxiliary binary variables.
This can be interpreted as the fact that exactly $B$ numbers 0 ... B-1 (represented as binary numbers in the vector $\mathbf{z}$) are smaller than $B$.

Using this observation, we define an indicator function
$\hat{f}(\mathbf{x},\mathbf{y},\mathbf{z};b)$ that evaluates to $1$ if and only if
\[
\sum_{i=1}^b 2^{i-1} z_i < \lfloor r_b(\mathbf{x},\mathbf{y}) \rfloor.
\]
Consequently, for each fixed $(\mathbf{x},\mathbf{y})$, the number of satisfying
assignments of $\mathbf{z}$ equals $\lfloor r_b(\mathbf{x},\mathbf{y}) \rfloor$, as shown in  Figure~\ref{fig:w_to_uw} (right). 
We thereby converted the discretized weight into an unweighted model count. We can prove that the unweighted count can faithfully approximate the original count (Lemma~\ref{lem:count-ineqality}). 
}

\begin{definition}[Embedding]
\label{def:embedding}
Let $f:\{0,1\}^n\times\{0,1\}^m\to\mathbb{R}_{\ge 0}$ and assume known bounds
\[
L \triangleq \min_{\mathbf{x},\mathbf{y}} f(\mathbf{x}, \mathbf{y}), \qquad
U \triangleq \max_{\mathbf{x},\mathbf{y}} f(\mathbf{x},\mathbf{y}), \qquad U > L.
\]
Let $b\in\mathbb{N}$ be the number of additional binary variables (bits), denoted $\mathbf{z}=(z_1,\ldots,z_b)\in\{0,1\}^b$.  
Define the scaled value
\[
r_b(\mathbf{x},\mathbf{y}) \triangleq \frac{f(\mathbf{x},\mathbf{y})-L}{U-L}\cdot 2^{b} \in [0,2^{b}].
\]
For each $\mathbf{x}\in\{0,1\}^n$, define the embedding 
\[
\mathcal{S}_{\mathbf{x}}(f,b)
\triangleq
\Big\{ (\mathbf{y},\mathbf{z})\in \{0,1\}^m\times\{0,1\}^b  \Big|  \sum_{i=1}^{b} 2^{i-1} z_i < r_b(\mathbf{x},\mathbf{y}) \Big\}.
\]
\end{definition}

\begin{lemma}[Discretized Weighted Count]
\label{lem:count-ineqality}
Let $\hat{f}(\mathbf{x},\mathbf{y},\mathbf{z};b)$ be the indicator that equals $1$ iff $(\mathbf{y},\mathbf{z}) \in \mathcal{S}_{\mathbf{x}}(f,b)$. Then
\[
2^{m} L + \frac{U-L}{2^{b}} \sum_{\mathbf{y},\mathbf{z}} \hat{f}(\mathbf{x},\mathbf{y},\mathbf{z};b)
\le
\sum_{\mathbf{y}} f(\mathbf{x},\mathbf{y})
<
2^{m} L + \frac{U-L}{2^{b}} \sum_{\mathbf{y},\mathbf{z}} \hat{f}(\mathbf{x},\mathbf{y},\mathbf{z};b)
+ \frac{2^{m}(U-L)}{2^{b}}.
\]
\end{lemma}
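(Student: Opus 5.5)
We fix $\mathbf{x}$ and compute the inner count over $\mathbf{z}$ exactly for each $\mathbf{y}$, then sum over the $2^m$ assignments of $\mathbf{y}$. The lemma is purely deterministic. It reduces to a per-$\mathbf{y}$ sandwich between $r_b(\mathbf{x},\mathbf{y})$ and the number of admissible $\mathbf{z}$, followed by rescaling by $(U-L)/2^b$ and adding back the offset $L$.

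\textbf{Step 1 (counting $\mathbf{z}$ for fixed $\mathbf{y}$).} Fix $(\mathbf{x},\mathbf{y})$ and write $r=r_b(\mathbf{x},\mathbf{y})\in[0,2^b]$. As $\mathbf{z}$ ranges over $\{0,1\}^b$, the integer $\sum_i 2^{i-1}z_i$ ranges bijectively over $\{0,1,\dots,2^b-1\}$. The number of $\mathbf{z}$ with $\sum_i 2^{i-1}z_i<r$ is therefore the number of integers $j\in\{0,\dots,2^b-1\}$ with $j<r$, namely $N(\mathbf{y})=\lceil r\rceil$; this uses $r\le 2^b$. This gives the sandwich
\[
r \le N(\mathbf{y}) < r+1 .
\]
Here a subtlety arises. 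The informal discussion above uses $\lfloor r\rfloor$, while the set in Definition~\ref{def:embedding} uses strict inequality against the real number $r$, which yields $\lceil r\rceil$. I would check which direction the stated inequalities need. The stated lower bound $2^mL+\frac{U-L}{2^b}\sum\hat f\le\sum f$ requires $N(\mathbf{y})\le r$, and the strict upper bound requires $r<N(\mathbf{y})+1$. Both hold for the floor, $N=\lfloor r\rfloor$, so $\lfloor r\rfloor\le r<\lfloor r\rfloor+1$.

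\textbf{Step 2 (resolving the convention).} This mismatch between floor and ceiling is the main obstacle. I would adopt the convention from the surrounding text, namely that $\hat f$ is the indicator of $\sum_i 2^{i-1}z_i<\lfloor r_b(\mathbf{x},\mathbf{y})\rfloor$, so that $N(\mathbf{y})=\lfloor r\rfloor$ exactly. This is equivalent to the definition whenever $r$ is an integer. I would state this reading explicitly. If one insists on the literal ceiling version, the same argument gives the reversed sandwich, $r\le N<r+1$, and the bounds change accordingly. Under the floor reading, $N(\mathbf{y})\le r<N(\mathbf{y})+1$.

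\textbf{Step 3 (rescaling and summing).} Multiply the sandwich by $(U-L)/2^b>0$ and add $L$. Since $L+\frac{U-L}{2^b}r=f(\mathbf{x},\mathbf{y})$ by the definition of $r_b$, this gives
\[
L+\tfrac{U-L}{2^b}N(\mathbf{y})\le f(\mathbf{x},\mathbf{y})<L+\tfrac{U-L}{2^b}N(\mathbf{y})+\tfrac{U-L}{2^b}.
\]
Summing over all $2^m$ values of $\mathbf{y}$ and using $\sum_{\mathbf{y}}N(\mathbf{y})=\sum_{\mathbf{y},\mathbf{z}}\hat f(\mathbf{x},\mathbf{y},\mathbf{z};b)$ yields both inequalities of Lemma~\ref{lem:count-ineqality}. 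The strict inequality survives the summation because each of the finitely many summands is strict.
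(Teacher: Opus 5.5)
Your proof is correct and follows the paper's argument: count the admissible $\mathbf{z}$ exactly for each $\mathbf{y}$, use the sandwich $\lfloor r_b\rfloor \le r_b < \lfloor r_b\rfloor+1$, rescale by $(U-L)/2^b$, add back $L$, and sum over the $2^m$ assignments of $\mathbf{y}$. Your caveat is also right, and the paper's proof glosses over it: with the literal strict inequality against the real number $r_b$ in Definition~\ref{def:embedding}, the count is $\lceil r_b\rceil$, not the $\lfloor r_b\rfloor$ the paper's proof asserts (for $b=1$ and $r_b=0.6$, the assignment $\mathbf{z}=0$ is counted and the lower bound fails), so adopting the floor threshold from the surrounding text, as you do explicitly, is necessary for the lemma to hold.
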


\begin{proof}
Fix $\mathbf{x}\in\{0,1\}^n$. By the definition of $\mathcal{S}_{\mathbf{x}}(f,b)$, for each $\mathbf{y}$ the number of $\mathbf{z}\in\{0,1\}^b$
satisfying $\sum_{i=1}^b 2^{i-1}z_i < r_b(\mathbf{x},\mathbf{y})$ equals $\lfloor r_b(\mathbf{x},\mathbf{y})\rfloor$.
Equivalently,
\begin{equation}\label{eq:count-equals-floor}
\sum_{\mathbf{z}} \hat{f}(\mathbf{x},\mathbf{y},\mathbf{z};b)=\lfloor r_b(\mathbf{x},\mathbf{y})\rfloor.
\end{equation}
Since $\lfloor r_b(\mathbf{x},\mathbf{y})\rfloor\le r_b(\mathbf{x},\mathbf{y})< \lfloor r_b(\mathbf{x},\mathbf{y})\rfloor+1$, multiplying by $(U-L)/2^b$ yields the
pointwise bounds
\begin{equation}\label{eq:ptwise}
\frac{U-L}{2^b}\lfloor r_b(\mathbf{x},\mathbf{y})\rfloor
\le
f(\mathbf{x},\mathbf{y})-L
<
\frac{U-L}{2^b}\big(\lfloor r_b(\mathbf{x},\mathbf{y})\rfloor+1\big).
\end{equation}
Summing \eqref{eq:ptwise} over all $\mathbf{y}\in\{0,1\}^m$ and using
$\sum_{\mathbf{y}} 1 = 2^m$ and \eqref{eq:count-equals-floor} gives
\[
2^m L + \frac{U-L}{2^b}\sum_{\mathbf{y},\mathbf{z}} \hat{f}(\mathbf{x},\mathbf{y},\mathbf{z};b)
\le
\sum_{\mathbf{y}} f(x,y)
<
2^m L + \frac{U-L}{2^b}\sum_{\mathbf{y},\mathbf{z}} \hat{f}(\mathbf{x},\mathbf{y},\mathbf{z};b) + \frac{2^m(U-L)}{2^b}.
\]
\end{proof}

By introducing $b$ auxiliary variables, we can approximate the weighted count $\sum_{\mathbf{y}} f(\mathbf{x},\mathbf{y})$ through the unweighted model count 
\begin{align}
\label{eq:estimate-w-count}
    2^{m} L + \frac{U-L}{2^{b}} \sum_{\mathbf{y},\mathbf{z}} \hat{f}(\mathbf{x}, \mathbf{y},\mathbf{z};b).
\end{align}
Hence, the original weighted objectives can be replaced with equivalent unweighted formulations parameterized by~$b$, providing a controllable trade-off between accuracy and computational cost.

\subsection{Step 2: Constructing a Pseudo-SMOO Problem to Narrow the Approximation Gap}


{
The reduction in Step~1 converts weighted model counting objectives into unweighted ones.
A natural approach is therefore to directly replace weighted objectives with their
unweighted counterparts and solve the resulting problem using \method.
However, this naive replacement retains the irreducible approximation factor
$\epsilon \ge 3$ (required by Theorems~\ref{thm:pareto-curve} and~\ref{thm:pareto-solution}), which
limits the achievable accuracy.

To address this, notice that the approximation gap for estimating unweighted model counts is always a
constant factor; that is, the approximation factor is a user-specified constant $\epsilon$,
independent to the specific objective function
(Theorems~\ref{thm:pareto-curve} and~\ref{thm:pareto-solution}).
This allows us to reduce the effective approximation error using a simple
amplification trick.
If an estimator approximates $\sum_y f(y)$ within a constant factor $2^\epsilon$, then
estimating
\[
\sum_{y_1}\cdots\sum_{y_T} f(y_1) \dots f(y_T)
= \bigl(\sum_y f(y)\bigr)^T
\]
can achieve the same constant-factor approximation, which implies that the relative error
in estimating $\sum_y f(y)$ decreases to $2^{\epsilon/T}$.

In this step, we construct an unweighted \emph{pseudo-SMOO} problem using the amplification
idea. By carefully designing this \emph{pseudo-SMOO} formulation, we can reuse the
Algorithm~\ref{alg:main} to solve it, and the resulting solution arbitrarily closely approximates the true Pareto frontier.
}

\begin{definition}[Pseudo-SMOO Problem]
\label{def:pseudo-smoop}
For the SMOO problem defined in Equation~\eqref{eq:smoop}, we build a new problem as follows.  
Define the indicator function from the previous step
\[
\hat{f}_i(\mathbf{x},\mathbf{y}_i,\mathbf{z}_i;b) \triangleq \mathbf{1}\big[(\mathbf{y}_i, \mathbf{z}_i) \in \mathcal{S}_{\mathbf{x}}(f_i, b)\big],
\]
and further define
\[
\hat{f}_i^{[T]}(\mathbf{x}, \{\mathbf{y}_i^{(t)}\}_{t=1}^T, \{\mathbf{z}_i^{(t)}\}_{t=1}^T; b)
\triangleq
\prod_{t=1}^T \hat{f}_i^{(t)}(\mathbf{x}, \mathbf{y}_i^{(t)}, \mathbf{z}_i^{(t)}; b),
\]
where $\hat{f}_i^{[T]}:\{0,1\}^n \times \{0,1\}^{T(|\mathbf{y}_i|+b)} \to \{0,1\}$ represents $T$ independent repetitions of $\hat{f}_i$, 
each corresponding to an independent copy $(\mathbf{y}_i^{(t)}, \mathbf{z}_i^{(t)})$ of the latent variables. 
The resulting pseudo-SMOO problem is
\begin{align}
\label{eq:convert-smoop}
\max_{\mathbf{x}} 
\Big( 
\sum_{\{\mathbf{y}_1^{(t)},\mathbf{z}_1^{(t)}\}_{t=1}^T}  \hat{f}_1^{[T]}, 
\dots, 
\sum_{\{\mathbf{y}_k^{(t)},\mathbf{z}_k^{(t)}\}_{t=1}^T}  \hat{f}_k^{[T]}
\Big).
\end{align}
\end{definition}

This pseudo problem can be solved using the same algorithmic framework (Algorithm~\ref{alg:main}) as the unweighted case, and its solutions can be shown to closely approximate those of the original weighted problem.

\subsection{Step 3: Bridging Pseudo-SMOO and Original Problem Solutions}

Finally, we establish that solutions obtained from the pseudo-SMOO problem remain valid approximations for the original problem, with a quantifiable approximation bound.

\begin{algorithm}[!h]
\caption{$\texttt{w-\method}(\sum f_1,\dots, \sum f_k, \delta, T, b)$}
\label{alg:weighted-main}
\begin{algorithmic}[1]
\Require Objective functions $\{\sum f_i \}_{i=1}^k$; error probability bound $\delta$; amplification factor $T$; discretization bit budget $b$.

\State $\hat{f}_i(\mathbf{x},\mathbf{y}_i,\mathbf{z}_i;b) \triangleq \mathbf{1}\big[(\mathbf{y}_i, \mathbf{z}_i) \in \mathcal{S}_{\mathbf{x}}(f_i, b)\big], \quad i=1,\dots,k.$
\State $\hat{f}_i^{[T]}(\mathbf{x}, \{\mathbf{y}_i^{(t)}\}_{t=1}^T, \{\mathbf{z}_i^{(t)}\}_{t=1}^T; b)
\triangleq
\prod_{t=1}^T \hat{f}_i^{(t)}(\mathbf{x}, \mathbf{y}_i^{(t)}, \mathbf{z}_i^{(t)}; b), \quad i=1,\dots,k.$

\State $\mathcal{F}_{dom} \gets \texttt{\method}(\sum \hat{f}_1^{[T]}, \dots, \sum \hat{f}_k^{[T]}, \delta, \epsilon=3)$

\State \Return $\mathcal{F}_{dom}$.
\end{algorithmic}
\end{algorithm}

The algorithm \texttt{w-\method} returns, with high probability, a set of solutions that forms an approximate Pareto frontier, as established by Theorem~\ref{thm:w-pareto-solution}, together with Corollary~\ref{cly:w-pareto-solution}. The corresponding proofs are given below.

\begin{proof}[]
\noindent\paragraph{Proof of Theorem~\ref{thm:w-pareto-solution}}
Similar to the proof for the unweighted case, we condition on the event~\eqref{eq:eventE}, which occurs with probability at least $1-\delta$.
On this event, the guarantees in Definition~\ref{def:sat-oracle} hold deterministically.

\medskip
\noindent \textbf{1. Relating weighted and unweighted objectives.}

Fix a Pareto-optimal solution $\mathbf{x}_{\mathrm{opt}}$ achieving
\[
F_i(\mathbf{x}_{\mathrm{opt}}) = \sum_{\mathbf{y}_i} f_i(\mathbf{x}_{\mathrm{opt}},\mathbf{y}_i), \qquad i=1,\dots,k.
\]
Let its corresponding unweighted model count in the converted problem in Definition~\ref{def:pseudo-smoop} be
\[
\widehat{F}^{[T]}_{i}(\mathbf{x}_{\mathrm{opt}};b) = \sum_{ \{\mathbf{y}^{(t)}_i, \mathbf{z}^{(t)}_i \}_{t=1}^T} \hat{f}^{[T]}_{i} (\mathbf{x}_{\mathrm{opt}}, \{\mathbf{y}_i\}_{t=1}^T, \{\mathbf{z}_i\}_{t=1}^T;b), \quad i=1,2,\dots,k.
\]
and define the decomposed form
\[
\widehat{F}_{i}(\mathbf{x}_{\mathrm{opt}};b) = \sum_{ \mathbf{y}_i, \mathbf{z}_i} \hat{f}_{i} (\mathbf{x}_{\mathrm{opt}}, \mathbf{y}_i, \mathbf{z}_i;b) = \Big(\widehat{F}^{[T]}_{i}(\mathbf{x}_{\mathrm{opt}};b) \Big)^{\frac{1}{T}}, \qquad i=1,2,\dots,k.
\]
From Lemma~\ref{lem:count-ineqality},
\begin{align}
F_i(\mathbf{x}_{\mathrm{opt}})
< 2^{|\mathbf{y}_i|}L_i
+ \frac{U_i-L_i}{2^b}\widehat{F}_i(\mathbf{x}_{\mathrm{opt}};b)
+ \frac{2^{|\mathbf{y}_i|}(U_i-L_i)}{2^b},    
\label{eq:approx-ineq1}
\end{align}
where
\[
L_i \triangleq \min_{\mathbf{x},\mathbf{y}_i} f_i(\mathbf{x},\mathbf{y}_i), \quad
U_i \triangleq \max_{\mathbf{x},\mathbf{y}_i} f_i(\mathbf{x},\mathbf{y}_i), \quad U_i>L_i.
\]

\medskip
\noindent \textbf{2. Applying the SAT oracle guarantees.}

When running \method with input $(\sum \hat{f}_1^{[T]}, \dots, \sum \hat{f}_k^{[T]}, \delta, \epsilon)$ where $\epsilon$ fixed at $3$, let
\[
l_i = \big\lfloor \log_2 \widehat{F}^{[T]}_i(\mathbf{x}_{\mathrm{opt}};b) \big\rfloor - l^*, \quad l^* = 2.
\]
Then $2^{l_i+l^*} \le \widehat{F}^{[T]}_i(\mathbf{x}_{\mathrm{opt}};b) < 2^{l_i+l^*+1}$ and $l^* \geq 2$,
ensuring the \emph{Guaranteed SAT} condition in Definition~\ref{def:sat-oracle}.  
By Theorem~\ref{thm:pareto-solution}, $\texttt{\method}(\sum \hat{f}_1^{[T]},\dots, \hat{f}_k^{[T]},\delta, 3)$ returns 
$(\mathbf{x}^\star,\mathbf{p}^\star)$ such that
\[
\widehat{F}^{[T]}_i(\mathbf{x}^\star;b) \ge 2^{l_i-l^*}, \quad \forall i.
\]
Consequently,
\[
\widehat{F}_i(\mathbf{x}^\star;b)
= \big(\widehat{F}^{[T]}_i(\mathbf{x}^\star;b)\big)^{1/T}
\ge 2^{\frac{l_i-l^*}{T}}, \quad \forall i.
\]

\medskip
\noindent \textbf{3. Relating actual objective values.}

From Lemma~\ref{lem:count-ineqality}, for $\mathbf{x}^\star$ we have
\[
F_i(\mathbf{x}^\star)
\ge 2^{|\mathbf{y}_i|}L_i + \frac{U_i-L_i}{2^b}\widehat{F}_i(\mathbf{x}^\star;b), \quad \forall i.
\]
Combining the above inequality with Equation~\eqref{eq:approx-ineq1} yields
\begin{align*}
   F_i(\mathbf{x}_{\mathrm{opt}}) &< 2^{|\mathbf{y}_i|} L_i + \frac{U_i-L_i}{2^b} \widehat{F}_i(\mathbf{x}_{\mathrm{opt}};b) + \frac{2^{|\mathbf{y}_i|}(U_i-L_i)}{2^b} \\
   & = 2^{|\mathbf{y}_i|} L_i + \frac{U_i-L_i}{2^b} \big( \widehat{F}^{[T]}_i(\mathbf{x}_{\mathrm{opt}};b) \big)^{\frac{1}{T}} + \frac{2^{|\mathbf{y}_i|}(U_i-L_i)}{2^b} \\
   & < 2^{|\mathbf{y}_i|} L_i + \frac{U_i-L_i}{2^b} 2^{\frac{l_i+l^*+1}{T}} + \frac{2^{|\mathbf{y}_i|}(U_i-L_i)}{2^b} \\
   & \leq 2^{|\mathbf{y}_i|} L_i + \frac{U_i-L_i}{2^b} 2^{\frac{(l_i-l^*) + 2l^*+1}{T}} + \frac{2^{|\mathbf{y}_i|}(U_i-L_i)}{2^b} \\
   & \leq 2^{|\mathbf{y}_i|} L_i + \frac{U_i-L_i}{2^b} 2^{\frac{2l^*+1}{T}} \widehat{F}_i(\mathbf{x}^\star; b) + \frac{2^{|\mathbf{y}_i|}(U_i-L_i)}{2^b} \\
   & \leq 2^{ \frac{2l^*+1}{T}} \Big( 2^{|\mathbf{y}_i|} L_i + \frac{U_i-L_i}{2^b} \widehat{F}_i(\mathbf{x}^\star;b) \Big) + \frac{2^{|\mathbf{y}_i|}(U_i-L_i)}{2^b} \\
   & \leq 2^{ \frac{2l^*+1}{T}} F_i(\mathbf{x}^\star) + \frac{2^{|\mathbf{y}_i|}(U_i-L_i)}{2^b} 
\end{align*}

\medskip
\noindent \textbf{4. Converting additive to multiplicative slack.}

To convert the slack between $F_i(\mathbf{x}^\star)$ and $F_i(\mathbf{x}_{\mathrm{opt}})$ into a single multiplicative slack, choose $\zeta(b)\ge0$ satisfying
\[
2^{\frac{2l^*+1}{T}}(2^{\zeta(b)}-1) F_i(\mathbf{x}^\star)
\ge \frac{2^{|\mathbf{y}_i|}(U_i-L_i)}{2^b}.
\]
Using the trivial lower bound $F_i(\mathbf{x}^\star)=\sum_{\mathbf{y}_i} f_i(\mathbf{x},\mathbf{y}_i)\ge 2^{|\mathbf{y}_i|}L_i$, a sufficient condition is
\[
2^{\frac{2l^*+1}{T}}(2^{\zeta(b)}-1) 2^{|\mathbf{y}_i|}L_i \ge \frac{2^{|\mathbf{y}_i|}(U_i-L_i)}{2^{b}}
\Leftrightarrow
2^{\zeta(b)}-1 \ge \frac{U_i-L_i}{2^{\frac{2l^*+1}{T}} L_i 2^{b}}.
\]
whose minimal solution is
\[
\zeta(b)
= \max_i \log_2 \Big(1+\frac{U_i-L_i}{2^{\frac{2l^*+1}{T}} L_i 2^b}\Big).
\]
Hence,
\[
F_i(\mathbf{x}_{\mathrm{opt}})
\le 2^{\frac{2l^*+1}{T}+\zeta(b)} F_i(\mathbf{x}^\star),
\quad \forall i.
\]
Therefore, the set of returned solutions forms a 
$(2^{\frac{5}{T}+\zeta(b)})$-approximate Pareto frontier.
\end{proof}


\begin{proof}[]
\noindent\paragraph{Proof of Corollary~\ref{cly:w-pareto-solution}}
By Theorem~\ref{thm:w-pareto-solution}, with probability at least $1-\delta$, the returned solutions form a
$2^{\frac{5}{T}+\zeta(b)}$-approximate Pareto frontier.  
To obtain a target factor $\gamma$, it suffices to enforce
\[
2^{\frac{5}{T}+\zeta(b)} \le \gamma.
\]
A convenient way is to split the budget evenly between the two terms:
\[
2^{\frac{5}{T}} \le \gamma^{1/2}
\quad\text{and}\quad
2^{\zeta(b)} \le \gamma^{1/2}.
\]
The first inequality is satisfied by choosing $T = \frac{10}{\log_2(\gamma)}$.

For the second inequality, note that $2^{\zeta(b)}=\max_i\Big(1+\frac{U_i-L_i}{2^{\frac{5}{T}}L_i2^b}\Big)$.
Thus it is enough to require, for every $i$,
\[
2^b \ge \frac{U_i-L_i}{2^{\frac{5}{T}}L_i(\gamma^{1/2}-1)}.
\]
Using $2^{\frac{5}{T}}=\gamma^{1/2}$ under our choice of $T$, this becomes
\[
2^b \ge \frac{U_i-L_i}{L_i(\gamma-\gamma^{1/2})}
= \frac{U_i / L_i-1}{\gamma-\gamma^{1/2}}.
\]
Taking $\log_2$ and maximizing over $i$ yields
\[
b \ge \max_i \log_2(U_i / L_i-1) - \log_2(\gamma-\gamma^{1/2}),
\]
which matches the choice in the corollary. Substituting these parameters into Theorem~\ref{thm:w-pareto-solution}
gives $2^{\frac{5}{T}+\zeta(b)} \le \gamma$, completing the proof.
\end{proof}

\subsection{{Sizes of SAT Queries Solved by w-\method}}

Consider an SMOO problem defined in Equation~\eqref{eq:smoop}, where the $k$ objective functions are weighted counts of the form $\sum_{\mathbf{y}_i} f_i(\mathbf{x}, \mathbf{y}_i)$ for $i = 1, \dots, k$, the $n$ decision variables are $\mathbf{x} \in \{0,1\}^n$, and, for each $i$, $\mathbf{y}_i \in \{0,1\}^{|\mathbf{y}_i|}$ denotes the set of latent variables.

w-\method (Algorithm~\ref{alg:weighted-main}) solves this SMOO problem by encoding it into SAT queries. Each query asks whether the following formula is satisfiable:
\begin{align}
    \bigwedge_{i=1}^k \Psi_i(\mathbf{x}, \mathbf{y}_i^{(1,1)}, \dots, \mathbf{y}_i^{(m,T)}, \mathbf{z}_i^{(1,1)}, \dots, \mathbf{z}_i^{(m,T)})
    \label{eq:w-smoop-sat}
\end{align}
where
\begin{align*}
\Psi_i(\mathbf{x}, \mathbf{y}_i^{(1,1)}, \dots, \mathbf{y}_i^{(m,T)}, \mathbf{z}_i^{(1,1)}, \dots, \mathbf{z}_i^{(m,T)})
    = \mathtt{Majority}(\psi_i^{(1)}, \dots, \psi_i^{(m)}).
\end{align*}

\noindent\textbf{Variables.}
Each query determines satisfiability over the variables $\mathbf{x} \in \{0,1\}^n$, $\mathbf{y}_i^{(j,t)} \in \{0,1\}^{|\mathbf{y}_i|}$, and $\mathbf{z}_i^{(j,t)} \in \{0,1\}^b$ for $i = 1, \dots, k$, $j = 1, \dots, m$, and $t = 1, \dots, T$.
In total, each query involves
\[
n + mT \sum_{i=1}^k |\mathbf{y}_i| + mTkb
\]
Boolean variables.

\noindent\textbf{Constraints.}
Each SAT query is encoded using a MIP-style formulation involving Boolean variables and linear constraints.
It consists of the following formulas and constraints:
\begin{itemize}[leftmargin=15pt]
    \item \emph{Formulas Converting Weighted Functions to Unweighted Functions.}
    By Definition~\ref{def:pseudo-smoop}, with a user-specified amplification factor $T \in \mathbb{Z}_{> 0}$ and discretization bit budget $b \in \mathbb{Z}_{\ge 0}$, we can construct $\hat{f}_i^{[T]}(\mathbf{x}, \{\mathbf{y}_i^{(t)}\}_{t=1}^T, \{\mathbf{z}_i^{(t)}\}_{t=1}^T)$ from objective functions.
    
    \item \emph{Formulas Encoding XOR Counting.}
    There are $k \times m$ formulas of the form
    \begin{align*}
        \psi_i^{(j)}(\mathbf{x}, \{\mathbf{y}_i^{(j,t)}\}_{t=1}^T, \{\mathbf{z}_i^{(j,t)}\}_{t=1}^T) = & \hat{f}_i^{[T]}(\mathbf{x}, \{\mathbf{y}_i^{(j,t)}\}_{t=1}^T, \{\mathbf{z}_i^{(j,t)}\}_{t=1}^T) \land \\
        & \mathtt{XOR}_1(\{\mathbf{y}_i^{(j,t)}\}_{t=1}^T, \{\mathbf{z}_i^{(j,t)}\}_{t=1}^T)
        \land \dots \land \\
        & \mathtt{XOR}_\ell(\{\mathbf{y}_i^{(j,t)}\}_{t=1}^T, \{\mathbf{z}_i^{(j,t)}\}_{t=1}^T),
    \end{align*}
    \[
    i = 1, \dots, k,\quad j = 1, \dots, m,
    \]
    where $k$ is the number of objectives, $\delta \in (0,1)$ is the user-specified error probability bound, and
    \[
    m = \left\lceil 15\tau \right\rceil,
    \]
    \[
    \tau = \max\{\ln k,\, n \ln 2\}
    + \left(\sum_{i=1}^k \ln \left(|\mathbf{y}_i| + b\right) + k\ln T - \ln(\delta) + \ln 2\right).
    \]
    Here, $\mathtt{XOR}(\cdot)$ denotes an independent random XOR constraint.
    Each formula $\psi_i^{(j)}$ contains a number of XOR constraints ($\ell$) ranging from $0$ to $T(|\mathbf{y}_i| + b)$.
    
    \item \emph{Constraints Encoding the SAT Query~\eqref{eq:uw-smoop-sat}.}
    We introduce auxiliary Boolean variables $b_i^{(j)}$ with constraints
    \[
    b_i^{(j)} \Leftrightarrow \psi_i^{(j)}(\mathbf{x}, \{\mathbf{y}_i^{(j,t)}\}_{t=1}^T, \{\mathbf{z}_i^{(j,t)}\}_{t=1}^T),
    \quad i = 1, \dots, k,\; j = 1, \dots, m,
    \]
    and enforce majority constraints
    \[
    \sum_{j=1}^m b_i^{(j)} > \frac{m}{2}, \quad i = 1, \dots, k.
    \]
\end{itemize}

{
For the weighted version, instead of considering the weighted objective function $f_i$ directly, we measure the SAT query size relative to the encoding of the indicator function $\hat{f}_i$ (Lemma~\ref{lem:count-ineqality}), which indicates whether $f_i$ exceeds a given threshold.
In a SAT query, there are $O(m)$ constraints, and in each constraint, the encoding of the indicator function appears roughly $O(T)$ times when going from $\hat{f}_i$ to $\hat{f}_i^{[T]}$ (Definition~\ref{def:pseudo-smoop}). 
Therefore, the size of one SAT query can be measured as $O(mT)$.
Since the approximation factor $\gamma$ is a user-specified parameter independent of the problem size, it can be treated as a constant in the asymptotic analysis. Therefore, the problem size simplifies to
$O\left(
n
+k\log(|Y|+\log U)
+\log\tfrac{1}{\delta}
\right)$.
}

{ \textbf{Total number of SAT queries.}~~
The total number of SAT queries solved by w-\method is $T^k\prod_{i=1}^k (|\mathbf{y}_i|+b)$. To achieve a fixed approximation factor $\gamma > 1$ for a $\gamma$-approximate Pareto frontier, let $U = \max_{i \in [k]} U_i/L_i$ and $|Y|\triangleq \max_{i\in[k]}|\mathbf{y}_i|$, so that
$\prod_{i=1}^k(|\mathbf{y}_i|+b)\le(|Y|+b)^k$. According to Corollary~\ref{cly:w-pareto-solution}, we select the parameters: $b=\log_2(U-1)-\log_2(\gamma-\sqrt{\gamma})$ and $T=10/\log_2\gamma$.
Since constants and logarithm bases do not affect asymptotic order, we have
$T^k\in O((\log\gamma)^{-k})$ and
$b\in \Theta(\log(U/(\gamma-\sqrt{\gamma})))$.
As $\gamma\to1^+$, we have $\log\gamma\in \Theta(\gamma-1)$ and
$\gamma-\sqrt{\gamma}\in \Theta(\gamma-1)$, yielding the simplified bound
$O\big(((|Y|+\log U+\log\frac{1}{\gamma-1})/(\gamma-1))^k\big)$.}

\section{Experiment}

We evaluate the proposed method on two scenarios. The first scenario, \textit{Road Network Strengthening to Mitigate Seasonal Disruptions} (Sec.~\ref{sec:seasonal}), reflects a realistic and complex SMOO setting in which we search for optimal road network strengthening plans to optimize connectivity across two seasons, with traffic patterns varying stochastically. It is constructed from real-world road networks obtained from OpenStreetMap~\cite{OpenStreetMap} and incorporates seasonal disruption patterns derived from geographically grounded weather records from the Meteostat library~\cite{meteostat}.
The second scenario, \textit{Flexible Supply Chain Network Design} (Sec.~\ref{sec:supply}), where we aim at designing a robust supply chain network maximizing flexibility (e.g., the number of routes each material can be sourced) while minimizing cost. It is derived from standard TSPLIB~\cite{reinelt1991tsplib} benchmark instances, which are widely used in combinatorial optimization research.


Experimental results show that \method consistently found better Pareto frontiers compared to baselines. This can be justified from several aspects: 
\begin{enumerate}
\item\textbf{\textit{Better Pareto solutions}}: intuitively, this means that our methods find solutions that have the best objective values among those found by all solvers. This is reflected by the \textit{Generational Distance (GD)} metric.
\item\textbf{\textit{Better coverage}}: at a high level, this means that for every Pareto optimal solution, our method is more likely to find one closely approximating it. This is reflected by the \textit{Inverted Generational Distance (IGD)} and \textit{Hypervolume (HV)} metric.
\item\textbf{\textit{More evenly distributed solutions}}: this means that the solutions found by our method spread out more evenly in the entire domain, hence capturing a large portion of the Pareto frontier. This is reflected by the \textit{Spacing (SP)} metric.
\end{enumerate}
We can also verify the superiority of \method visually in Figure \ref{fig:seasonal_event} and \ref{fig:supply_pareto}. 
The performance advantage becomes more pronounced as the objective becomes more difficult, supporting the effectiveness of our proposed approach.

\subsection{Baselines}

For baseline methods, we include several widely used state-of-the-art multi-objective algorithms, namely AGE-MOEA~\cite{age-moea}, NSGA-II~\cite{nsga2}, RVEA~\cite{rvea}, C-TAEA~\cite{c-taea}, and SMS-EMOA~\cite{sms-emoa}, implemented in PyMOO~\cite{pymoo}. 
These algorithms represent diverse design principles, including dominance-based, hypervolume-based, reference-vector-based, and constraint-handling approaches, and have demonstrated strong empirical performance on standard benchmark problems.

%
%
Baseline solvers require exact evaluation of all objective functions, including model counting objectives.
Because our two applications involve different forms of counting (weighted and unweighted), we use different model counters to be embedded in these baseline optimizers. 
In Section~\ref{sec:seasonal}, the objective is weighted model counting for computing reachability probabilities under stochastic disruptions. 
For this, we use Toulbar2~\cite{cooper2010toulbar} to perform probabilistic inference. 
In Section~\ref{sec:supply}, the objective reduces to unweighted model counting. 
For this setting, we use GANAK-2.4.6~\cite{SM2025,SRSM19} to compute exact counts.

For simplicity of evaluation, we assume that all objective functions involving model counting can be computed within a short time frame ($\sim$10 minutes) under a fixed policy. 
This is not too short because each solver potentially needs to evaluate many (millions of) different policies.
Without this assumption, baseline methods often fail to produce any solution within a reasonable time frame (e.g., hours).
Although exact model counting could be highly intractable, our approach can still generate feasible approximate solutions.
Also, because the model counting under a fixed policy can be computed in a relatively short amount of time, we use the exact objective values when comparing the quality of the Pareto frontiers produced by different solvers.

\subsection{Metrics}

Since computing the true Pareto frontier is infeasible for the benchmark problems considered, we adopt a common strategy in the literature: constructing a \textit{reference Pareto frontier}, denoted by $\mathcal{P}$, by merging the non-dominated solutions obtained by all solvers under comparison. This aggregated frontier serves as a proxy for the ground truth and enables consistent evaluation across methods.

Let $\hat{\mathcal{P}}$ denote the approximate Pareto frontier returned by a solver. We evaluate its quality using the following metrics:
\begin{itemize}
\item \textbf{Generational Distance (GD)}: Measures the average distance from each solution in $\hat{\mathcal{P}}$ to the closest point in the reference frontier $\mathcal{P}$, reflecting the solution quality (aka. convergence):
\[
\mathrm{GD}(\hat{\mathcal{P}}) 
= \frac{1}{|\hat{\mathcal{P}}|} 
\sum_{x \in \hat{\mathcal{P}}} 
\min_{y \in \mathcal{P}} \| x - y \|.
\]

\item \textbf{Inverted Generational Distance (IGD)}: Measures the average distance from each point in the reference frontier $\mathcal{P}$ to the closest solution in $\hat{\mathcal{P}}$, reflecting coverage:
\[
\mathrm{IGD}(\hat{\mathcal{P}}) 
= \frac{1}{|\mathcal{P}|} 
\sum_{y \in \mathcal{P}} 
\min_{x \in \hat{\mathcal{P}}} \| y - x \|.
\]

\item \textbf{Hypervolume (HV)}: Measures the volume of the objective space dominated by $\hat{\mathcal{P}}$ and bounded by a reference point $r$:
\[
\mathrm{HV}(\hat{\mathcal{P}}) 
= \mathrm{vol} \Big( 
\bigcup_{x \in \hat{\mathcal{P}}} [x, r] 
\Big).
\]
It captures both convergence and diversity. A larger HV indicates a better approximation of the reference frontier.

\item \textbf{Spacing (SP)}: Measures the uniformity of distances between neighboring solutions in $\hat{\mathcal{P}}$. Let $d_i$ denote the minimum distance from solution $x_i \in \hat{\mathcal{P}}$ to any other solution in the same set, and let $\bar{d}$ be their mean. Then:
\[
\mathrm{SP}(\hat{\mathcal{P}}) 
= \sqrt{ 
\frac{1}{|\hat{\mathcal{P}}|-1} 
\sum_{i} (d_i - \bar{d})^2 
}.
\]
A smaller SP indicates a more evenly distributed set of solutions.

\end{itemize}

Since objective values vary across different problem instances, we normalize all objectives to the range $[0,1]$ to enable consistent and interpretable comparisons.

\subsection{Scenario 1: Road Network Strengthening to Mitigate Seasonal Disruptions}
\label{sec:seasonal}

Urban road networks are exposed to uncertain and potentially correlated disruptions such as severe weather and seasonal traffic variations. These disruptions may temporarily disable road segments and affect connectivity between critical locations.

We study the following planning problem: \textit{given uncertain seasonal disruptions, how should a limited number of road segments be strengthened to maximize the probability that a critical destination remains reachable?}
\begin{figure}[h]
    \centering
    \includegraphics[width=0.9\linewidth]{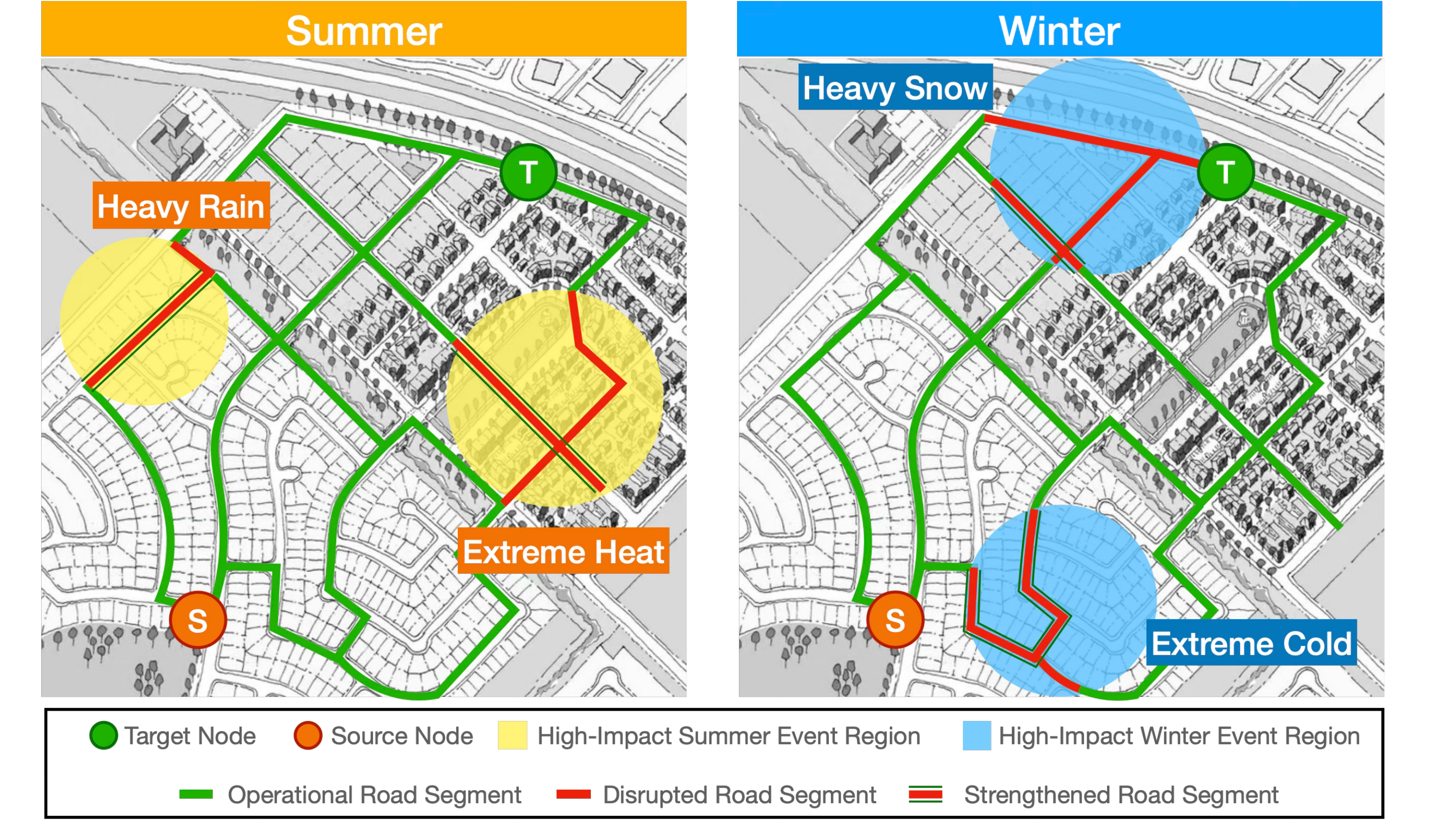}
    \caption{{Road Network Strengthening to Mitigate Seasonal Disruptions. Green edges denote operational road segments, while red edges represent disrupted segments under seasonal events. Yellow and blue regions indicate high-impact disruption areas in summer (e.g., heavy rain, extreme heat) and winter (e.g., heavy snow, extreme cold), respectively. The planner selects a limited set of road segments to strengthen in order to maintain connectivity between the source node $S$ and the target node $T$ under seasonal disruptions.}
}
    \label{fig:app_seasonal}
\end{figure}

\paragraph{Road Network and Decisions}
We model the transportation system as a road network $G = (V, E)$, where $V$ denotes the set of nodes (road intersections) and $E$ denotes the set of edges (road segments). 
Two special nodes are designated: a source node $s \in V$ (e.g., an emergency staging point) and a target node $t \in V$ (e.g., a hospital or evacuation site). We define reachability as whether the target remains reachable from the source within a fixed number of road segments (a hop limit $T$).

For each road segment, the planner makes a binary decision indicating whether the segment is strengthened. Let
\[
\mathbf{x} = (x_1, \dots, x_{|E|}) \in \{0,1\}^{|E|}
\]
denote the strengthening decisions, where $x_e = 1$ if road segment $e$ is strengthened, and $x_e = 0$ otherwise. Strengthened road segments remain operational even if affected by disruption events, whereas unstrengthened segments may become unavailable when certain events occur.

\paragraph{Seasonal Events and Road Disruptions}
Disruptions are modeled as binary random events. Each event represents a localized disturbance, such as heavy snowfall, high winds, extreme heat, or heavy rainfall.
Let
\[
\mathbf{s} = (s_1, \dots, s_{|S|}) \in \{0,1\}^{|S|}
\]
denote the indicators of seasonal disruption events, where $s_i = 1$ indicates that event $i$ occurs.

Each event is associated with a specific subset of road segments. If an event occurs, the associated roads become unavailable unless they have been strengthened (i.e., unless $x_e = 1$).

\paragraph{SMOOP Formulation}
We formulate a stochastic multi-objective optimization (SMOO) problem that maximizes seasonal connectivity probabilities:
\[
\max_{\mathbf{x}} 
\Big(
\sum_{\mathbf{s} \in S} \Pr_{\text{summer}}(\mathbf{s})  
\mathbb{I}\big[\text{$s,t$ connected} |\mathbf{s}, \mathbf{x}\big],
\sum_{\mathbf{s} \in S} \Pr_{\text{winter}}(\mathbf{s})  
\mathbb{I}\big[\text{$s,t$ connected} |\mathbf{s}, \mathbf{x}\big]
\Big)
\]
where $\mathbb{I}[\cdot]$ is an indicator function that equals $1$ if the source and target remain connected within hop limit $T$ under disruption events $\mathbf{s}$ and strengthening decisions $\mathbf{x}$, and equals $0$ otherwise.

\paragraph{Experimental Setting}

We construct a real road network using OpenStreetMap data centered at Central Park, New York, US, within a chosen radius for demonstration. The size of the graph varies with the selected radius (500m, 1000m, and 1500m). Connectivity is measured using maximum hop limits of 8, 10, and 12, respectively.

Because strengthening all roads is unrealistic, we impose a budget constraint such that at most $10\%$ of the total road segments may be strengthened: $\sum_{e \in E} x_e \leq 0.1 |E|.$

In our experiments, we used 12, 30, and 50 disruption events for networks of increasing sizes. The joint distribution of events is fitted to historical weather data near Central Park. Each optimization run is given a time limit of one hour.

\paragraph{Results}
Table~\ref{tab:seasonal_event} reports the Pareto frontier quality metrics under a one-hour time limit across different radii (500m, 1000m, and 1500m). 
Lower GD, IGD, and SP indicate better performance, while higher HV indicates better performance. 
Across all settings, \method achieves the lowest (or tied-lowest) GD, the lowest IGD, the highest HV, and competitive or lowest SP values, indicating stronger convergence, broader frontier coverage, and better solution distribution. Figure~\ref{fig:seasonal_event} visualizes the corresponding Pareto frontier curves. 
Since both objectives are maximized, solutions closer to the upper-right corner are more desirable. 
The curves produced by \method extend further toward the upper-right region and recover a broader Pareto frontier compared to baseline methods.

The main reason \method achieves better solution quality is that baseline solvers rely on iterative search. 
They must generate candidate solutions, evaluate them exactly, and gradually improve the population. 
This process is time-consuming, and under a fixed time limit, they may not explore enough of the objective space to find the best trade-offs. 
In some difficult cases, baseline solvers may even fail to produce meaningful Pareto solutions within the time limit. 
Although they may reach good solutions given unlimited time, they do not perform as well under strict time constraints.

In contrast, \method does not depend on iterative evolutionary search. 
Instead, it divides the objective space into grids and checks feasibility using satisfiability queries. 
This allows the method to systematically explore the entire objective space, including extreme (corner) trade-offs that evolutionary solvers may miss due to random initialization and stochastic updates. 
As a result, \method achieves better coverage of the Pareto frontier and produces more uniformly distributed solutions.

In summary, by reducing SMOOP to a satisfiability problem, \method can search more efficiently within a fixed time budget and obtain higher-quality, better-distributed Pareto solutions.

\begin{table*}[h]
\centering
\caption{Pareto frontier quality metrics for the Road Network Strengthening under Seasonal Disruptions scenario. 
Lower GD, IGD, and SP indicate better performance ($\downarrow$), while higher HV indicates better performance ($\uparrow$). All reported values are given as mean $\pm$ standard deviation over five independent runs, each conducted under a one-hour time limit.
Cell colors denote ranking: \textcolor{blue!80}{deep blue} indicates the best performance, and \textcolor{blue!40}{light blue} indicates the second best.
Across all radii (500m, 1000m, 1500m), \method consistently achieves the lowest (or tied-lowest) GD, indicating that its solutions are closest to the reference Pareto frontier and thus exhibit the best solution quality. It also attains the lowest IGD and highest HV scores, demonstrating superior coverage of the frontier, strong approximation across all regions of the Pareto set, and successful discovery of corner trade-off solutions. 
Finally, its low SP values indicate a more evenly-distributed solutions along the frontier, reflecting better diversity and stability.}

\resizebox{\textwidth}{!}{%
\begin{tabular}{c|l|cccc}
\toprule
\textbf{Radius} & \textbf{Solver} & GD$\downarrow$ & IGD$\downarrow$ & HV$\uparrow$ & SP$\downarrow$ \\
\midrule

\multirow{6}{*}{\begin{tabular}{c}
500m 
\end{tabular}}
& \method   &
$<10^{-6}$ &
\cellcolor{blue!30} \pmv{0.009}{0.004} &
\cellcolor{blue!30} \pmv{0.796}{0.007} &
\cellcolor{blue!30} \pmv{0.024}{0.005} \\

& NSGA2     &
$<10^{-6}$ &
\pmv{0.142}{0.037} &
\pmv{0.772}{0.014} &
\pmv{0.078}{0.053} \\

& AGE-MOEA  &
$<10^{-6}$ &
\pmv{0.099}{0.050} &
\cellcolor{blue!10} \pmv{0.787}{0.009} &
\pmv{0.155}{0.025} \\

& C-TAEA    &
$<10^{-6}$ &
\cellcolor{blue!10} \pmv{0.051}{0.004} &
\pmv{0.773}{0.008} &
\pmv{0.121}{0.031} \\

& RVEA      &
$<10^{-6}$ &
\pmv{0.177}{0.005} &
\pmv{0.733}{0.008} &
\cellcolor{blue!10} \pmv{0.027}{0.010} \\

& SMS-EMOA  &
$<10^{-6}$ &
\pmv{0.171}{0.022} &
\pmv{0.716}{0.031} &
\pmv{0.077}{0.016} \\

\midrule

\multirow{6}{*}{\begin{tabular}{c}
1000m 
\end{tabular}}
& \method   &
\cellcolor{blue!30} \pmv{0.002}{0.001} &
\cellcolor{blue!30} \pmv{0.045}{0.003} &
\cellcolor{blue!30} \pmv{0.861}{0.017} &
\cellcolor{blue!30} \pmv{0.051}{0.021} \\

& NSGA2     &
\pmv{0.006}{0.003} &
\pmv{0.176}{0.008} &
\pmv{0.851}{0.017} &
\pmv{0.059}{0.020} \\

& AGE-MOEA  &
\pmv{0.009}{0.002} &
\cellcolor{blue!10} \pmv{0.154}{0.016} &
\cellcolor{blue!10} \pmv{0.859}{0.003} &
\pmv{0.072}{0.028} \\

& C-TAEA    &
\pmv{0.008}{0.003} &
\pmv{0.200}{0.022} &
\pmv{0.808}{0.047} &
\pmv{0.082}{0.037} \\

& RVEA      &
\pmv{0.009}{0.005} &
\pmv{0.189}{0.019} &
\pmv{0.821}{0.012} &
\pmv{0.057}{0.008} \\

& SMS-EMOA  &
\cellcolor{blue!10} \pmv{0.002}{0.002} &
\pmv{0.211}{0.007} &
\pmv{0.800}{0.031} &
\cellcolor{blue!10} \pmv{0.051}{0.023} \\

\midrule

\multirow{6}{*}{\begin{tabular}{c}
1500m 
\end{tabular}}
& \method   &
\cellcolor{blue!30} \pmv{0.002}{0.001} &
\cellcolor{blue!30} \pmv{0.008}{0.002} &
\cellcolor{blue!30} \pmv{0.921}{0.008} &
\cellcolor{blue!30} \pmv{0.012}{0.005} \\

& NSGA2     &
\cellcolor{blue!10} \pmv{0.002}{0.003} &
\cellcolor{blue!10} \pmv{0.106}{0.009} &
\cellcolor{blue!10} \pmv{0.894}{0.001} &
\pmv{0.042}{0.014} \\

& AGE-MOEA  &
\pmv{0.017}{0.004} &
\pmv{0.257}{0.015} &
\pmv{0.876}{0.004} &
\pmv{0.040}{0.007} \\

& C-TAEA    &
\pmv{0.011}{0.002} &
\pmv{0.205}{0.016} &
\pmv{0.876}{0.005} &
\pmv{0.050}{0.004} \\

& RVEA      &
\pmv{0.020}{0.010} &
\pmv{0.242}{0.020} &
\pmv{0.877}{0.005} &
\cellcolor{blue!10} \pmv{0.028}{0.017} \\

& SMS-EMOA  &
\pmv{0.006}{0.001} &
\pmv{0.221}{0.024} &
\pmv{0.890}{0.012} &
\pmv{0.062}{0.022} \\

\bottomrule
\end{tabular}%
}
\label{tab:seasonal_event}
\end{table*}

\begin{figure*}[!h]
    \centering
    \begin{subfigure}[b]{0.45\linewidth}
        \includegraphics[width=\linewidth]{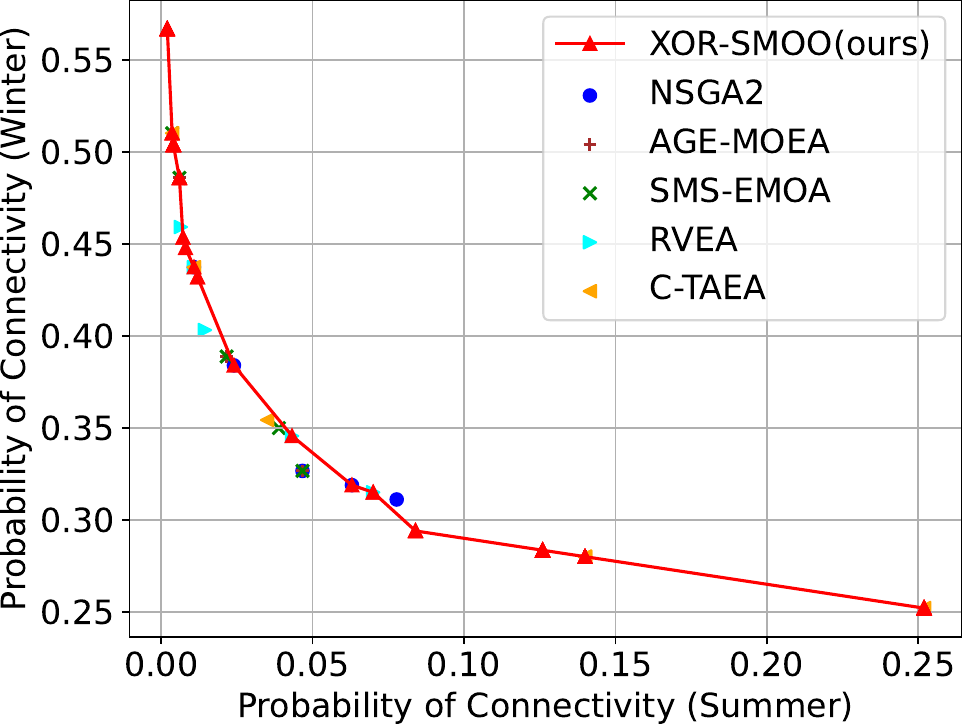}
        \caption{Radius 500m}
    \end{subfigure}
    \begin{subfigure}[b]{0.45\linewidth}
        \includegraphics[width=\linewidth]{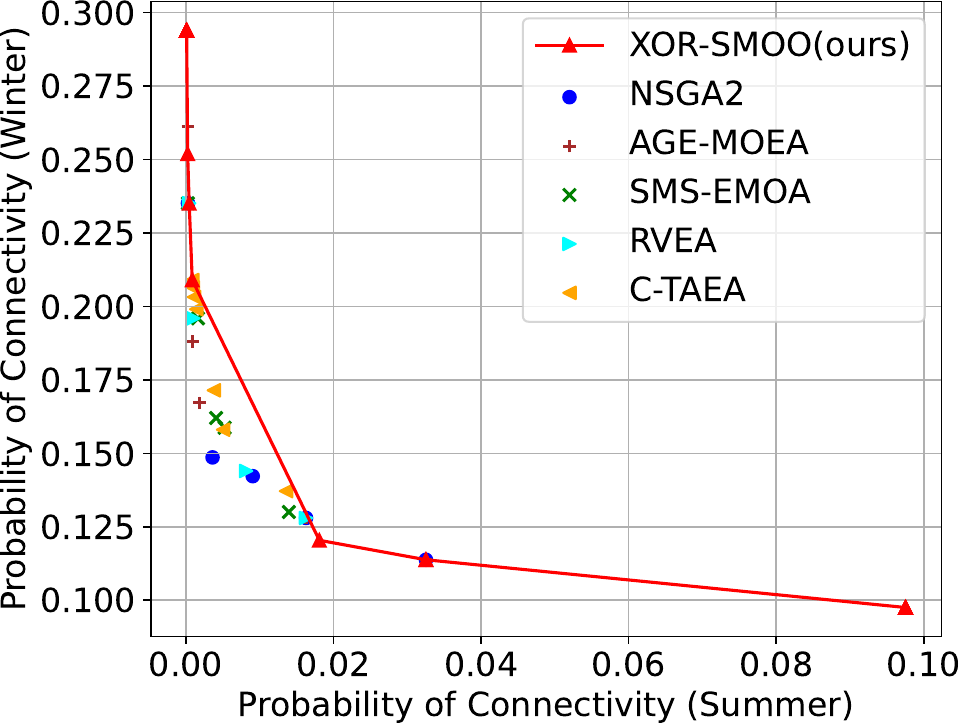}
        \caption{Radius 1000m}
    \end{subfigure}\\
    \begin{subfigure}[b]{0.45\linewidth}
        \includegraphics[width=\linewidth]{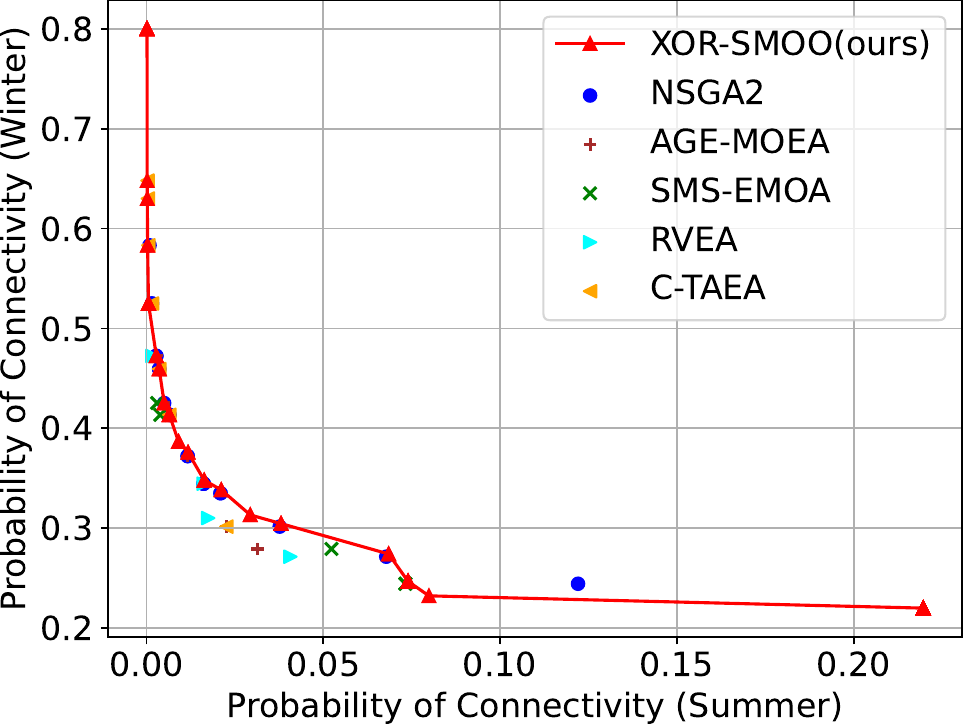}
        \caption{Radius 1500m}
    \end{subfigure}
    \caption{Pareto frontier curves for the Road Network Strengthening under Seasonal Disruptions scenario. Both objectives are maximized; therefore, solutions closer to the upper-right corner represent better trade-offs. Overall, \method consistently identifies higher-quality solutions and recovers a broader Pareto frontier, achieving better solution quality and Pareto frontier coverage compared to baseline methods.
    }
    \label{fig:seasonal_event}
\end{figure*}

\clearpage
\subsection{Scenario 2: Flexible Supply Chain Network Design}
\label{sec:supply}

A supply chain network connects supply sources to demand locations through intermediate transfer hubs. Activating more routes increases routing flexibility and robustness, but also increases construction or operational cost. A central planning question is:

\emph{Given a supplier and a demander, which subset of transportation routes should be activated to maximize delivery flexibility under a limited budget?}

\paragraph{Supply Chain Network and Decisions}

We model the supply chain network as a directed transportation network $G = (V, E)$, where $V$ denotes transfer hubs and $E$ denotes potential transportation routes between hubs. Two special nodes are designated: a \emph{supplier} node $s \in V$ and a \emph{demander} node $t \in V$.

\begin{wrapfigure}{r}{0.45\linewidth}
    \centering
    \vspace{-10pt}
    \includegraphics[width=0.9\linewidth]{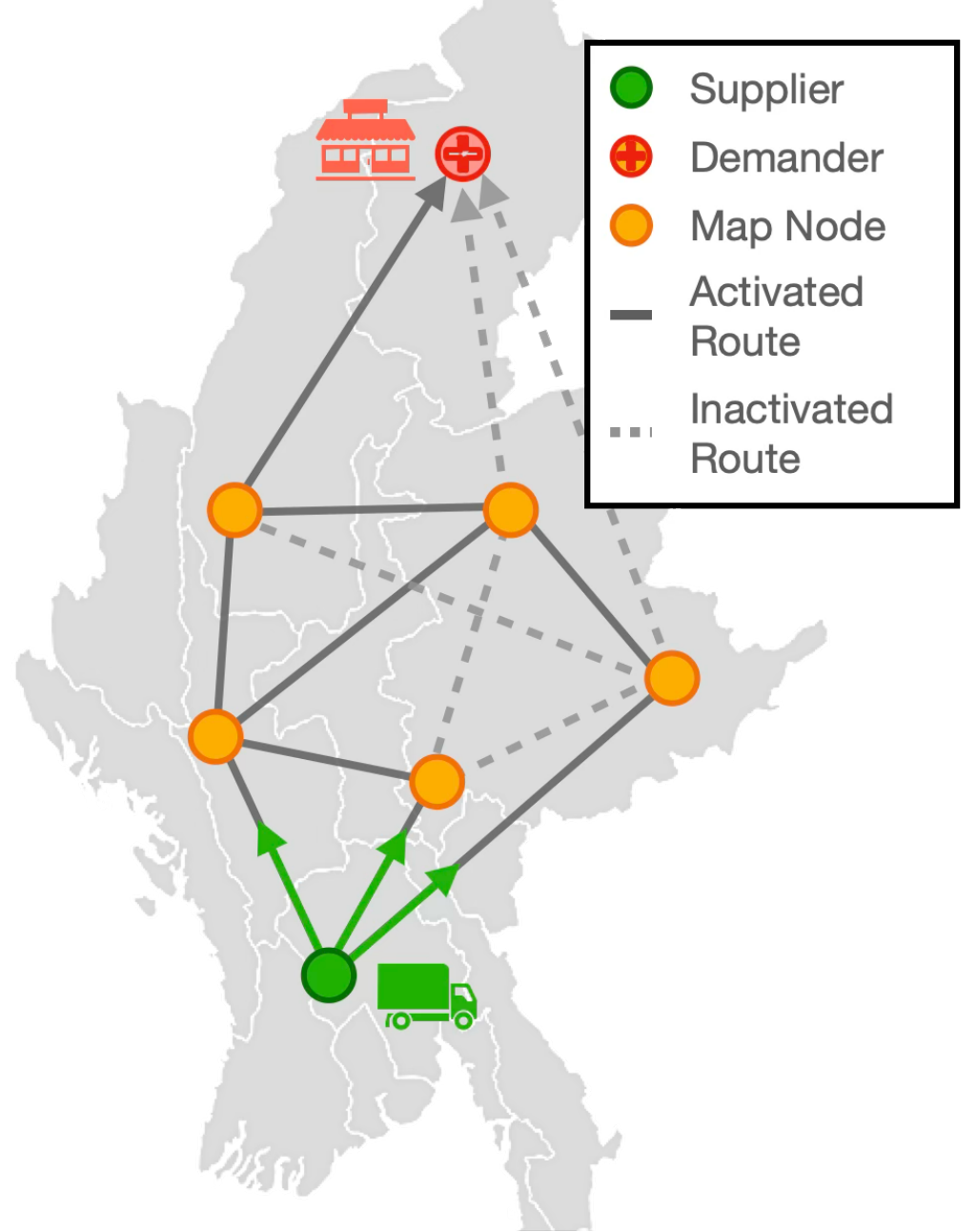}
    \caption{Flexible Supply Chain Network Design. The supplier (green) delivers goods to the demander (red) through intermediate transfer hubs (orange). Solid edges denote activated routes, while dashed edges are inactive. The goal is to balance delivery flexibility and cost through route selection.}
    \label{fig:supply}
    \vspace{-10pt}
\end{wrapfigure}

The planner selects which routes are activated. Let
\[
\mathbf{x} = (x_1, \dots, x_{|E|}) \in \{0,1\}^{|E|}
\]
denote the activation decisions, where $x_e = 1$ if route $e$ is active and $x_e = 0$ otherwise. Only activated routes may be used to transport goods.

\paragraph{Delivery Flexibility}

To characterize delivery flexibility, consider a binary vector
\[
\mathbf{f} = (f_1,\dots, f_{|E|}) \in \{0,1\}^{|E|}
\]
where $f_e = 1$ indicates that one unit of goods is sent through route $e$, and $f_e = 0$ otherwise.

A shipment pattern $\mathbf{f}$ is feasible if: (i) it only uses activated routes (i.e., $f_e \le x_e$ for all $e$), and  
(ii) goods are conserved at intermediate transfer hubs (what enters must leave),  
(iii) there exists a net shipment from supplier $s$ to demander $t$.

We quantify delivery flexibility by explicitly counting feasible shipment patterns:
\[
F(\mathbf{x})
=
\sum_{\mathbf{f} \in \{0,1\}^{|E|}}
\mathbb{I}\big[
\mathbf{f} \text{ forms a valid shipment under } \mathbf{x}
\big],
\]
where $\mathbb{I}[\cdot]$ equals $1$ if the shipment pattern is feasible and $0$ otherwise.

The value $F(\mathbf{x})$ reflects the structural flexibility of the activated network: a larger count indicates more distinct ways to route goods from supplier to demander. However, computing $F(\mathbf{x})$ requires checking feasibility over exponentially many binary configurations in $\{0,1\}^{|E|}$, which becomes intractable for large networks.

\paragraph{SMOOP Formulation}

To obtain a scale-independent objective, we normalize delivery flexibility by the full-network flexibility:
\[
F_{\max} := F(\mathbf{1}),
\qquad
\widetilde{F}(\mathbf{x}) = \frac{F(\mathbf{x})}{F_{\max}}.
\]
The quantity $\widetilde{F}(\mathbf{x}) \in [0,1]$ represents the fraction of total achievable delivery flexibility preserved under decision $\mathbf{x}$, and can be interpreted as \emph{flexibility retention}.

We model cost using the total length of activated routes. Let $d_e$ denote the distance of route $e$. The total cost is
$\sum_{e \in E} d_e \, x_e$, which captures construction or operational expenses. We therefore solve the following SMOO problem, balancing flexibility retention against total route distance:
\[
\max_{\mathbf{x}}
\Big(
\widetilde{F}(\mathbf{x}),
-
\sum_{e \in E} d_e \, x_e
\Big).
\]

\paragraph{Experimental Setting}
We evaluate the proposed method on transportation networks derived from TSPLIB benchmark instances. We use three maps: \texttt{Burma7}, \texttt{Burma14}, and \texttt{Ulysses16}, where the number indicates the number of cities in the instance. Each city is treated as a transfer hub, and edges correspond to transportation routes with Euclidean distances provided by TSPLIB. For each instance, the supplier $s$ and the demander $t$ are selected uniformly at random from the set of hubs. 

All experiments are repeated five independent times. Each run is given a time limit of one hour. Performance is evaluated using standard multi-objective metrics.

\paragraph{Results}

Table~\ref{tab:supply_chain} reports quantitative Pareto frontier quality metrics, and Figure~\ref{fig:supply_pareto} visualizes the corresponding Pareto curves.

Across all three maps, \method consistently achieves the lowest GD, indicating superior convergence toward the reference Pareto frontier. It also obtains the lowest IGD and the highest HV values in nearly all cases, demonstrating better coverage and approximation of the full Pareto frontier surface. In particular, \method more reliably identifies extreme trade-off solutions, which contributes to its strong HV performance. The SP values are also highest, indicating that its solutions are evenly distributed along the frontier.

The performance gap becomes more significant as the network size increases from Burma7 to Burma14 and Ulysses16. Since delivery flexibility is defined by counting feasible shipment patterns, the counting space grows exponentially with the number of routes. This significantly increases the difficulty of accurately evaluating and optimizing the flexibility objective. While baseline evolutionary methods degrade noticeably under this combinatorial growth, \method maintains strong convergence and coverage, leading to a widening performance gap on larger instances.

These results show that our XOR-counting-based method is particularly effective for problems with combinatorial model counting objectives.

\begin{table*}[!h]
\centering
\caption{Pareto frontier quality metrics for the Flexible Supply Chain Network Design problem. Lower GD, IGD, and SP indicate better performance ($\downarrow$), while higher HV indicates better performance ($\uparrow$). Values are mean $\pm$ standard deviation over five independent runs, with each run limited to one hour. Cell colors denote ranking: \textcolor{blue!80}{deep blue} indicates best and \textcolor{blue!40}{light blue} indicates second best. Results are reported for three TSPLIB instances (Burma7, Burma14, and Ulysses16), where the number denotes the number of transfer hubs.
Across all instances, \method achieves the lowest GD, indicating better solutions, and consistently superior IGD and HV, meaning better coverage of the Pareto frontier. Its competitive SP values further demonstrate more evenly distributed solutions. Noted that as the number of hubs increases, the flexibility objective becomes more challenging due to the exponential growth of feasible shipment patterns, and the performance gap between \method and baselines widens.
}

\resizebox{\textwidth}{!}{%
\begin{tabular}{c|l|cccc}
\toprule
\textbf{Radius} & \textbf{Solver} & GD$\downarrow$ & IGD$\downarrow$ & HV$\uparrow$ & SP$\downarrow$ \\
\midrule

\multirow{6}{*}{\begin{tabular}{c}
Burma7 
\end{tabular}}
& \method   & \cellcolor{blue!30} \pmv{0.0057}{0.0009} & \cellcolor{blue!30} \pmv{0.0174}{0.0054} & \cellcolor{blue!30} \pmv{0.2478}{0.0026} & \cellcolor{blue!30} \pmv{0.0423}{0.0058} \\

& NSGA2     & \pmv{0.0278}{0.0028} & \cellcolor{blue!10} \pmv{0.0497}{0.0092} & \cellcolor{blue!10} \pmv{0.2148}{0.0109} & \cellcolor{blue!10} \pmv{0.0462}{0.0032} \\

& AGE-MOEA  & \pmv{0.0214}{0.0033} & \pmv{0.0603}{0.0044} & \pmv{0.2091}{0.0022} & \pmv{0.0493}{0.0026} \\

& C-TAEA    & \cellcolor{blue!10} \pmv{0.0059}{0.0026} & \pmv{0.0884}{0.0129} & \pmv{0.1747}{0.0146} & \pmv{0.0573}{0.0059} \\

& RVEA      & \pmv{0.0172}{0.0024} & \pmv{0.0769}{0.0009} & \pmv{0.1881}{0.0021} & \pmv{0.0858}{0.0012} \\

& SMS-EMOA  & \pmv{0.0191}{0.0013} & \pmv{0.0667}{0.0032} & \pmv{0.1921}{0.0033} & \pmv{0.0556}{0.0162} \\

\midrule

\multirow{6}{*}{\begin{tabular}{c}
Burma14 
\end{tabular}}
& \method   & \cellcolor{blue!30} $<10^{-6}$ & \cellcolor{blue!30} \pmv{0.0014}{0.0014} & \cellcolor{blue!30} \pmv{0.1059}{0.0078} & \cellcolor{blue!30} \pmv{0.0188}{0.0074} \\

& NSGA2     & \pmv{0.0402}{0.0044} & \cellcolor{blue!10} \pmv{0.1550}{0.0104} & \pmv{0.0326}{0.0041} & \pmv{0.0203}{0.0046} \\

& AGE-MOEA  & \pmv{0.0239}{0.0038} & \pmv{0.1595}{0.0154} & \pmv{0.0278}{0.0027} & \pmv{0.0203}{0.0075} \\

& C-TAEA    & \pmv{0.0312}{0.0025} & \pmv{0.2213}{0.0081} & \cellcolor{blue!10} \pmv{0.0366}{0.0036} & \cellcolor{blue!10} \pmv{0.0192}{0.0047} \\

& RVEA      & \pmv{0.0988}{0.0047} & \pmv{0.2961}{0.0202} & \pmv{0.0253}{0.0021} & \pmv{0.0265}{0.0030} \\

& SMS-EMOA  & \cellcolor{blue!10} \pmv{0.0152}{0.0012} & \pmv{0.1579}{0.0158} & \pmv{0.0328}{0.0028} & \pmv{0.0303}{0.0107} \\

\midrule

\multirow{6}{*}{\begin{tabular}{c}
Ulysses16 
\end{tabular}}
& \method   & \cellcolor{blue!30}$<10^{-6}$ &\cellcolor{blue!30} $<10^{-6}$ & \cellcolor{blue!30} \pmv{0.0810}{0.0207} & \cellcolor{blue!30} \pmv{0.0134}{0.0213} \\

& NSGA2     & \pmv{0.0155}{0.0030} & \cellcolor{blue!10} \pmv{0.1688}{0.0123} & \pmv{0.0019}{0.0006} & \pmv{0.0183}{0.0052} \\

& AGE-MOEA  & \pmv{0.0186}{0.0030} & \pmv{0.1926}{0.0105} & \cellcolor{blue!10}\pmv{0.0036}{0.0002} & \cellcolor{blue!10} \pmv{0.0138}{0.0013} \\

& C-TAEA    & \cellcolor{blue!10} \pmv{0.0186}{0.0021} & \pmv{0.2782}{0.0103} & \pmv{0.0021}{0.0002} & \pmv{0.0153}{0.0012} \\

& RVEA      & \pmv{0.0421}{0.0021} & \pmv{0.3421}{0.0108} & \pmv{0.0035}{0.0003} & \pmv{0.0613}{0.0022} \\

& SMS-EMOA  & \pmv{0.0165}{0.0015} & \pmv{0.1801}{0.0079} & \pmv{0.0017}{0.0002} & \pmv{0.0188}{0.0030} \\

\bottomrule
\end{tabular}%
}
\label{tab:supply_chain}
\end{table*}

\begin{figure*}[h]
    \centering
    \begin{subfigure}[b]{0.45\linewidth}
        \includegraphics[width=\linewidth]{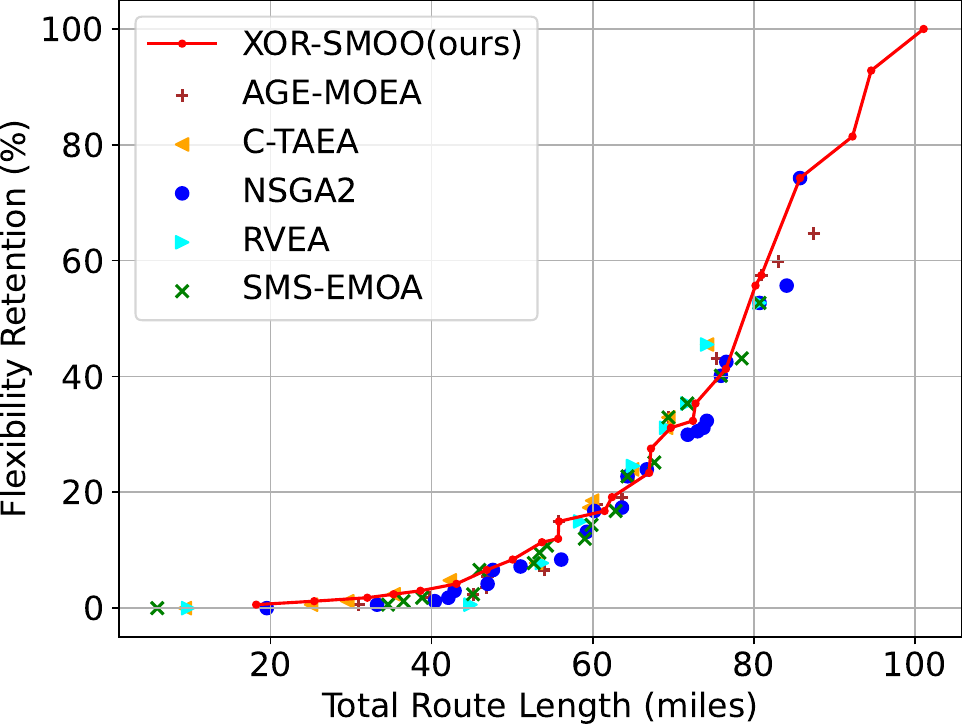}
        \caption{Burma 7 Cities}
    \end{subfigure}
    \begin{subfigure}[b]{0.45\linewidth}
        \includegraphics[width=\linewidth]{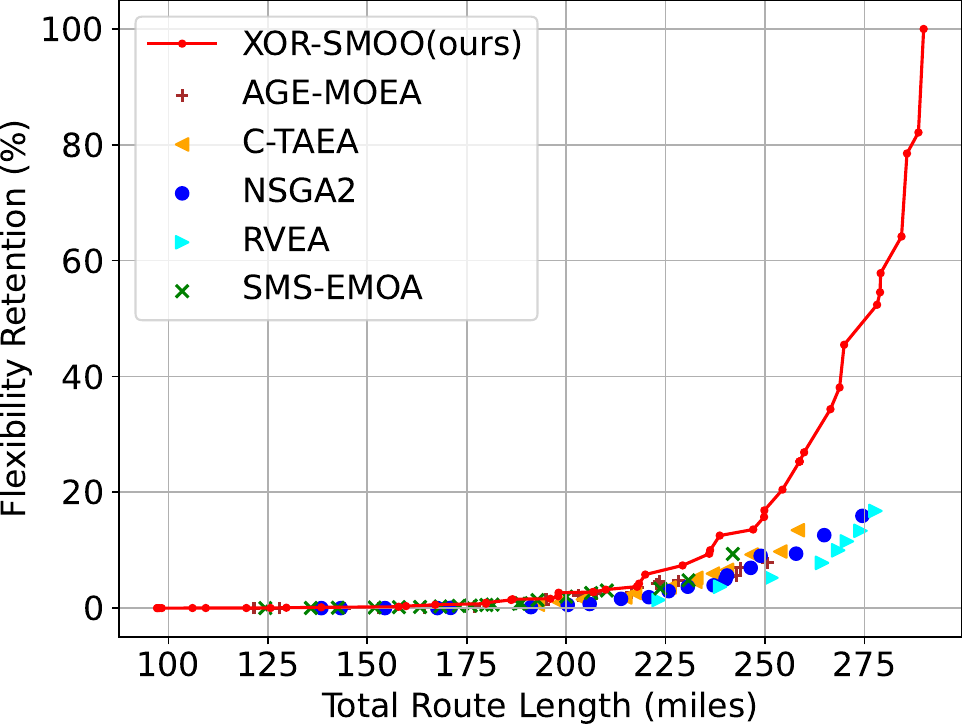}
        \caption{Burma 14 Cities}
    \end{subfigure}\\
    \begin{subfigure}[b]{0.45\linewidth}
        \includegraphics[width=\linewidth]{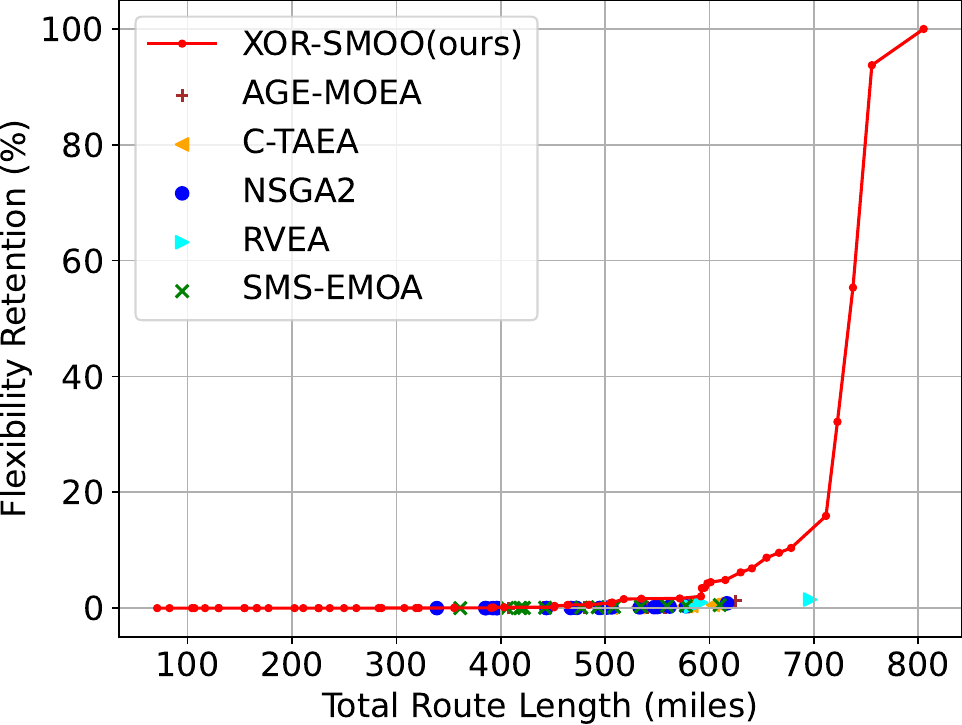}
        \caption{Ulysses 16 Cities}
    \end{subfigure}
    \caption{Pareto frontier curves for the Transportation Network Design problem. The flexibility objective is maximized while total length is minimized; therefore, solutions closer to the upper-left corner represent better trade-offs. Overall, \method consistently identifies higher-quality solutions and recovers a broader Pareto frontier, achieving superior convergence and coverage compared to baseline methods. As the number of transfer hubs increases, the combinatorial flexibility objective becomes more challenging, and the performance gap between XOR-SMOO and other SOTA methods widens, further highlighting the advantage of our approach.}
    \label{fig:supply_pareto}
\end{figure*}

\section{Conclusion}
We proposed \method, a series of novel and efficient algorithms for solving Stochastic Multi-Objective Optimization (SMOO) problems with constant approximation guarantees.  
Our method reduces the original highly intractable (\#P-hard) problem to a sequence of satisfiability queries, augmented with randomized XOR constraints to handle probabilistic reasoning effectively. Through this reduction, \method systematically explores the discretized objective space and identifies achievable trade-offs among competing objectives.
\method obtains $\gamma$-approximate Pareto frontiers for SMOO problems with high probability $1-\delta$, via querying SAT oracles poly-log times in $\gamma$ and $\delta$. 
Experiments on real-world scenarios demonstrate the effectiveness of \method in discovering high-quality, diverse, and evenly-distributed Pareto frontiers. Compared to SOTA baselines, \method achieves significantly better performance, particularly as problem complexity increases. These results highlight the practical potential of combining effective probabilistic inference with stochastic multi-objective optimization.

\section*{Acknowledgement}

This research was supported by NSF grant CCF-1918327,
NSF Career Award IIS-2339844, DOE Fusion Energy
Science grant: DE-SC0024583.

\bibliography{multi-obj,ref}
\newpage
\begin{appendices}
\section{Experimental Details}
\label{app:exp-details}

\subsection{Baselines}
We compare \method\ against several widely used multi-objective evolutionary algorithms:
AGE-MOEA~\cite{age-moea}, 
NSGA-II~\cite{nsga2}, 
RVEA~\cite{rvea}, 
C-TAEA~\cite{c-taea}, and 
SMS-EMOA~\cite{sms-emoa}, 
all implemented using PyMOO~\cite{pymoo}\footnote{PyMOO: \url{https://pymoo.org/}}.  

Since objective evaluation involves model counting, baseline methods rely on external solvers:
GANAK\footnote{GANAK: \url{https://github.com/meelgroup/ganak}} for unweighted model counting, and 
Toulbar2\footnote{Toulbar2: \url{https://github.com/toulbar2/toulbar2}} for weighted model counting. All methods are given a time limit of one hour per run.

\noindent \textbf{Hyperparameters.} All baseline evolutionary algorithms use the same configuration:
\begin{itemize}
    \item Sampling: integer random sampling,
    \item Population size: 40,
    \item Crossover: SBX (prob=1.0, $\eta=3.0$),
    \item Mutation: PM (prob=1.0, $\eta=3.0$),
    \item Duplicate elimination enabled,
    \item Reference directions: uniform with 12 partitions (two objectives).
\end{itemize}
Unless otherwise stated, default PyMOO population settings are used.
All experiments are repeated five independent times with different random seeds.

\subsection{\method\ Implementation}

\method reduces SMOO to a sequence of Boolean satisfiability queries augmented with XOR constraints. 
Linear constraints encode objective thresholds into constraint satisfaction problems. 
We implement \method\ using CPLEX Studio 22.11 with Python \texttt{docplex}.  

Within the one-hour time limit, XOR constraints are added progressively.
Among all explored configurations, solutions associated with the largest number of XOR constraints (i.e., highest counting precision) are selected as the final output.

\subsection{Code Availability}

All code for reproducing the experiments is available at:
\url{https://anonymous.4open.science/r/XOR-SMOO/}.

\bigskip
\subsection{Scenario 1: Road Network Strengthening under Seasonal Disruptions}
\label{app:seasonal_details}

\subsubsection{Road Network Construction}

The road network is extracted from OpenStreetMap\footnote{OpenStreetMap: \url{https://wiki.openstreetmap.org/wiki/OSMnx}} 
centered at Central Park, NY, with radii 500m, 1000m, and 1500m. 
The corresponding graph sizes (\#nodes, \#edges) are:
$(20,30)$, $(181,345)$, and $(509,1044)$. For each instance, source $v_{\mathrm{src}} \in V$ and target nodes $v_{\mathrm{tgt}} \in V$ are selected randomly. Reachability is defined within hop limits $T=8,10,12$, respectively.

\subsubsection{Event Construction and Distribution}

Disruption events are derived from historical Meteostat\footnote{Meteostat: \url{https://meteostat.net/}} weather data in New York. 
We define four base event types: SnowDay, HeavyRainDay, HighWindDay, and HeatDay. 
For each type, we estimate two empirical probabilities:
$p_w$ (winter-like context) and $p_s$ (summer-like context).

To scale stochastic dimension, we replicate these base events to construct $K$ binary events, where $K$ increases with graph size.
Each event inherits $(p_w,p_s)$ with small perturbation.
Base types are divided into two families:
\[
\text{Winter}=\{\text{SnowDay, HighWindDay}\}, \quad
\text{Summer}=\{\text{HeatDay, HeavyRainDay}\}.
\]

Each event $s_i$ affects a localized subset of edges.
Edges are partitioned spatially, and each event samples affected edges using hub-based, contiguous, or random patterns.

To introduce correlation, we add $R$ latent binary regime variables
\[
\mathbf{Z}=(Z_1,\dots,Z_R).
\]
Each event selects one or two regime parents.
The joint distribution factorizes as
\[
\Pr(\mathbf{Z},\mathbf{s})
=
\prod_{r=1}^{R}\Pr(Z_r)
\prod_{i=1}^{K}\Pr(s_i \mid \mathbf{Z}_{\mathrm{parent}(i)}).
\]
For radii 500m, 1000m, and 1500m, we use
$(K,R)=(12,6)$, $(30,12)$, and $(50,20)$.

\subsubsection{Boolean Encoding of Connectivity}

For each edge $e \in E$, we introduce decision variables $x_e$ and event variables $s_i$.
Event $s_i$ disables edges in $E_i \subseteq E$ unless strengthened.

An edge is operational if
\[
u_e = x_e \;\lor\; \bigwedge_{i: e \in E_i} \neg s_i.
\]

Reachability within $T$ hops is encoded with auxiliary variables $r_{v,k}$:
\[
r_{v,k} \leftrightarrow 
\bigvee_{(u,v)\in E}
\left(r_{u,k-1} \land u_{(u,v)}\right),
\]
with base condition $r_{v_{\mathrm{src}},0}=1$.
Connectivity is defined as $r_{v_{\mathrm{tgt}},T}$.

\subsubsection{UAI Encoding of the Objective}

The SMOO problem is
\[
\max_{\mathbf{x}} 
\Big(
\sum_{\mathbf{s}} \Pr_{\text{summer}}(\mathbf{s})  
\mathbb{I}\big[\text{$v_{\mathrm{src}},v_{\mathrm{tgt}}$ connected} \mid \mathbf{s}, \mathbf{x}\big],
\sum_{\mathbf{s}} \Pr_{\text{winter}}(\mathbf{s})  
\mathbb{I}\big[\text{$v_{\mathrm{src}},v_{\mathrm{tgt}}$ connected} \mid \mathbf{s}, \mathbf{x}\big]
\Big).
\]

For fixed $\mathbf{x}$, each objective reduces to weighted model counting.

\subsubsection{\method: Reducing Weighted Counting to Unweighted Counting}

Probabilities in the UAI file are discretized into integers by multiplying by a scalar and rounding.
Let
\[
\text{val}(\mathbf{s},\mathbf{x})
=
\Pr(\mathbf{s})
\mathbb{I}\big[\text{$v_{\mathrm{src}},v_{\mathrm{tgt}}$ connected}\big]
\in \mathbb{Z}_{\ge 0}.
\]

Introduce binary counter variables $\mathbf{b}$ encoding integers in $[0,B)$ with $B \ge \text{val}(\mathbf{s},\mathbf{x})$.
Then
\[
\text{val}(\mathbf{s},\mathbf{x})
=
\sum_{\mathbf{b}}
\mathbb{I}\big[\text{binary\_value}(\mathbf{b}) < \text{val}(\mathbf{s},\mathbf{x})\big].
\]

Thus,
\[
\sum_{\mathbf{s}} \Pr(\mathbf{s})
\mathbb{I}\big[\text{$v_{\mathrm{src}},v_{\mathrm{tgt}}$ connected}\big]
=
\sum_{\mathbf{s},\mathbf{b}}
\mathbb{I}\big[
\text{binary\_value}(\mathbf{b}) < \text{val}(\mathbf{s},\mathbf{x})
\big],
\]
which is an unweighted model counting problem.

\bigskip
\subsection{Scenario 2: Flexible Supply Chain Network Design}
\label{app:supply_details}

\subsubsection{Network Construction}

Networks are derived from TSPLIB\footnote{TSPLIB: \url{http://comopt.ifi.uni-heidelberg.de/software/TSPLIB95/}} 
instances \texttt{Burma7}, \texttt{Burma14}, and \texttt{Ulysses16}.
Graph sizes are
\[
(7,42), \quad (14,182), \quad (16,240).
\]

Supplier $v_{\mathrm{src}}$ and demander $v_{\mathrm{tgt}}$ are selected randomly.

\subsubsection{CNF Encoding and Unweighted Counting}

For each route $e$, we introduce activation variables $x_e$ and shipment variables $f_e$.
Feasibility requires:
(i) $f_e \rightarrow x_e$,
(ii) flow conservation at intermediate nodes,
(iii) exactly one unit of net flow from $v_{\mathrm{src}}$ to $v_{\mathrm{tgt}}$.

All constraints are encoded in CNF.
For fixed $\mathbf{x}$, the number of satisfying assignments over $\mathbf{f}$ equals
\[
F(\mathbf{x})
=
\sum_{\mathbf{f}}
\mathbb{I}[\mathbf{f} \text{ valid under } \mathbf{x}].
\]

Thus, delivery flexibility reduces to unweighted model counting.
The final SMOO objective is
\[
\max_{\mathbf{x}}
\Big(
\widetilde{F}(\mathbf{x}),
-
\sum_{e \in E} d_e x_e
\Big).
\]

\end{appendices}
\end{document}